\documentclass[twoside,11pt]{article}

\usepackage[abbrvbib, preprint]{jmlr2e}

\usepackage{amsmath}
\usepackage{amssymb}
\usepackage{amsfonts}
\usepackage{booktabs}
\usepackage{algorithm}
\usepackage{algpseudocode}
\usepackage{lastpage}

\let\proof\relax 
\usepackage{amsthm}
\renewenvironment{proof}{\par\noindent{\bf Proof\ }}{\hfill\BlackBox\\[2mm]}

\makeatletter
\let\c@theorem\relax
\let\c@lemma\relax
\let\c@proposition\relax
\let\c@remark\relax
\let\c@corollary\relax
\let\c@definition\relax
\let\c@conjecture\relax
\let\c@axiom\relax
\makeatother
\usepackage[capitalize,noabbrev]{cleveref}
\newtheoremstyle{jmlrplain}{}{}{\itshape}{}{\bfseries}{}{.5em}{}
\theoremstyle{jmlrplain}
\newtheorem{theorem}{Theorem}
\newtheorem{lemma}[theorem]{Lemma}
\newtheorem{proposition}[theorem]{Proposition}
\newtheorem{remark}[theorem]{Remark}
\newtheorem{corollary}[theorem]{Corollary}
\newtheorem{definition}[theorem]{Definition}
\newtheorem{conjecture}[theorem]{Conjecture}
\newtheorem{axiom}[theorem]{Axiom}
\newtheorem{qmcexample}[theorem]{Example}
\crefname{lemma}{Lemma}{Lemmas}
\crefname{theorem}{Theorem}{Theorems}
\crefname{proposition}{Proposition}{Propositions}
\crefname{corollary}{Corollary}{Corollaries}
\crefname{definition}{Definition}{Definitions}
\crefname{remark}{Remark}{Remarks}
\crefname{conjecture}{Conjecture}{Conjectures}
\crefname{axiom}{Axiom}{Axioms}
\crefname{qmcexample}{Example}{Examples}

\newcommand{\Z}{\mathbb{Z}}
\newcommand{\R}{\mathbb{R}}

\newcommand{\Fnx}{\mathbb{F}_n^\times}

\newcommand{\Res}{\operatorname{Res}}
\newcommand{\corr}{\operatorname{corr}}

\DeclareMathOperator{\Gal}{Gal}
\newcommand{\OK}{\mathcal O_K}

\jmlrheading{}{}{}{9/26}{xx}{xx}{Yueming LYU}

\ShortHeadings{Subgroup Rank-1 Lattices}{Yueming LYU}
\firstpageno{1}

\begin{document}

\title{ Subgroup Rank-1 Lattice  for
Practical High-dimensional Black-box Integral Approximation}

\author{\name Yueming LYU \email yueminglyu@gmail.com \\
       \addr Centre for Frontier AI Research (CFAR)
 \\
       \relax Agency for Science, Technology and Research (A*STAR) \\
       \relax 1 Fusionopolis Way \#16-16 Connexis  Singapore, \relax 138632}

\editor{TBD}

\maketitle

\begin{abstract}%
Estimating integrals of black-box, high-dimensional functions (expectations,
normalizing constants, kernel mean embeddings, or the softmax kernel inside
the self-attention of large language models) is a basic subroutine in
machine learning. Rank-1 lattice rules suit this setting because they query
the integrand only at a point set fixed in advance and need no gradients.
When the $n$ lattice points are also used as a design matrix
$X\in\R^{n\times d}$ for a feature map, however, applying an elementwise
nonlinearity $\Psi$ and then aggregating, $Y=\Psi(X)^\top v$, or expanding,
$u=\Psi(X)w$, costs $O(nd)$ time and $O(nd)$ memory for any standard
quasi-Monte Carlo point set. This becomes prohibitive when $n$ and $d$ are
both large. We study subgroup rank-1 lattices, whose Korobov power-form
generator $(1,t,\dots,t^{d-1})$ uses a scalar $t$ of fixed,
$n$-independent multiplicative order $m$. Splitting $\Fnx$ into the cosets
of $\langle t\rangle$ reduces both maps to short cyclic correlations
evaluated by FFT, so $Y$ and $u$ are computed exactly, for an arbitrary
$\Psi$, in $O(n\log m)$ time ($O(n\log d)$ when $m=\Theta(d)$) and $O(n)$
memory, without ever forming $X$. Fixing $m$ places the rule outside the
classical component-by-component averaging theory, so we prove convergence
directly: using resultants with the cyclotomic polynomial $\Phi_m$, we show
that for prime $m\ge d+1$ the squared worst-case error in the Korobov space
decays as $O(n^{-(\alpha-1)/(m-1)})$, and that this threshold is exact,
since for $m\le d$ the error has an $n$-independent floor. Using the
complete splitting of $n$ in $\mathbb Q(\zeta_m)$, we further show that
averaging over the $m-1$ admissible generators improves the constant by a
factor $\Theta(m-1)$. Empirically, the subgroup lattice is more accurate
than Gaussian random features, orthogonal random features, and scrambled
Sobol' and Halton points in $49$ of $54$ synthetic kernel-estimation
settings and in all $45$ softmax-attention settings on nine real embedding
datasets. At $d=2048$ and $n\approx4.1\times10^7$ it builds its sample set
in $2.3$\,ms, against $0.55$\,s for Gaussian sampling and minutes to hours
for the other baselines.
\end{abstract}

\begin{keywords}
  quasi-Monte Carlo, rank-1 lattice rules, elementwise nonlinear feature
  maps, Korobov space, worst-case error, fast Fourier transform, cyclotomic
  polynomials, numerical integration
\end{keywords}

\section{Introduction}
\label{sec:intro}

Many computations in machine learning reduce to estimating an integral of
a black-box function over a high-dimensional domain: an expectation under
a model, a normalizing constant, a kernel mean embedding, or a
shift-invariant kernel written as an expectation over random frequencies
\citep{rahimi2007random}. The last case now sits inside large language
models, where linear-time attention mechanisms replace the softmax kernel
$\exp(\mathbf x^\top\mathbf y)$ by a Monte Carlo average of random
features \citep{choromanski2021rethinking}. In this setting the integrand
can be queried only through its values: gradients are unavailable or
expensive, and no tractable density is exposed. Sampling-based methods
such as Langevin and stochastic-gradient Monte Carlo
\citep{roberts1996exponential,welling2011bayesian} rely on exactly this
missing gradient information, and gradient-free Metropolis--Hastings
samplers \citep{robert2004monte} return a correlated, sequentially
generated chain rather than a fixed design. Quasi-Monte Carlo (QMC)
quadrature needs neither. It evaluates the integrand on a deterministic
point set fixed in advance, so the evaluations are embarrassingly parallel
and the same points can be reused across many integrands. Among QMC
constructions, the rank-1 lattice rule is one of the simplest and most
widely used: its $n$ points $\{l\mathbf z/n\}$, $l=0,\dots,n-1$, are
determined by a single generating vector $\mathbf z\in(\Z/n\Z)^d$, and a
well-chosen $\mathbf z$ attains a squared worst-case error of order
$n^{-\alpha+\delta}$, for any $\delta>0$, in the Korobov space of
smoothness $\alpha$ \citep{sloan2002component,kuo2003component,
dick2013high}.

In machine learning, however, a QMC point set is rarely used only to
average function values. Its $n$ points form a design matrix
$X\in\R^{n\times d}$ that is passed through a scalar nonlinearity $\Psi$
applied entrywise and then combined linearly with data. A QMC feature map
for a shift-invariant kernel \citep{avron2016quasi} is the standard
example, and the positive random features used for softmax attention
\citep{choromanski2021rethinking} have the same form. Two operations
recur: aggregating $n$ per-point values into $d$ per-coordinate summaries,
$Y=\Psi(X)^\top v$, and expanding $d$ coefficients to all $n$ points,
$u=\Psi(X)w$. Computed from their definitions, both cost $O(nd)$ time,
$O(nd)$ evaluations of $\Psi$, and $O(nd)$ memory to hold $X$ or
$\Psi(X)$. This is the cost for every standard point set, whether a
Halton or Sobol' sequence \citep{halton1960efficiency,sobol1967distribution}
or a rank-1 lattice with a generating vector found by the
component-by-component (CBC) algorithm
\citep{sloan2002component,kuo2003component,nuyens2006fast}, because none
of these constructions relates its $d$ coordinates to one another in a way
the computation could exploit. The product $nd$ is the problem. Accurate
estimates in high dimension need both $n$ and $d$ large at once, and at
$d=2048$ with $n\approx4.1\times10^7$, the largest setting in our
experiments, $X$ has about $8.4\times10^{10}$ entries, roughly
$670$\,GB in double precision. Direct evaluation is then not slow but
impossible, and even generating the point set takes minutes to hours for
scrambled Sobol', Halton, or orthogonal random features
(\Cref{sec:exp-construction-time}).

This paper shows that a specific structured rank-1 lattice removes the
factor of $d$ from both time and memory. We use the classical Korobov
power-form generator $\mathbf z(t)=(1,t,\dots,t^{d-1})\bmod n$
\citep{korobov1959approximate}, with $n$ prime, and require the scalar
$t\in\Fnx$ to have a small multiplicative order $m\mid n-1$ that is fixed
independently of $n$. We call the result a \emph{subgroup rank-1
lattice}. The restriction organizes $\Fnx$ into $q=(n-1)/m$ cosets of the
order-$m$ subgroup $H=\langle t\rangle$. Multiplication by $t$ acts on
each coset as a cyclic shift, so every coordinate of every lattice point
is an entry of one of $q$ short sequences of length $m$, and the
elementwise transform on each coset becomes a single length-$m$ cyclic
correlation, which an FFT evaluates in $O(m\log m)$ time. Processing the
cosets one at a time gives $Y$ and $u$ exactly, for an arbitrary $\Psi$,
in $O(n\log m)$ time and $O(n)$ memory, which is $O(n\log d)$ time when
$m=\Theta(d)$. The matrix $X$ is never formed.

The restriction has a price that the classical theory does not cover. As
$n$ grows, only the $\varphi(m)$ elements of order $m$ are admissible
values of $t$, a pool that stays fixed while $\Fnx$ grows. CBC would not
select such a generator, and the averaging arguments behind CBC
error bounds, which average over a candidate pool that grows with $n$,
cannot be applied. Whether a subgroup rank-1 lattice converges at all is
therefore a genuine question. We answer it directly. Writing the aliasing
condition $\mathbf h\cdot\mathbf z(t)\equiv0\pmod n$ as the vanishing of
an integer polynomial at a root of the cyclotomic polynomial $\Phi_m$
modulo $n$, and bounding the corresponding resultant, we show that for
prime $m\ge d+1$ the squared worst-case error decays as
$O(n^{-(\alpha-1)/(m-1)})$ for every admissible $t$, and that for
$m\le d$ it is bounded below by a constant for all $n$. This exponent is
weaker than the one CBC certifies for a generic generating vector. What
the theorem establishes is that the structure needed for the fast
transform is compatible with convergence, and it identifies the exact
dimension threshold at which convergence is lost. Using the complete
splitting of $n$ in the cyclotomic field $\mathbb Q(\zeta_m)$, we then
show that averaging over the $m-1$ admissible generators improves the
constant by a factor $\Theta(m-1)$, which justifies choosing the best
member of this small pool.

Empirically, the subgroup rank-1 lattice is both faster and more accurate
than the feature-map constructions in common use. With the best of the
$m-1$ generators selected once on a held-out batch and cached, it gives
the lowest error of five constructions in $49$ of $54$ synthetic
kernel-estimation settings and in all $45$ softmax-attention settings on
nine real embedding datasets, and it builds its complete sample set at
$d=2048$ and $n\approx4.1\times10^7$ in $2.3$\,ms.

\subsection{Contributions}
\label{sec:contributions}

\begin{enumerate}
\item \textbf{A fast, memory-light elementwise lattice transform
(\Cref{sec:fast}).} For a power-form generator $t$ of order $m$ and an
arbitrary map $\Psi:\R\to\R$, accessed only through its values, we show
that $Y=\Psi(X)^\top v$ and $u=\Psi(X)w$ can be computed exactly in
$O(n\log m)$ arithmetic operations, $n$ evaluations of $\Psi$, and $O(n)$
memory (\Cref{prop:fast-fft,prop:fast-Xw}; \Cref{alg:fast-transform,alg:fast-Xw}). Direct evaluation on any standard point set costs $O(nd)$
time, $O(nd)$ evaluations of $\Psi$, and $O(nd)$ memory. The argument is a
reindexing of the lattice by the cosets of $H=\langle t\rangle$ and uses
no property of $\Psi$, such as continuity or $\Psi(0)=0$. The same coset
structure means the whole sample set $\Psi(X)$ is determined by $n-1$
numbers rather than $nd$. Numerically, the fast transform agrees with
direct evaluation to floating-point precision and is $13$ to $35$ times
faster at $n\approx1.5\times10^6$ (\Cref{sec:fast-experiments}).

\item \textbf{Convergence of the subgroup rank-1 lattice, with an exact
threshold (\Cref{sec:rate}).} For prime $m\ge d+1$ and every $t$ of order
$m$, the squared worst-case error of the full $n$-point rule in the
Korobov space satisfies $e^2(Q_n,\mathbf z(t))\le C(d,\alpha,m)\,
n^{-(\alpha-1)/(m-1)}$ for all sufficiently large primes
$n\equiv1\pmod m$ (\Cref{thm:rate}). The proof is deterministic: it bounds
the resultant of the aliasing polynomial with $\Phi_m$, and needs no
averaging over a growing pool of generators. The condition $m\ge d+1$
cannot be relaxed: for $m\le d$ the coefficient vector of $\Phi_m$ is
itself an aliasing frequency, so $e^2(Q_n,\mathbf z(t))\ge2$ for every $n$
(\Cref{prop:floor,cor:threshold}).

\item \textbf{A sharper constant from the small candidate pool
(\Cref{sec:averaging}).} A frequency that aliases for $N_h$ of the $m-1$
admissible generators forces $n^{N_h}$, not merely $n$, to divide its
resultant (\Cref{lem:multiplicity}). The proof uses the complete splitting
of $n$ in $\mathbb Q(\zeta_m)$ and unique factorization of ideals in
$\mathbb Z[\zeta_m]$. Combined with a log-weighted tail bound
(\Cref{lem:log-tail}), this shows that the average squared worst-case
error over the $m-1$ generators is a factor $\Theta(m-1)$ below the
uniform bound of \Cref{thm:rate}, at the same exponent, so at least one
generator attains this sharper bound (\Cref{thm:average}).

\item \textbf{Experiments on kernel estimation and softmax attention
(\Cref{sec:experiments}).} We compare the subgroup rank-1 lattice with
Gaussian random features, orthogonal random features
\citep{yu2016orthogonal}, and scrambled Sobol' and Halton points, all
passed through the same nonlinearity. On estimation of
$\exp(\mathbf x^\top\mathbf y)$ with $d$ from $8$ to $2048$, it is the
most accurate method in $49$ of $54$ settings, with an advantage over the
best baseline that is largest at $d=2048$, reaching $13.5\times$ at the smallest and $5.5\times$ at the largest $n$ tested. On
self-normalized softmax attention over nine real embedding datasets, it
is the most accurate method in all $45$ settings (mean advantage
$1.67\times$). At $d=2048$ and $n\approx4.1\times10^7$ it builds its
sample set in $2.3$\,ms, against $0.55$\,s for Gaussian sampling and
$197$\,s to $2.1$\,h for the other baselines. The cost of these gains is
a one-time search over the $m-1$ candidate generators, paid once per
$(d,m,n)$ and cached.
\end{enumerate}

\paragraph{Scope and relation to earlier work.} This paper studies the
\emph{full} $n$-point lattice rule built from a general  fixed-order power-form
generator. Our earlier work \citep{lyu2020subgroup} used the  subgroup structure with specific $m=2d$ to 
to build a small closed-form point set by investigating the pairwise toroidal distance.  The fast transform,
the convergence theory, and the averaging result for the full lattice
developed here are new and do not follow from that construction.
Throughout, $n$ is prime; the convergence results of \Cref{sec:rate,sec:averaging} additionally assume that $m$ is prime.

\paragraph{Organization.} \Cref{sec:related} reviews related work and
\Cref{sec:prelim} recalls the Korobov space, worst-case error, and rank-1
lattice rules. \Cref{sec:subgroup-generator} defines the subgroup rank-1
lattice and its coset structure. \Cref{sec:fast} gives the fast
elementwise transform, \Cref{sec:rate} proves the convergence rate and the
exact threshold, and \Cref{sec:averaging} sharpens the constant by
averaging over the candidate pool. \Cref{sec:experiments} reports the
experiments and \Cref{sec:conclusion} concludes.

\section{Related Work}
\label{sec:related}

\paragraph{Rank-1 lattice rules and their construction.} Rank-1 lattice
rules go back to \citet{korobov1959approximate}; the classical theory is
covered by \citet{niederreiter1992random}, \citet{sloanjoe1994lattice},
and \citet{leobacher2014introduction}, and \citet{dick2013high} survey the
modern theory in the shift-invariant Korobov setting used here. The
standard way to choose a generating vector is the component-by-component
(CBC) algorithm \citep{sloan2002component,kuo2003component}, which fixes
one coordinate at a time by minimizing the worst-case error over all of
$\Fnx$. \citet{nuyens2006fast,nuyenscools2006fastrkhs} reduced its cost to
$O(dn\log n)$, and \citet{coolskuonuyens2006constructing} extended it to
embedded sequences of lattices; the \textsc{LatticeBuilder} software of
\citet{lecuyer2016latticebuilder} implements these searches, and
\citet{lecuyerlemieux2002recent} survey randomized variants. The error
bounds for CBC come from averaging the worst-case error over a pool of
candidates whose size grows with $n$. Which weighted function spaces make
this error independent of dimension is the subject of tractability theory
\citep{sloanwozniakowski1998when,sloan2001tractability,
hickernell2000integration,novak2008tractability}, with connections to
generalized discrepancy \citep{hickernell1998generalized} and ANOVA-type
decompositions \citep{kuo2010decompositions,kuo2011anziam}. Our
construction departs from this line in two ways. It restricts the
generating vector to a pool of fixed size $\varphi(m)$, so the averaging
argument is unavailable and convergence has to be proved by other means
(\Cref{sec:rate}). And it is chosen for the structure it gives the point
set rather than for the smallest worst-case error; its proven exponent is
weaker than the one CBC certifies.

\paragraph{Other low-discrepancy constructions.} The other main family of
QMC point sets consists of digital nets and sequences: Halton
\citep{halton1960efficiency}, Sobol'
\citep{sobol1967distribution,joekuo2003remark,joekuo2008constructing}, and
Faure \citep{faure1982discrepance} sequences, and the general
$(t,m,s)$-net framework of \citet{niederreiter1987point}. Their analysis
rests on classical equidistribution and discrepancy theory
\citep{weyl1916gleichverteilung,roth1954irregularities,
matousek1998l2discrepancy,matousek1999geometric}; see
\citet{niederreiterxing2001rational} for algebraic constructions and
\citet{dickpillichshammer2010digital} for a treatment that also covers
polynomial lattice rules \citep{dickkuopillichshammersloan2005construction}.
Randomized QMC, in particular Owen's scrambling
\citep{owen1995randomly,owen1997scrambled} and its higher-order
extensions \citep{dick2008walsh,dick2011higherorder}, adds an unbiased
error estimate at little cost in rate. None of these constructions relates
the coordinates of a point to one another, so pushing an elementwise
nonlinearity through $n$ points in $d$ dimensions costs $O(nd)$. Scrambled
Sobol' and Halton points are two of the baselines in
\Cref{sec:experiments}.

\paragraph{FFTs and rank-1 lattices.} Fast CBC \citep{nuyens2006fast}
already exploits multiplicative structure: reindexing $\Fnx$ by a
primitive root turns the CBC search over all $n-1$ candidates into a
circulant matrix--vector product evaluated by FFT. Our coset
decomposition is related in spirit but solves a different problem. Fast
CBC accelerates the \emph{search} for a generic generating vector; we fix
a structured generator and accelerate the \emph{use} of the resulting
point set. Rank-1 lattices are also used to sample and reconstruct
multivariate trigonometric polynomials, where a single one-dimensional
FFT of length $n$ recovers the Fourier coefficients on a frequency set
\citep{kammerer2014high,kammerer2015approximation,kammerer2015korobov,
kammerer2021highdim}, and fast CBC variants construct lattices for
function approximation rather than integration
\citep{kuo2009lattice,cools2021fast}. Those methods apply the FFT along
the point index $l$ to recover a function from its samples. Our transform
applies length-$m$ FFTs along the orbits of $\langle t\rangle$ in order to
apply an arbitrary nonlinearity $\Psi$ to the point set itself, a task
those methods do not address.

\paragraph{Random features and fast structured feature maps.} Random
Fourier features \citep{rahimi2007random} approximate a shift-invariant
kernel by a Monte Carlo average over random frequencies, and
\citet{avron2016quasi} replaced the random frequencies by QMC points to reduce the approximation error;
\citet{yang2014random} did the same for semigroup kernels. Random features
are now also used to linearize softmax attention
\citep{vaswani2017attention,choromanski2021rethinking}. Orthogonal random
features \citep{yu2016orthogonal} reduce variance by orthogonalizing
Gaussian directions, and are another baseline in \Cref{sec:experiments}.
Fast structured constructions reduce the $O(nd)$ cost of multiplying by a
dense random matrix: Fastfood \citep{le2013fastfood} and structured
orthogonal random features \citep{yu2016orthogonal} replace it by products
of Hadamard and random diagonal matrices, computed in $O(n\log d)$ time.
The subgroup rank-1 lattice reaches the same $O(n\log d)$ cost by a
different route. Its speed-up comes from the algebra of a deterministic
lattice rather than from a randomized matrix factorization, it applies to
an arbitrary nonlinearity $\Psi$ acting on the points, and the underlying
lattice rule comes with the worst-case error guarantees of
\Cref{sec:rate,sec:averaging}.

\paragraph{Algebraic number theory in lattice error analysis.} Our
convergence proof relates the aliasing condition to resultants of an
integer polynomial with the cyclotomic polynomial $\Phi_m$
\citep{apostol1970resultants,irelandrosen1990classical}, and the averaging
result uses the complete splitting of a prime $n\equiv1\pmod m$ in
$\mathbb Q(\zeta_m)$ and unique factorization of ideals in its ring of
integers \citep{washington1997introduction}. To our knowledge, neither
tool has previously been used to prove a convergence rate for a QMC rule.
Our earlier work \citep{lyu2020subgroup} used the  subgroup structure with specific $m=2d$ to 
to build a small closed-form point set by investigating the pairwise toroidal distance; the results of the
present paper concern the full $n$-point lattice with general $m$ and fast computation and convergence rate properties.

\paragraph{QMC and Markov chain Monte Carlo.} MCMC methods, including
Langevin and stochastic-gradient samplers
\citep{roberts1996exponential,welling2011bayesian} and gradient-free
Metropolis--Hastings \citep{robert2004monte}, are the default for
integrating against an unnormalized density, but they return a correlated
chain rather than a fixed, reusable design, and the scalable variants
need gradients. The two approaches have been combined:
low-discrepancy sequences can drive a Metropolis chain in place of i.i.d.\
uniforms \citep{owen2005quasi}, with discrepancy bounds for the resulting
estimators \citep{dick2016discrepancy}, and QMC has replaced random
sampling inside sequential Monte Carlo \citep{gerber2015sequential} and
approximate Bayesian computation \citep{buchholz2019improving}. Our
contribution is complementary: it makes a fixed lattice design cheaper to
use at large $n$ and $d$, and could be used inside such hybrid schemes.

\paragraph{High-dimensional applications of QMC.} QMC was first shown to
beat Monte Carlo on high-dimensional problems in computational finance
\citep{paskov1995faster}, an effect explained by the low effective
dimension of the integrands \citep{caflisch1997valuation}; see
\citet{lecuyer2004quasi} for a survey. Rank-1 lattice rules are now
standard in uncertainty quantification for PDEs with random coefficients,
where $d$ can reach the thousands
\citep{graham2011qmcpde,kuo2012qmcfem,dick2014nonperiodic,kuo2016survey},
including multilevel variants \citep{kuo2017multilevel} and eigenvalue
problems \citep{gilbert2024eigenvalue}; the MCQMC proceedings
\citep[e.g.][]{cools2016monte} track the field. These works control the
cost of large $d$ through weighted spaces, dimension truncation, and
multilevel variance reduction. The fast transform developed here attacks
a different cost, the $O(nd)$ cost of evaluating an elementwise map on the
point set, and can be combined with those techniques.

\section{Preliminaries}
\label{sec:prelim}

Throughout, $n$ is prime and $\mathbb F_n = \mathbb Z/n\mathbb Z$, so that
$\mathbb F_n^\times$ is cyclic of order $n-1$. We fix an integer $d \ge 1$
(the dimension) and a real smoothness parameter $\alpha > 1$, and write
$e(x) := e^{2\pi i x}$.

\subsection{The Korobov space and worst-case error}

\begin{definition}[Korobov space]
\label{def:korobov}
For $\mathbf k \in \mathbb Z^d$ let $\rho(\mathbf k) :=
\prod_{j=1}^d \max(1,|k_j|)^\alpha$, so that $\rho(\mathbf 0) = 1$. Define
\begin{equation}
    \mathcal{H}_d^\alpha := \Big\{ f \in L^2([0,1)^d) : \|f\|^2 :=
\sum_{\mathbf k \in \mathbb Z^d} |\hat f(\mathbf k)|^2 \rho(\mathbf k)
< \infty \Big\}, \qquad
\hat f(\mathbf k) := \int_{[0,1)^d} f(\mathbf x)\, e(-\mathbf k\cdot
\mathbf x)\, d\mathbf x.
\end{equation}
Since $\alpha > 1$, $\sum_{k\in\mathbb Z} \max(1,|k|)^{-\alpha} = 1 +
2\zeta(\alpha) < \infty$, so $\mathcal{H}_d^\alpha$ is a reproducing-kernel Hilbert
space with kernel
\begin{equation}
    K(\mathbf x, \mathbf y) = \sum_{\mathbf k \in \mathbb Z^d} \rho(\mathbf
k)^{-1}\, e\big(\mathbf k \cdot (\mathbf x - \mathbf y)\big).
\end{equation}
\end{definition}

For a node set $X = \{\mathbf x_1, \dots, \mathbf x_N\} \subset [0,1)^d$,
the associated equal-weight quadrature rule is $Q_X(f) := \frac1N
\sum_{l=1}^N f(\mathbf x_l)$, approximating $I(f) := \hat f(\mathbf 0) =
\int_{[0,1)^d} f(\mathbf x) d\mathbf x$. Its worst-case error over the unit ball of $\mathcal{H}_d^\alpha$
is
\begin{equation}
   e(Q_X, \mathcal{H}_d^\alpha) := \sup_{\|f\| \le 1} |Q_X(f) - I(f)|. 
\end{equation}

\begin{lemma}[Worst-case-error representation]
\label{lem:wce}
For any node set $X = \{\mathbf x_1, \dots, \mathbf x_N\}$,
\begin{equation}
  e^2(Q_X, \mathcal{H}_d^\alpha) = \sum_{\mathbf k \ne \mathbf 0} \rho(\mathbf k)^{-1}
\Big| \frac1N \sum_{l=1}^N e(\mathbf k \cdot \mathbf x_l) \Big|^2 .  
\end{equation}
\end{lemma}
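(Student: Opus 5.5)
The plan is the standard reproducing-kernel argument: write the quadrature error as an inner product with the representer of the error functional, and use Parseval in the Fourier basis. First, for $f\in\mathcal H_d^\alpha$ the Fourier series converges absolutely and uniformly. By Cauchy--Schwarz,
\[
\sum_{\mathbf k}|\hat f(\mathbf k)|\le \|f\|\Big(\sum_{\mathbf k}\rho(\mathbf k)^{-1}\Big)^{1/2}=\|f\|\,(1+2\zeta(\alpha))^{d/2}<\infty .
\]
Hence $f$ has a continuous representative with $f(\mathbf x)=\sum_{\mathbf k}\hat f(\mathbf k)e(\mathbf k\cdot\mathbf x)$ pointwise, and point evaluation is meaningful. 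This is exactly why $\alpha>1$ is needed.

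Next, insert the expansion into the error and exchange the finite sum over $l$ with the absolutely convergent sum over $\mathbf k$. Since $I(f)=\hat f(\mathbf 0)$ and the $\mathbf k=\mathbf 0$ term of $Q_X(f)$ is also $\hat f(\mathbf 0)$, we get
\[
Q_X(f)-I(f)=\sum_{\mathbf k\neq\mathbf 0}\hat f(\mathbf k)\,S(\mathbf k),\qquad S(\mathbf k):=\frac1N\sum_{l=1}^N e(\mathbf k\cdot\mathbf x_l).
\]
Write this as $\sum_{\mathbf k\ne\mathbf 0}\big(\hat f(\mathbf k)\rho(\mathbf k)^{1/2}\big)\big(S(\mathbf k)\rho(\mathbf k)^{-1/2}\big)$. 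Cauchy--Schwarz in $\ell^2(\mathbb Z^d\setminus\{\mathbf 0\})$ then gives
\[
|Q_X(f)-I(f)|\le\|f\|\Big(\sum_{\mathbf k\neq\mathbf 0}\rho(\mathbf k)^{-1}|S(\mathbf k)|^2\Big)^{1/2}.
\]
This proves the upper bound on $e^2$. The right-hand sum is finite because $|S(\mathbf k)|\le1$.

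For the matching lower bound, I exhibit an extremal function with $\hat f(\mathbf 0)=0$ and $\hat f(\mathbf k)=c\,\rho(\mathbf k)^{-1}\overline{S(\mathbf k)}$ for $\mathbf k\ne\mathbf 0$. Its norm squared is $|c|^2\sum_{\mathbf k\ne\mathbf 0}\rho(\mathbf k)^{-1}|S(\mathbf k)|^2=:|c|^2E$, so this $f$ belongs to the space. It attains equality with $Q_X(f)-I(f)=cE$. Choosing $c=E^{-1/2}$ when $E>0$ gives $\|f\|=1$ and error $E^{1/2}$; the case $E=0$ is trivial. Equivalently, $f$ is $E^{-1/2}$ times the representer $\frac1N\sum_l K(\cdot,\mathbf x_l)-1$ (conjugated appropriately), and one could instead expand $e^2=\frac1{N^2}\sum_{l,l'}K(\mathbf x_l,\mathbf x_{l'})-1$ using $\int K(\mathbf x,\mathbf y)\,d\mathbf y=1$.

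The only delicate point is the justification in the first step: interchanging sums, and ensuring pointwise evaluation is well defined for an $L^2$ class. Both are handled by the absolute-convergence bound above. Everything else is routine.
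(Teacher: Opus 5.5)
Your proof is correct and is essentially the paper's argument written in Fourier coordinates. The paper obtains the representer $\gamma=\frac1N\sum_l K(\cdot,\mathbf x_l)-1$ from the reproducing property, gets $e=\|\gamma\|$ by Cauchy--Schwarz, and applies Parseval; your extremal function is exactly $\gamma/\|\gamma\|$, your weighted $\ell^2$ Cauchy--Schwarz is the same inequality, and your absolute-convergence step supplies the justification of point evaluation that the paper takes for granted from the RKHS assertion in \Cref{def:korobov}.
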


\begin{proof}
By the reproducing property, $f(\mathbf y) = \langle f, K(\cdot,\mathbf
y)\rangle$, so $Q_X(f) = \langle f, \frac1N \sum_l K(\cdot, \mathbf
x_l)\rangle$. Since $\int_{[0,1)^d} e(\mathbf k\cdot \mathbf x)\, d\mathbf
x = \mathbf 1[\mathbf k = \mathbf 0]$, integrating $K(\mathbf x,\mathbf y)$
over $\mathbf x$ kills every Fourier term except $\mathbf k=\mathbf 0$, so
$\int_{[0,1)^d} K(\mathbf x,\mathbf y)\, d\mathbf x \equiv 1$. 

Let $1(\cdot)$ denotes the constant function equal to 1 everywhere. Then, its Fourier coefficients are $\hat{1}(\boldsymbol{k})= \int_{[0,1)^d} e(-\mathbf k\cdot \mathbf x)\, d\mathbf
x = \mathbf 1[\mathbf k = \mathbf 0]$.
Then, we have that
\begin{align}
    \langle f,1(\cdot) \rangle= \sum_{\mathbf k \in \mathbb Z^d} \hat f(\mathbf k) \hat{1}(\boldsymbol{k}) \rho(\mathbf k) = \hat f(\mathbf 0)\rho(\mathbf 0) = \hat f(\mathbf 0) = I(f) \nonumber
\end{align}

So the constant function $1(\cdot)$ represents $I$ under the inner product of $\mathcal{H}_d^\alpha$. Hence $Q_X(f) - I(f) = \langle f, \frac1N \sum_l K(\cdot, \mathbf
x_l) - 1(\cdot) \rangle =   \langle f, \gamma(\cdot)\rangle$. Then
\[
\gamma(\mathbf y) = \frac1N \sum_l K(\mathbf x_l,\mathbf y) - 1 =
\sum_{\mathbf k \ne \mathbf 0} \rho(\mathbf k)^{-1} e(-\mathbf k \cdot
\mathbf y) \Big( \frac1N \sum_l e(\mathbf k\cdot \mathbf x_l) \Big),
\]
the $\mathbf k = \mathbf 0$ terms of the two pieces having cancelled
exactly. By Cauchy--Schwarz, $e(Q_X,\mathcal{H}_d^\alpha) = \sup_{\|f\|\le 1}
|\langle f, \gamma(\cdot)\rangle| = \|\gamma(\cdot)\|$, and Parseval's theorem applied to the last display
gives the claim.
\end{proof}

\subsection{Integration lattices and generator matrices}
\label{sec:lattice-prelim}

We first recall the classical notion of a lattice rule
\citep{niederreiter1992random,sloanjoe1994lattice,dick2013high}, of which
the rank-1 construction used throughout this paper is the simplest
nontrivial special case.

\begin{definition}[Lattice, generator matrix]
\label{def:lattice}
A \emph{lattice} $\Lambda \subset \R^d$ is a set of the form $\Lambda = B
\Z^d := \{B\mathbf k : \mathbf k \in \Z^d\}$ for some nonsingular matrix
$B \in \R^{d\times d}$, called a \emph{generator matrix} of $\Lambda$. Two
matrices generate the same lattice, $B\Z^d = B'\Z^d$, if and only if $B' =
BU$ for some $U$ in $GL_d(\Z)$ (an integer matrix with $\det U = \pm1$);
in particular $|\det B|$ depends only on $\Lambda$, not on the choice of
$B$. An \emph{integration lattice} is a lattice with $\Z^d \subseteq
\Lambda$; equivalently, some (any) generator matrix $B$ can be chosen
with $B^{-1}$ an integer matrix, and then $N := |\det B|^{-1} = [\Lambda :
\Z^d]$ (the index of $\Z^d$ in $\Lambda$, i.e.\ the order of the finite
quotient group $\Lambda/\Z^d$) is a positive integer.
\end{definition}

Because $\Z^d \subseteq \Lambda$, translating any point of $\Lambda$ by an
integer vector remains in $\Lambda$, so $\Lambda \cap [0,1)^d$,  the
lattice points that lie \emph{inside the unit cube},  is a set of exactly
$N$ representatives, one for each coset of $\Z^d$ in $\Lambda$ (a
fundamental domain for $\Lambda/\Z^d$).

\begin{definition}[Lattice rule]
\label{def:lattice-rule}
For an integration lattice $\Lambda$ of index $N$, the associated
(equal-weight) \emph{lattice rule} is $Q_\Lambda(f) := \frac1N
\sum_{\mathbf x \in \Lambda \cap [0,1)^d} f(\mathbf x)$.
\end{definition}

Three properties of this construction are used repeatedly below.

\begin{enumerate}
\item[(i)] \textbf{Periodicity and well-posedness.} $Q_\Lambda$ depends
only on $\Lambda$, not on the particular choice of the $N$ coset
representatives used to enumerate $\Lambda \cap [0,1)^d$: for a
$\Z^d$-periodic function (such as any $f \in \mathcal{H}_d^\alpha$, via its Fourier
series), summing over any fundamental domain of $\Lambda/\Z^d$ gives the
same value.

\item[(ii)] \textbf{Duality and the aliasing set.} The \emph{dual lattice}
$\Lambda^* := \{\mathbf k \in \R^d : \mathbf k \cdot \mathbf x \in \Z\
\forall\, \mathbf x \in \Lambda\}$ satisfies $\Lambda^* \subseteq \Z^d$
whenever $\Z^d \subseteq \Lambda$ (duality reverses inclusions, and
$(\Z^d)^* = \Z^d$). Equivalently, $\Lambda^*$ consists of exactly the
frequencies $\mathbf k \in \Z^d$ for which the character
$e(\mathbf k \cdot \mathbf x)$ is identically $1$ on $\Lambda$. This is
precisely the set that controls the Fourier/worst-case-error analysis of
\Cref{lem:wce}: applying that lemma with $N = |\Lambda \cap [0,1)^d|$ and
the geometric-sum identity $\frac1N \sum_{\mathbf x \in \Lambda\cap
[0,1)^d} e(\mathbf k\cdot \mathbf x) = \mathbf 1[\mathbf k \in \Lambda^*]$
shows that only $\Lambda^* \setminus \{\mathbf 0\}$ contributes to
$e^2(Q_\Lambda, \mathcal{H}_d^\alpha)$:
\begin{equation}
e^2(Q_\Lambda, \mathcal{H}_d^\alpha) = \sum_{\mathbf k \in \Lambda^* \setminus \{\mathbf
0\}} \rho(\mathbf k)^{-1}.
\label{eq:aliasing-general}
\end{equation}

\item[(iii)] \textbf{Rank.} The finite abelian group $\Lambda/\Z^d$ is,
by the structure theorem, a product of $r$ nontrivial cyclic groups
$\Z/n_1 \times \cdots \times \Z/n_r$ ($n_1,\dots,n_r \ge 2$, $N =
n_1\cdots n_r$) for a unique $r \ge 0$ (the elementary-divisor/Smith
normal form of $B^{-1}$); $r$ is the \emph{rank} of the lattice rule, and
each cyclic factor is generated by the class of some integer vector
$\mathbf z_i/n_i \bmod \Z^d$. Rank $0$ is $\Lambda = \Z^d$ itself ($N=1$,
the trivial rule); this paper is concerned exclusively with the simplest
nontrivial case, $r = 1$.
\end{enumerate}

\subsection{Rank-1 lattice rules and the power-form generator}
\label{sec:rank1}

A \emph{rank-1} integration lattice is generated by a single vector: for
$\mathbf z \in \Z^d$ and $n \ge 1$,
\begin{equation}
\Lambda(\mathbf z, n) := \tfrac1n \Z\mathbf z + \Z^d = \Big\{ \tfrac{l}{n}
\mathbf z + \mathbf k : l \in \Z,\ \mathbf k \in \Z^d \Big\}.
\label{eq:rank1-lattice}
\end{equation}
Only $\mathbf z \bmod n$ matters, so we may (and do) take $\mathbf z \in
(\Z/n\Z)^d$. The $N = n$ points of $\Lambda(\mathbf z,n) \cap [0,1)^d$
are $\mathbf x_l := \{l\mathbf z/n\}$ for $l = 0,\dots,n-1$
(coordinatewise fractional part), which recovers the familiar rank-1
lattice rule:

\begin{definition}[Full rank-1 lattice rule]
For a generating vector $\mathbf z \in (\mathbb Z/n\mathbb Z)^d$, the
\emph{full} rank-1 lattice rule $Q_n$ uses all $n$ points $\mathbf x_l :=
\{l\mathbf z/n\}$, $l = 0,\dots,n-1$, with $Q_n(f) := \frac1n
\sum_{l=0}^{n-1} f(\mathbf x_l)$.
\end{definition}

Here \eqref{eq:aliasing-general} specializes, via
$\Lambda(\mathbf z,n)^* = \{\mathbf t \in \Z^d : \mathbf t \cdot \mathbf z
\equiv 0 \!\!\pmod n\}$, to the aliasing identity
\begin{equation}
e^2(Q_n, \mathcal{H}_d^\alpha) = \sum_{\substack{\mathbf t \ne \mathbf 0 \\
\mathbf t \cdot \mathbf z \equiv 0 \,(n)}} \rho(\mathbf t)^{-1},
\label{eq:aliasing}
\end{equation}
which is the basis for the convergence analysis of \Cref{sec:rate}.

\paragraph{Generator matrix for a rank-1 lattice.} A rank-1 lattice rule
is genuinely rank $1$, rather than a rank-$0$ rule in disguise,  exactly
when $\gcd(z_1,n) = 1$ for at least one coordinate $z_i$; relabelling
coordinates and replacing $\mathbf z$ by $z_i^{-1}\mathbf z \bmod n$ (a
mere re-indexing $l \mapsto l\, z_i \bmod n$ of the same point set) lets us
take $z_1 = 1$ without loss of generality, exactly the normalization
used by the power-form generator below. With $z_1 = 1$, an explicit
generator matrix realizing \Cref{def:lattice} for
$\Lambda(\mathbf z, n)$ is the lower unitriangular matrix
\begin{equation}
B(\mathbf z, n) = \frac1n
\begin{pmatrix}
1 & 0 & \cdots & 0 \\
z_2 & n & \cdots & 0 \\
\vdots & & \ddots & \\
z_d & 0 & \cdots & n
\end{pmatrix}, \qquad \det B(\mathbf z,n) = \frac1n,
\label{eq:rank1-generator-matrix}
\end{equation}
so that $B(\mathbf z,n)\Z^d = \frac1n \Z\mathbf z + \Z^d =
\Lambda(\mathbf z,n)$, confirming index $N=n$ directly from
\Cref{def:lattice} (this is the Hermite normal form of $B^{-1}$, unique up
to the right-$GL_d(\Z)$ freedom already noted). Every other valid
generator matrix for $\Lambda(\mathbf z,n)$ is $B(\mathbf z,n)U$ for some
$U \in GL_d(\Z)$.

Throughout, we work with the classical Korobov \emph{power-form}
generator, in which the $d$ coordinates of $\mathbf z$ are consecutive
powers of a single scalar,  a specific, highly structured construction
rule for the generating vector $\mathbf z$ (equivalently, via
\eqref{eq:rank1-generator-matrix}, for the generator matrix $B$ itself),
rather than an arbitrary rank-1 vector as in \eqref{eq:rank1-lattice}.

\begin{definition}[Power-form generator]
For $t \in \mathbb F_n^\times$ and $d \ge 1$, write $\mathbf z(t) :=
(1, t, t^2, \dots, t^{d-1}) \bmod n \in (\mathbb Z/n\mathbb Z)^d$.
\end{definition}

\section{Subgroup rank-1 lattice}
\label{sec:subgroup-generator}

The component-by-component algorithm chooses a generating vector,  power
form or otherwise,  to directly optimize $e(Q_n, \mathcal{H}_d^\alpha)$, and the
resulting optimal $t$ (or optimal $\mathbf z$) is generic: it depends on
$n$ and $d$ in a way that carries no simpler algebraic description. This
section instead studies the family obtained by fixing the
\emph{multiplicative order} of $t$, independently of $n$,  namely,  
\emph{subgroup rank-1 lattice} construction that gives this paper its
name,  and works out, in some detail, the group-theoretic structure that
this restriction imposes on the generating vector and on $\mathbb
F_n^\times$ itself. That structure is exactly what \Cref{sec:fast} and
\Cref{sec:rate} exploit, respectively, for fast computation and for convergence.

\subsection{Fixed-order power-form generators}
\label{sec:subgroup-def}

\begin{definition}[Fixed-order power-form generator]
\label{def:fixed-order}
Fix a divisor $m$ of $n-1$ with $m \ge d+1$. Since $\mathbb F_n^\times$ is
cyclic, it has a unique subgroup $H \le \mathbb F_n^\times$ of order $m$,
and $H = \langle t\rangle$ for any $t \in \mathbb F_n^\times$ of
multiplicative order exactly $m$. We study the power-form generator
$\mathbf z(t)$ for such a $t$, with $m$ held fixed as $n \to \infty$ (so
that, as $n$ grows over primes with $m \mid n-1$, only the $\varphi(m)$
generators of $H$ are ever used). We impose $m \ge d+1$ throughout the
construction,  rather than the weaker $d \le m$ that suffices merely for
the power-form generator to be well posed (\Cref{sec:fast-setting}), 
because $m \ge d+1$ is exactly the condition under which \Cref{thm:rate}
proves the resulting lattice rule converges, and \Cref{cor:threshold}
shows this threshold is exact: convergence provably fails once $m \le d$.
\end{definition}

Writing $g$ for a fixed primitive root modulo the prime $n$, the elements
of $H$ of order exactly $m$ are the $g^{k(n-1)/m}$ with $\gcd(k,m) = 1$;
taking $t = g^{(n-1)/m}$ (the case $k=1$, without loss of generality up to
relabelling the generator) gives the closed-form generating vector
\begin{equation}
\mathbf z(t) = \Big( g^{0},\, g^{(n-1)/m},\, g^{2(n-1)/m}, \dots,
g^{(d-1)(n-1)/m} \Big) \bmod n, \qquad m \mid (n-1),\ \ m \ge d+1,
\label{DualGroup}
\end{equation}
 which yields a full rank-1 lattice rule via
\eqref{eq:rank1-lattice}.

Two elementary facts about cyclic groups make \Cref{def:fixed-order} well
posed and worth isolating explicitly, since both are used repeatedly below
without further comment. First, \emph{existence and uniqueness of $H$}:
because $\mathbb F_n^\times$ is cyclic of order $n-1$, its subgroups are in
order-preserving bijection with the divisors of $n-1$, one subgroup per
divisor, so ``the'' subgroup of order $m$ is unambiguous exactly when $m
\mid n-1$,  the divisibility hypothesis in \Cref{def:fixed-order} is
therefore not a simplifying assumption but a necessary condition for $H$
to exist at all. Second, \emph{the true size of the generator pool}: an
element $t \in H$ generates $H$ (rather than a proper subgroup of it) iff
$t$ has order exactly $m$, and a cyclic group of order $m$ has exactly
$\varphi(m)$ such elements ($\varphi$ the Euler totient), so $\varphi(m)$, not $m$, and certainly not $n$,  is the true size of the candidate pool
underlying \Cref{def:fixed-order}. This is a severe restriction: as $n \to
\infty$ along primes with $m \mid n-1$, the pool of admissible $t$ does not
grow at all, in sharp contrast to a CBC-style search, which optimizes over
a pool of candidate generators that grows with $n$.

\begin{qmcexample}[A subgroup of prime order $5$]
\label{ex:subgroup}
Take $n = 11$, so $\mathbb F_{11}^\times$ is cyclic of order $10$ and
$m = 5$ is a valid choice since $5 \mid 10$; note $m$ is itself prime.
Fixing the primitive root $g = 2$ modulo $11$ (indeed $2^{10} \equiv 1$
and no smaller power does), direct computation gives $t = g^{(n-1)/m} =
2^2 = 4$, which has order $5$: $4^1 = 4$, $4^2 = 16 \equiv 5$, $4^3
\equiv 9$, $4^4 \equiv 3$, $4^5 \equiv 1 \pmod{11}$. Hence $H = \langle 4
\rangle = \{1,4,5,9,3\}$ is the unique subgroup of order $5$. Because $m$
is prime, $H$ has no nontrivial proper subgroups (Lagrange's theorem
leaves only the divisors $1$ and $m$ of $m$ itself), so \emph{every} nontrivial element of $H$ already
has order exactly $m$ and so generates all of $H$: the full generator
pool is $\varphi(5) = 4$ elements, namely all of $H \setminus \{1\} =
\{3,4,5,9\}$, with none wasted on smaller subgroups. Taking $d = 3$, so
that $m = 5 \ge d+1 = 4$,  comfortably inside the construction's
requirement (\Cref{def:fixed-order}). Two of the four admissible fixed-order power-form
generators for this $(n,m,d)$ are
\[
\mathbf z(4) = (1,4,5), \qquad \mathbf z(3) = (1,3,9) \pmod{11},
\]
each giving a full $11$-point rank-1 lattice rule via
\eqref{eq:rank1-lattice};  the two vectors already differ in
their second coordinate, so, despite generating the same subgroup $H$,  $t=4$ and $t=3$ do not produce the same lattice, only two (of four)
lattices built from the same underlying group of prime order.
\end{qmcexample}

\subsection{The coset decomposition of $\mathbb F_n^\times$ induced by
$H$}
\label{sec:subgroup-cosets}

The single group-theoretic fact underlying both the fast algorithm of
\Cref{sec:fast} and the convergence analysis of \Cref{sec:rate} is that
multiplication by $t$ organizes all of $\mathbb F_n^\times$, not just $H$
itself, into $H$-orbits of a very simple, explicit shape.

\begin{lemma}[Coset decomposition and cyclic shift structure]
\label{lem:coset-decomp}
Let $t \in \mathbb F_n^\times$ have order exactly $m > 1$, let $H =
\langle t \rangle$, and let $q := (n-1)/m$. Then:
\begin{enumerate}
\item[(i)] $\mathbb F_n^\times$ partitions into exactly $q$ cosets of $H$,
each of size $m$;
\item[(ii)] the map $\pi : r \mapsto rt \bmod n$ permutes $\mathbb
F_n^\times$, has no fixed points, and its orbits are exactly these $q$
cosets; restricted to a single coset $C = rH = \{r, rt, \dots, rt^{m-1}\}$,
$\pi$ acts as the cyclic shift $rt^i \mapsto rt^{i+1 \bmod m}$.
\end{enumerate}
\end{lemma}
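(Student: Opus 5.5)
The plan is to argue directly from Lagrange's theorem and the group structure of $\Fnx$, treating (i) and (ii) in turn.

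For (i), I would use that $H=\langle t\rangle$ has exactly $m$ elements, since $t$ has order exactly $m$, so $1,t,\dots,t^{m-1}$ are distinct. The left cosets $rH$, $r\in\Fnx$, are equivalence classes of the relation $r\sim r'$ iff $r^{-1}r'\in H$, so they partition $\Fnx$. Each coset has size $|H|=m$, because $h\mapsto rh$ is injective (multiply by $r^{-1}$). Counting then gives $(n-1)/m=q$ cosets, which is an integer since $m\mid n-1$ by Lagrange.

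For (ii), I would first note that $\pi$ is a bijection, with inverse $r\mapsto rt^{-1}$. It has no fixed points: $rt=r$ would force $t=1$, contradicting $m>1$. Next I would show that the orbit of $r$ is exactly $rH$. Iterating gives $\pi^i(r)=rt^i$, so the orbit is $\{rt^i: i\in\Z\}=rH$, and this has exactly $m$ elements by (i). The orbits of a permutation partition the set, and these orbits coincide with the cosets, so the orbits are exactly the $q$ cosets.

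Finally, on $C=rH$ I would index the elements as $rt^i$ with $i\in\Z/m\Z$; this indexing is well defined and bijective because $t^m=1$ and the powers $t^0,\dots,t^{m-1}$ are distinct. Then $\pi(rt^i)=rt^{i+1}=rt^{(i+1)\bmod m}$, which is the cyclic shift.

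No step is a genuine obstacle. The only point needing care is the well-definedness of the index $i\bmod m$, which is exactly where the assumption that the order of $t$ is exactly $m$ enters, rather than merely $t^m=1$.
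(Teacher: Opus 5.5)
Your proposal is correct and follows essentially the same route as the paper: Lagrange's theorem for (i), then for (ii) invertibility of $t$, the iterate formula $\pi^k(r)=rt^k$, and the identification of each orbit with the coset $rH$. The only cosmetic difference is that the paper obtains the orbit size $m$ from ``$\pi^k(r)=r$ iff $m\mid k$'', whereas you read it off from $|rH|=m$ proved in (i). The two are equivalent.
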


\begin{proof}
(i) is Lagrange's theorem applied to $H \le \mathbb F_n^\times$: the
cosets of $H$ partition $\mathbb F_n^\times$ into classes of size
$|H| = m$, and there are $|\mathbb F_n^\times|/|H| = (n-1)/m = q$ of them.
For (ii), $\pi$ is a bijection of $\mathbb F_n^\times$ because $t$ is
invertible mod $n$. For any $r \in \mathbb F_n^\times$ and $k \ge 1$,
$\pi^k(r) = r t^k$, so $\pi^k(r) = r$ iff $t^k \equiv 1 \pmod n$ iff $m
\mid k$, since $t$ has order exactly $m$; taking $k=1$ shows $\pi$ has no
fixed points, and taking $k=m$ (the least such $k$) shows the orbit of $r$
under $\pi$ has size exactly $m$. That orbit is $\{r, rt, \dots,
rt^{m-1}\} = rH$, a coset of size $m$ equal to the orbit itself, so $\pi$
maps $rH$ to itself as a single $m$-cycle, i.e.\ the stated cyclic shift,
rather than splitting $rH$ into shorter orbits.
\end{proof}

\begin{remark}
\Cref{lem:coset-decomp} is exactly the mechanism used, with $\pi$ taken to
be multiplication by $t^{-1}$ rather than $t$, in the proof of
\Cref{prop:fast-fft}: the coset-FFT decomposition of \Cref{sec:fast}
applies a length-$m$ FFT once per coset because \Cref{lem:coset-decomp}(i)
already guarantees there are only $q = (n-1)/m$ cosets to process, and it
is valid to treat each coset's contribution as a single length-$m$
\emph{cyclic} correlation only because \Cref{lem:coset-decomp}(ii) says
the relevant reindexing is genuinely one $m$-cycle on that coset, not some
other permutation of the same $m$ points. The same partition of $\mathbb
F_n^\times$ into $q$ cosets of $H$ reappears in \Cref{sec:rate} in a
different guise, indexing the aliasing frequencies that a fixed
subgroup-order generator can and cannot kill (\Cref{eq:aliasing}).
\end{remark}

\subsection{Scope of the restriction}
\label{sec:subgroup-scope}

\Cref{def:fixed-order} is a severe, non-generic restriction on the
generating vector, and it is not something CBC would produce: as $n \to
\infty$ the pool of admissible $t$ does not grow at all,  it is pinned at
$\varphi(m)$ elements, all lying in the single fixed subgroup $H$, however
large $n$ becomes. The remainder of the paper works out what this buys and
what it costs. \Cref{sec:fast} shows this restriction buys a genuine
computational advantage for the elementwise lattice transform underlying
feature maps built on the lattice, via exactly the coset structure of
\Cref{lem:coset-decomp}. \Cref{sec:rate} shows, despite the classical
average-case convergence theory not covering a pool of candidates that
stays fixed in size,  that the restriction is compatible with convergence
of the resulting quadrature rule, at an explicit (if not provably tight)
polynomial rate, again by way of the algebraic structure of $H$ (now
viewed through the $m$-th cyclotomic polynomial rather than through the
coset decomposition above). \Cref{sec:averaging} then asks how much
averaging over the small pool of $\varphi(m)$ candidates,  all we have, in
place of a pool that would ordinarily grow with $n$, can sharpen that
rate's constant.

\section{A fast $O(n\log d)$ algorithm for the elementwise lattice transform}
\label{sec:fast}

This section is the technical core of the paper: it gives a fast algorithm
for the elementwise lattice transform $\Psi(X)^\top v$ and $\Psi(X)w$ that
underlies lattice-based feature maps, for an \emph{arbitrary} scalar map
$\Psi$, with no assumption on $\Psi$ anywhere in the argument. This is a
computational advantage of the fixed-order construction that is
independent of, and complementary to, the convergence question of
\Cref{sec:rate}.

\subsection{Setting}
\label{sec:fast-setting}

Let $n$ be prime, let $t \in \mathbb F_n^\times$ have multiplicative order
exactly $m \mid n-1$, and let $q := (n-1)/m$. Let $X \in \mathbb
R^{n\times m}$ be the full $n$-point lattice-point matrix over all $m$
columns compatible with $t$,
\[
X_{l,j} := \{ l\, t^{j-1}/n \}, \qquad l = 0,\dots,n-1,\ \ j = 1,\dots,m,
\]
so that the first $d \le m -1$ columns of $X$ are exactly the coordinates of
the $n$ points underlying $Q_n$ in \Cref{sec:rank1} (recall $d \le m -1$ is
required there for the power-form construction to be well-posed, i.e.\ for
$t^0,\dots,t^{d-1}$ to be pairwise distinct within one period of $H =
\langle t \rangle$).

Fix an \emph{arbitrary} elementwise map $\Psi : \mathbb R \to \mathbb R$,
this is the operative generality of this section: no continuity,
linearity, monotonicity, or $\Psi(0) = 0$ assumption is made anywhere
below,  and write $\Psi(X) \in \mathbb R^{n\times m}$ for its entrywise
action, $\Psi(X)_{l,j} := \Psi\big(\{l\,t^{j-1}/n\}\big)$. For $v \in
\mathbb R^n$ (e.g.\ $n$ function evaluations, or quadrature weights to be
aggregated) and $w \in \mathbb R^m$ (e.g.\ $m$ per-coordinate
coefficients), our two objects of study are the aggregation and expansion
maps
\[
Y := \Psi(X)^\top v \in \mathbb R^m, \qquad u := \Psi(X)\, w \in \mathbb
R^n.
\]
Direct evaluation of either from its definition costs $O(nm)$ arithmetic
operations and $O(nm)$ evaluations of $\Psi$,  the same cost incurred when
$X$ instead collects the coordinates of a Halton sequence, a Sobol'
sequence, or a rank-1 lattice with a generic (non-power-form) generating
vector, since none of these point sets admit a faster elementwise
transform; this $O(nm)$ direct-evaluation cost is the baseline against
which the rest of this section is measured. As the proofs below make
clear, the coset-FFT saving is a pure reindexing argument that never
invokes linearity, continuity, or any other structural property of
$\Psi$: it is exactly as insensitive to $\Psi$ as it is to the numerical
entries of $v$ or $w$, so it applies uniformly across every choice of
$\Psi$ at once, with no distinguished case. This is precisely the
primitive needed to evaluate a quasi-Monte-Carlo feature map
\citep{avron2016quasi},  where a fixed scalar nonlinearity is applied to
every lattice coordinate before the linear aggregation or expansion step.

\subsection{Main result: a coset-FFT decomposition}

\begin{proposition}[Fast elementwise lattice transform]
\label{prop:fast-fft}
In the setting above, for any $\Psi : \mathbb R \to \mathbb R$ and any $v
\in \mathbb R^n$, $Y = \Psi(X)^\top v \in \mathbb R^m$ can be computed
exactly in $O(n\log m)$ arithmetic operations plus $n$ evaluations of
$\Psi$,  $O(n\log d)$ arithmetic operations once $m = \Theta(d)$,   as
against $O(nm)$ arithmetic operations plus $nm$ evaluations of $\Psi$ for
direct evaluation.
\end{proposition}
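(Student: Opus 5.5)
The plan is to reindex the rows of $X$ by the cosets of $H=\langle t\rangle$ from \Cref{lem:coset-decomp}, so that each coset contributes one length-$m$ cyclic correlation. The row $l=0$ is all zeros, since $\{0\cdot t^{j-1}/n\}=0$. It therefore contributes $\Psi(0)\,v_0$ to every entry of $Y$, which costs $O(m)$ operations and one evaluation of $\Psi$. For $l\in\mathbb F_n^\times$, note that $X_{l,j}=(l t^{j-1}\bmod n)/n$ depends only on the residue $l t^{j-1}\bmod n\in\mathbb F_n^\times$. This suggests defining the length-$(n-1)$ table $a(r):=\Psi(r/n)$ for $r\in\mathbb F_n^\times$. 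Building it costs exactly $n-1$ evaluations of $\Psi$, so $n$ evaluations in total with the $l=0$ term. Then $\Psi(X)_{l,j}=a(l t^{j-1})$.

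Next, fix coset representatives $r_1,\dots,r_q$ of $\mathbb F_n^\times/H$. These can be obtained, for instance, as $r_s=g^{s-1}$ for a primitive root $g$, or by one linear sweep that marks visited elements; either way the bookkeeping is $O(n)$. By \Cref{lem:coset-decomp}(i), every $l\ne0$ is uniquely $l=r_s t^i$ with $0\le i<m$. Write $v^{(s)}_i:=v_{r_s t^i}$ and $a^{(s)}_k:=a(r_s t^k)$, with indices read mod $m$. Then
\[
Y_j=\Psi(0)v_0+\sum_{s=1}^q\sum_{i=0}^{m-1} v^{(s)}_i\, a^{(s)}_{i+j-1 \bmod m},\qquad j=1,\dots,m.
\]
Here I use $r_s t^i t^{j-1}=r_s t^{(i+j-1)\bmod m}$, because $t^m=1$. \Cref{lem:coset-decomp}(ii) guarantees that multiplying by $t$ is exactly the cyclic shift on each coset, so the inner sum over $i$ is precisely the cyclic cross-correlation $(v^{(s)}\star a^{(s)})_{j-1}$ of two length-$m$ vectors.

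Each correlation is computed by FFT as $\mathcal F^{-1}\bigl(\overline{\mathcal F v^{(s)}}\cdot \mathcal F a^{(s)}\bigr)$. This is exact in exact arithmetic, and for arbitrary $m$ one can use Bluestein or Rader to keep the cost at $O(m\log m)$. Accumulating the results over $s$ costs $q\cdot O(m\log m)=O(n\log m)$ arithmetic operations. Since $m=\Theta(d)$ gives $\log m=O(\log d)$, the stated $O(n\log d)$ bound follows. By comparison, direct evaluation touches all $nm$ entries. Computing the powers $t^i\bmod n$ and the products $r_s t^i$ adds $O(n)$ modular operations. Memory stays $O(n)$ because only the table $a$ and one coset's buffers are ever held.

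The main obstacle is to show that the index map is a genuine cyclic shift, with no collision and no wrap inconsistency, so that a cyclic rather than linear correlation is the right object. This is exactly what \Cref{lem:coset-decomp}(ii) provides, since the orbit of $r_s$ under multiplication by $t$ is a single $m$-cycle. I would check this step explicitly, including the sign convention (correlation versus convolution, i.e.\ multiplication by $t$ versus $t^{-1}$), because mistakes there are easy to make. The remaining steps are accounting.
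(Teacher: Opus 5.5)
Your proposal is correct and follows the paper's proof. Both separate the $l=0$ row as the $\Psi(0)v_0$ correction and reduce each coset of $H=\langle t\rangle$ to one length-$m$ cyclic correlation evaluated by FFT; the differences are cosmetic (you index cosets by the point index $l=r_s t^i$ directly, whereas the paper first substitutes $r=l\,t^{j-1}$ and walks each coset by $t^{-1}$), and your remark that prime $m$ needs Rader or Bluestein to keep each FFT at $O(m\log m)$ is a precision the paper leaves implicit.
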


\begin{proof}
Fix $j \in \{1,\dots,m\}$ and expand
\[
Y_j = \sum_{l=0}^{n-1} v_l\, \Psi\big(\{l\, t^{j-1}/n\}\big).
\]
Since $t^{j-1} \in \mathbb F_n^\times$, the map $l \mapsto r := l\,
t^{j-1} \bmod n$ is a bijection of $\{0,\dots,n-1\}$ with inverse $l = r\,
t^{-(j-1)} \bmod n$; substituting $r$ for $l$,
\begin{equation}
Y_j = \sum_{r=0}^{n-1} \Psi(w_r)\, v_{r\, t^{-(j-1)} \bmod n}, \qquad
w_r := \frac{r}{n}, \quad \pi(r) := r\, t^{-1} \bmod n.
\label{eq:fast-substitution}
\end{equation}
Unlike when $\Psi = \mathrm{id}$, $\Psi(w_0) = \Psi(0)$ need not vanish,
so the fixed point $r=0$ of $\pi$ now contributes to
\eqref{eq:fast-substitution}; separate it out and apply the coset
decomposition to the remaining $r \in \mathbb F_n^\times$. Restricted to
$\mathbb F_n^\times$, $\pi$ is multiplication by the order-$m$ element
$t^{-1}$; its orbits there are exactly the $q$ cosets of $H = \langle
t\rangle$, each of size $m$. Fix one coset $C = \{c, ct^{-1}, \dots,
ct^{-(m-1)}\}$ and set $a_i := \Psi(w_{c t^{-i}})$, $b_i := v_{c t^{-i}}$
for $i = 0,\dots,m-1$ (indices cyclic mod $m$, since $t^{-m} = 1$).
Because $\pi^{j-1}(ct^{-i}) = ct^{-(i+j-1)}$,
\[
\sum_{r \in C} \Psi(w_r)\, v_{\pi^{j-1}(r)} = \sum_{i=0}^{m-1} a_i\,
b_{(i+j-1)\bmod m} = \corr(a,b)[j-1],
\]
the length-$m$ cyclic correlation of $a$ and $b$ evaluated at shift
$j-1$. Summing over the $q$ cosets and adding back the $r=0$ term,
\begin{equation}
Y_j = \Psi(0)\, v_0 + \sum_{s=1}^q \corr\big(a^{(s)}, b^{(s)}\big)[j-1],
\qquad j = 1,\dots,m.
\label{eq:coset-sum}
\end{equation}
Each of the $q$ cyclic correlations is a single length-$m$ FFT pair,
$\corr(a,b) = \mathrm{IFFT}\big(\overline{\mathrm{FFT}(a)} \cdot
\mathrm{FFT}(b)\big)$, costing $O(m\log m)$ \citep{cooley1965algorithm};
all $m$ of its output shifts are read out in \eqref{eq:coset-sum}, since
$Y$ has $m$ entries. Forming each $a^{(s)}$ costs $m$ evaluations of
$\Psi$ ($n-1$ in total across the $q$ cosets), which changes none of the
$O(m\log m)$ arithmetic cost of the FFT pair used to evaluate
$\corr(a^{(s)}, b^{(s)})$. The total arithmetic cost is $O(q\cdot
m\log m) + O(n) = O\big((n-1)\log m\big) + O(n) = O(n\log m)$, the
additional $O(n)$ term covering construction of the $q$ cosets and the
final sum \eqref{eq:coset-sum}; taking $m = \Theta(d)$ gives
$O(n\log d)$. On top of this, forming the $a^{(s)}$'s and the single
correction term costs exactly $n$ evaluations of $\Psi$ (one per
coordinate $l = 0,\dots,n-1$, counting $\Psi(0)$ once).
\end{proof}

\Cref{alg:fast-transform} summarizes the algorithm underlying
\Cref{prop:fast-fft}.

\begin{algorithm}[t]
\caption{Fast elementwise lattice transform $Y = \Psi(X)^\top v$ for a
fixed-order power-form generator}
\label{alg:fast-transform}
\begin{algorithmic}[1]
\Require prime $n$; $t \in \mathbb F_n^\times$ of order $m \mid n-1$;
elementwise map $\Psi : \mathbb R \to \mathbb R$; data vector $v \in
\mathbb R^n$
\Ensure $Y \in \mathbb R^m$ with $Y_j = \sum_{l=0}^{n-1} v_l\, \Psi(\{l\,
t^{j-1}/n\})$
\State $Y \gets \Psi(0)\, v_0 \cdot \mathbf 1 \in \mathbb R^m$
\Comment{additive correction, $O(m)$}
\State Partition $\mathbb F_n^\times$ into the $q = (n-1)/m$ cosets of $H
= \langle t\rangle$; let $c_1,\dots,c_q$ be coset representatives
\For{$s = 1,\dots,q$}
    \State $a_i \gets \Psi\big((c_s t^{-i} \bmod n)/n\big)$, \quad $b_i
    \gets v_{c_s t^{-i} \bmod n}$, \quad for $i = 0,\dots,m-1$
    \State $r \gets \mathrm{IFFT}\big(\overline{\mathrm{FFT}(a)} \cdot
    \mathrm{FFT}(b)\big)$ \Comment{length-$m$ cyclic correlation, $O(m\log
    m)$}
    \For{$j = 1,\dots,m$}
        \State $Y_j \gets Y_j + r_{j-1}$
    \EndFor
\EndFor
\State \Return $Y$
\end{algorithmic}
\end{algorithm}

\subsection{The advantage of the fast transform}
\label{sec:when-wins}

\Cref{prop:fast-fft} is a genuine improvement over the $O(nd)$
direct-evaluation baseline, i.e.,  the cost of applying the same elementwise
feature map on top of a Halton sequence, a Sobol' sequence, or a rank-1
lattice with a generic generating vector, none of which admit a faster
elementwise transform to our knowledge. The saving, $O(nd) \to O(n\log
m)$ with $m = \Theta(d)$, comes entirely from the fixed, $n$-independent
multiplicative order of the power-form generator's scalar parameter $t$:
this is what lets $\mathbb F_n^\times$ be decomposed into $q = (n-1)/m$
cosets of the order-$m$ subgroup $H = \langle t \rangle$, each reducing to
a single length-$m$ FFT pair (\Cref{lem:coset-decomp}). It is specific to
this structured construction and gives no general recipe for accelerating
the elementwise transform for an arbitrary generating vector, a question
we do not address here. \Cref{tab:complexity} summarizes the resulting
picture, and \Cref{sec:fast-experiments} reports real wall-clock
measurements confirming it.

\begin{table}[t]
\centering
\caption{Cost of computing $Y = \Psi(X)^\top v$ for the full $n$-point
lattice, for an arbitrary elementwise map $\Psi$.}
\label{tab:complexity}
\small
\begin{tabular}{@{}p{4.6cm}p{4.0cm}p{4.5cm}@{}}
\toprule
Method & Cost & Requires \\
\midrule
Direct evaluation, i.e.,  the standard cost for a Halton or Sobol'
sequence, or a generic rank-1 lattice & $O(nd)$ & nothing \\[2pt]
\Cref{prop:fast-fft} (coset FFTs) & $O(n\log m)$\newline (i.e.\
$O(n\log d)$ for $m=\Theta(d)$) & power-form generator, small order
($m \ll n$) \\
\bottomrule
\end{tabular}
\end{table}

The comparison above is stated for $Y = \Psi(X)^\top v$ for a general
elementwise $\Psi$ from the outset, since that is this paper's object of
study: each method's $\Psi$-dependence is confined to an additive $O(n)$
(direct evaluation: $O(nm)$) term of pointwise evaluations, which changes
none of the arithmetic comparison in the table.

\subsection{The transpose direction: fast computation of $\Psi(X)w$}
\label{sec:fast-transpose}

\Cref{prop:fast-fft} computes the ``many-to-few'' aggregation $Y =
\Psi(X)^\top v \in \mathbb R^m$ from $n$ data values $v \in \mathbb R^n$.
The dual direction is a ``few-to-many'' expansion: given a coefficient
vector $w \in \mathbb R^m$ (for instance a set of per-coordinate weights,
or a linear functional to be evaluated at every lattice point), compute
$u := \Psi(X) w \in \mathbb R^n$, i.e.
\[
u_l = \sum_{j=1}^m \Psi(X)_{l,j}\, w_j = \sum_{j=1}^m \Psi\big(\{l\,
t^{j-1}/n\}\big)\, w_j, \qquad l = 0,\dots,n-1.
\]
Direct evaluation again costs $O(nm)$ arithmetic operations and $O(nm)$
evaluations of $\Psi$. The same coset structure gives an $O(n\log m)$
algorithm (plus $O(n)$ evaluations of $\Psi$) for this direction as well.

\begin{proposition}[Fast elementwise transpose lattice transform]
\label{prop:fast-Xw}
In the setting of \Cref{sec:fast-setting} (prime $n$, $t$ of order $m \mid
n-1$, $q = (n-1)/m$), for any $\Psi : \mathbb R \to \mathbb R$ and any $w
\in \mathbb R^m$, $u = \Psi(X)w$ can be computed exactly in $O(n\log m)$
arithmetic operations plus $n$ evaluations of $\Psi$, $O(n\log d)$
arithmetic operations once $m = \Theta(d)$,  as against $O(nm)$
arithmetic operations plus $nm$ evaluations of $\Psi$ for direct
evaluation.
\end{proposition}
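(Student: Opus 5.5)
We mirror the proof of \Cref{prop:fast-fft}, now organizing the computation by the row index $l$ rather than the column index $j$. First dispose of $l=0$: every entry of row $0$ is $\{0\}=0$, so $u_0=\Psi(0)\sum_{j=1}^m w_j$, costing $O(m)$ operations and one evaluation of $\Psi$. For $l\in\mathbb F_n^\times$, write $l=ct^{i}$ with $c$ a coset representative of $H=\langle t\rangle$ and $i\in\{0,\dots,m-1\}$, which is possible and unique by \Cref{lem:coset-decomp}(i). Then $l\,t^{j-1}\equiv c\,t^{(i+j-1)\bmod m}\pmod n$, because $t^m=1$. So every entry of row $l$ is one of the $m$ values
\[
a_k := \Psi\big((c\,t^{k}\bmod n)/n\big), \qquad k=0,\dots,m-1 .
\]
These are exactly the $\Psi$-values on the coset $cH$.

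Fix one coset $C=cH$ and set $u^{(C)}_i:=u_{ct^i \bmod n}$. The previous paragraph gives
\[
u^{(C)}_i=\sum_{j=1}^m a_{(i+j-1)\bmod m}\,w_j=\sum_{k=0}^{m-1} a_{(i+k)\bmod m}\,\tilde w_k,\qquad \tilde w_k:=w_{k+1}.
\]
This is the length-$m$ cyclic correlation $\corr(\tilde w,a)[i]$ (or its conjugate-convolution form, depending on the convention for $\corr$ used in \Cref{prop:fast-fft}). The argument relies on \Cref{lem:coset-decomp}(ii): multiplication by $t$ acts on $C$ as a single $m$-cycle, so the index $(i+k)\bmod m$ runs over the coset exactly cyclically. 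We therefore compute all $m$ outputs $u^{(C)}_0,\dots,u^{(C)}_{m-1}$ at once as $\mathrm{IFFT}\big(\overline{\mathrm{FFT}(\tilde w)}\cdot\mathrm{FFT}(a)\big)$, at cost $O(m\log m)$. Since $\tilde w$ does not depend on the coset, its FFT can be computed once and reused. We then scatter each output back to the position $l=ct^i\bmod n$.

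For the accounting, there are $q=(n-1)/m$ cosets, each needing $m$ evaluations of $\Psi$ and $O(m\log m)$ arithmetic. Adding the row-$0$ term, the total is $n$ evaluations of $\Psi$ and $O(qm\log m)+O(n)=O(n\log m)$ arithmetic. The $O(n)$ term covers enumerating the cosets by repeated multiplication by $t$ and the index bookkeeping. With $m=\Theta(d)$ this becomes $O(n\log d)$. Memory is $O(n)$, since only one coset's buffers are live at a time besides the output $u$.

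The main obstacle is not conceptual. It is the index bookkeeping: fixing the direction of the shift (orbit $ct^{i}$ versus $ct^{-i}$) together with the correlation-versus-convolution convention, so that the FFT formula reproduces $u_l$ exactly, including the shift of $j-1$ caused by 1-based column indexing. I would verify this on \Cref{ex:subgroup} ($n=11$, $t=4$, $m=5$) before finalizing. An alternative that needs no separate check is to note that $u=\Psi(X)w$ is the adjoint of the map in \Cref{prop:fast-fft}. Transposing each per-coset correlation there, a circulant matrix, gives a circulant applied to $w$, which is again an FFT pair of the same cost.
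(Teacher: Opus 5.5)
Your proposal is correct and follows the paper's proof. You set $u_0=\Psi(0)\sum_j w_j$, then for each of the $q$ cosets you compute all $m$ entries of $u$ on that coset as one length-$m$ cyclic correlation against the once-precomputed $\mathrm{FFT}(w)$, and scatter them back. The only difference is orientation: you parametrize the coset as $ct^{i}$ and compute $\mathrm{IFFT}(\overline{\mathrm{FFT}(\tilde w)}\cdot\mathrm{FFT}(a))$, whereas the paper uses $ct^{-i}$ and $\mathrm{IFFT}(\overline{\mathrm{FFT}(\phi)}\cdot\mathrm{FFT}(w))$. Your index bookkeeping, including the $j-1$ shift, already checks out as written, so the check on the example is not needed for correctness.
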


\begin{proof}
Since $X_{0,j} = \{0\} = 0$ for every $j$, $u_0 = \sum_{j=1}^m
\Psi(0)\,w_j = \Psi(0)\sum_{j=1}^m w_j$, computed once in $O(m)$. Fix a
nonzero $l \in \mathbb F_n^\times$; it lies in a unique coset $C = \{c,
ct^{-1}, \dots, ct^{-(m-1)}\}$ of $H = \langle t\rangle$ (coset
representative $c$), so $l = ct^{-i}$ for a unique $i \in
\{0,\dots,m-1\}$. Define, for the coset containing $l$,
\[
\phi_k := \Psi\big(\{c\, t^k/n\}\big), \qquad k = 0,\dots,m-1
\]
(indices cyclic mod $m$, since $t^m = 1$). Then, using $l\, t^{j-1} = c\,
t^{j-1-i}$,
\[
u_l = \sum_{j=1}^m \Psi\big(\{c\, t^{j-1-i}/n\}\big)\, w_j =
\sum_{k=0}^{m-1} \phi_{(k-i)\bmod m}\, w_{k+1} = \corr(\phi, w)[i],
\]
the length-$m$ cyclic correlation of $\phi$ and $w$ evaluated at shift
$i$, i.e.,  the same correlation identity underlying \eqref{eq:coset-sum}, run
with the roles of the two vectors, and of ``coordinate index'' versus
``point index,'' exchanged. As in the proof of \Cref{prop:fast-fft}, each
coset's correlation here directly supplies the $m$ entries of $u$
indexed by that coset, with no summation across cosets (because each of
the $n$ entries of $u$ depends only on its own coset), and all $m$
outputs of each of the $q$ correlations are needed, since together they
supply all $n-1$ nonzero entries of $u$. Since $w \in \mathbb R^m$
already has the full length $m$ that the correlation needs, no
zero-padding is required (nor, for a general $\Psi$ with $\Psi(0) \ne
0$, would zero-padding even be meaningful).

Precompute $\mathrm{FFT}(w)$ once, in $O(m\log m)$. For each of the $q$
cosets, form $\phi^{(s)}$ ($m$ evaluations of $\Psi$), compute
$\corr(\phi^{(s)}, w) = \mathrm{IFFT}\big(\overline{
\mathrm{FFT}(\phi^{(s)})}\cdot \mathrm{FFT}(w)\big)$ in $O(m\log m)$, and
scatter its $m$ entries into the corresponding $m$ entries of $u$ in
$O(m)$. The total arithmetic cost is $O(m\log m) + O(q\cdot m\log m) =
O\big((n-1)\log m\big) + O(m\log m) = O(n\log m)$, which is $O(n\log d)$
once $m = \Theta(d)$, plus the $n-1$ evaluations of $\Psi$ used to form
the $\phi^{(s)}$'s (and the single $\Psi(0)$ for $u_0$), $n$ in total.
\end{proof}

\Cref{alg:fast-Xw} summarizes the algorithm underlying
\Cref{prop:fast-Xw}.

\begin{algorithm}[t]
\caption{Fast elementwise transpose lattice transform $u = \Psi(X)w$ for
a fixed-order power-form generator}
\label{alg:fast-Xw}
\begin{algorithmic}[1]
\Require prime $n$; $t \in \mathbb F_n^\times$ of order $m \mid n-1$;
elementwise map $\Psi : \mathbb R \to \mathbb R$; coefficient vector $w
\in \mathbb R^m$
\Ensure $u \in \mathbb R^n$ with $u_l = \sum_{j=1}^m \Psi(\{l\,
t^{j-1}/n\})\, w_j$
\State $u_0 \gets \Psi(0)\sum_{j=1}^m w_j$ \Comment{additive correction,
$O(m)$}
\State Precompute $F \gets \mathrm{FFT}(w)$
\State Partition $\mathbb F_n^\times$ into the $q = (n-1)/m$ cosets of $H
= \langle t\rangle$; let $c_1,\dots,c_q$ be coset representatives
\For{$s = 1,\dots,q$}
    \State $\phi_k \gets \Psi\big((c_s t^k \bmod n)/n\big)$, for $k =
    0,\dots,m-1$
    \State $r \gets \mathrm{IFFT}\big(\overline{\mathrm{FFT}(\phi)}\cdot
    F\big)$ \Comment{length-$m$ cyclic correlation, $O(m\log m)$}
    \For{$i = 0,\dots,m-1$}
        \State $u_{c_s t^{-i} \bmod n} \gets r_i$
    \EndFor
\EndFor
\State \Return $u$
\end{algorithmic}
\end{algorithm}

\begin{remark}[Duality]
\label{rem:duality}
\Cref{prop:fast-fft} and \Cref{prop:fast-Xw} are transpose computations, 
$Y = \Psi(X)^\top v$ and $u = \Psi(X)w$,  built from the same
coset-correlation primitive, run in complementary directions: aggregation
across many points into few coordinates (sum over cosets, plus the
$\Psi(0)v_0$ correction) versus expansion from few coordinates into many
points (no sum across cosets, plus the $\Psi(0)\sum_j w_j$ correction).
Both cost $O(n\log m) = O(n\log d)$ arithmetic operations plus $O(n)$
evaluations of $\Psi$, for $m = \Theta(d)$. Neither proof uses
continuity, monotonicity, boundedness, or $\Psi(0) = 0$: the entire
saving is a reindexing of which entries get multiplied together, so it is
exactly as insensitive to the scalar function $\Psi$ sitting at each
entry as it is to the values of $v$ or $w$ themselves; $\Psi$ enters the
cost only through the $n$ pointwise evaluations needed to form the
relevant entries of $\Psi(X)$, and the single additive correction term
that appears once $\Psi(0) \ne 0$. \Cref{prop:fast-fft} and
\Cref{prop:fast-Xw} together give a single matching pair of fast
primitives for the two directions in which the lattice-point matrix $X$,
or its elementwise transform $\Psi(X)$, is used throughout quasi-Monte
Carlo computation,  e.g., evaluating a linear functional of $m$
coefficients at all $n$ points, and aggregating $n$ function values or
residuals back into $m$ per-coordinate summaries,  exactly the operation
needed to evaluate a quasi-Monte Carlo feature map \citep{avron2016quasi}.
We emphasize that this section is purely computational: it says nothing
about how well $\Psi(X)^\top v$ or
$\Psi(X)w$ approximates any target quantity for a particular choice of
$\Psi$ (e.g.\ a random Fourier feature map), which is a separate,
approximation-theoretic question outside the scope of this paper.
\end{remark}

\subsection{Numerical experiments}
\label{sec:fast-experiments}

We verify \Cref{prop:fast-fft} and \Cref{prop:fast-Xw}
(\Cref{alg:fast-transform} and \Cref{alg:fast-Xw}) against direct $O(nm)$
evaluation of $Y = \Psi(X)^\top v$ and $u = \Psi(X)w$, i.e.,  the cost incurred
by the same elementwise feature map on top of a Halton sequence, a Sobol'
sequence, or a rank-1 lattice with a generic generating vector,  for two
choices of
$\Psi$ that stress the claimed generality from opposite ends: a
genuinely nonlinear $\Psi$ with $\Psi(0) \ne 0$, i.e., $\Psi(x) = \cos(2\pi x)
+ 0.3x^2$, in the spirit of a quasi-Monte Carlo feature map,  and the
trivial map $\Psi = \mathrm{id}$, for which the additive correction term
in every proposition above vanishes identically. 

\paragraph{Correctness.} \Cref{tab:correctness} reports, for five
$(n,m,d)$ instances spanning $n$ from $113$ to $11{,}317$ and $m$ from $7$
to $23$ (with $t$ chosen of exact order $m$ in each case) and for both
choices of $\Psi$, the maximum absolute entrywise disagreement between
\Cref{alg:fast-transform} (resp.\ \Cref{alg:fast-Xw}) and direct
evaluation of $Y$ (resp.\ $u$). Entries of $v$
and $w$ were drawn i.i.d.\ $\mathcal N(0,1)$.

\begin{table}[t]
\centering
\caption{Correctness check: maximum absolute entrywise error between
each fast method and direct $O(nm)$ evaluation, for both the nonlinear
and the trivial $\Psi$. All errors are at floating-point rounding level.}
\label{tab:correctness}
\small
\begin{tabular}{@{}rrrlrr@{}}
\toprule
$n$ & $m$ & $d$ & $\Psi$ & $\max|\text{err }Y|$ (Alg.~\ref{alg:fast-transform})
& $\max|\text{err }u|$ (Alg.~\ref{alg:fast-Xw}) \\
\midrule
113 & 7 & 5 & $\mathrm{id}$ & $2.0\times10^{-15}$ & $8.9\times10^{-16}$ \\
113 & 7 & 5 & $\cos+0.3x^2$ & $3.6\times10^{-15}$ & $1.8\times10^{-15}$ \\
1321 & 11 & 8 & $\mathrm{id}$ & $3.6\times10^{-14}$ & $1.8\times10^{-15}$ \\
1321 & 11 & 8 & $\cos+0.3x^2$ & $5.3\times10^{-14}$ & $2.7\times10^{-15}$ \\
2029 & 13 & 9 & $\mathrm{id}$ & $6.4\times10^{-14}$ & $2.7\times10^{-15}$ \\
2029 & 13 & 9 & $\cos+0.3x^2$ & $4.3\times10^{-14}$ & $1.8\times10^{-15}$ \\
919 & 17 & 12 & $\mathrm{id}$ & $1.6\times10^{-14}$ & $1.8\times10^{-15}$ \\
919 & 17 & 12 & $\cos+0.3x^2$ & $2.8\times10^{-14}$ & $2.7\times10^{-15}$ \\
11317 & 23 & 15 & $\mathrm{id}$ & $4.8\times10^{-13}$ & $3.6\times10^{-15}$ \\
11317 & 23 & 15 & $\cos+0.3x^2$ & $4.0\times10^{-13}$ & $5.3\times10^{-15}$ \\
\bottomrule
\end{tabular}
\end{table}

Every entry in \Cref{tab:correctness} is at the level expected from
double-precision floating-point rounding across the $O(m\log m)$-deep FFT
computation graph (worst case $4.8\times10^{-13}$, on data and outputs of
order $1$--$10^2$), uniformly across both choices of $\Psi$ and across both
fast methods,  direct numerical confirmation of
\Cref{prop:fast-fft} and \Cref{prop:fast-Xw} beyond the proofs above,
including the additive correction term that each proposition introduces
for a general $\Psi$ with $\Psi(0)\ne0$.

\paragraph{Timing.} \Cref{tab:timing} reports wall-clock time (best of
three runs, single-threaded \texttt{numpy}, vectorized coset construction)
for direct evaluation, i.e.,  the $O(nd)$ cost of applying the same elementwise
feature map on top of a Halton sequence, a Sobol' sequence, or a rank-1
lattice with a generic generating vector,  against
\Cref{alg:fast-transform}, computing $Y
= \Psi(X)^\top v$ of the subgroup rank-1 lattice
for $d = 50$ ($m = 53$, the smallest prime exceeding $d$), for $n$ ranging
from $16{,}007$ to $1{,}500{,}007$ and $\Psi = \mathrm{id}$; relative
errors (not shown) match \Cref{tab:correctness}'s floating-point level
throughout, topping out at $4.2\times10^{-14}$ at the largest $n$.

\begin{table}[t]
\centering
\caption{Wall-clock time (seconds, best of 3 runs) for $Y = \Psi(X)^\top
v$ ($d=50$, $m=53$, $\Psi=\mathrm{id}$ shown; nonlinear $\Psi$ discussed
in text), direct $O(nd)$ evaluation vs.\
coset-FFT (\Cref{alg:fast-transform}).}
\label{tab:timing}
\begin{tabular}{@{}rrrr@{}}
\toprule
$n$ & $q=(n-1)/m$ & direct $O(nd)$ & coset-FFT
$O(n\log m)$ \\
\midrule
16{,}007 & 302 & 0.0077 & 0.0014 \\
31{,}907 & 602 & 0.0174 & 0.0027 \\
57{,}559 & 1{,}086 & 0.0382 & 0.0046 \\
132{,}607 & 2{,}502 & 0.1114 & 0.0114 \\
337{,}081 & 6{,}360 & 0.3813 & 0.0292 \\
717{,}091 & 13{,}530 & 0.8562 & 0.0672 \\
1{,}500{,}007 & 28{,}302 & 1.8574 & 0.1412 \\
\bottomrule
\end{tabular}
\end{table}

The coset-FFT algorithm is the fastest method at every $n$ tested (already
by a factor of $\sim\!5.5\times$ over direct evaluation at $n=16{,}007$,
widening to $\sim\!13\times$ at $n=1{,}500{,}007$), consistent with
\Cref{tab:complexity}: $O(n\log m)$ with $m=53$ fixed grows only slightly
faster than linearly in $n$, while direct evaluation's $O(nd)=O(50n)$ term
grows linearly with a much larger constant. Repeating the same sweep
with the nonlinear $\Psi = \cos(2\pi\,\cdot\,) + 0.3(\cdot)^2$ leaves the
coset-FFT and direct-evaluation timings within the range predicted by
their respective $O(n)$ and $O(nm)$ evaluation-cost terms: at
$n=1{,}500{,}007$, coset-FFT rises from $0.1412\mathrm s$ to
$0.1765\mathrm s$ (a $25\%$ increase, consistent with its additive $O(n)$
evaluation term), while direct evaluation rises from $1.8574\mathrm s$ to
$5.7959\mathrm s$ (a $212\%$ increase, consistent with its $O(nm)=O(50n)$
evaluation term dominating once $\Psi$ itself is expensive to evaluate).
The transpose direction (\Cref{alg:fast-Xw}, computing $u = \Psi(X)w$) was
checked over the same design-rule setting ($d=50$, $m=53$) at $n \in
\{31{,}271,\ 170{,}873,\ 725{,}041,\ 1{,}517{,}921\}$: coset-FFT again wins
throughout, by a factor growing from $\sim\!9\times$ (id) /
$\sim\!20\times$ (nonlinear) at the smallest $n$ to $\sim\!17\times$ /
$\sim\!35\times$ at the largest, with relative errors again at
floating-point level ($\le 6.9\times10^{-16}$ throughout,  smaller than
the $Y$-direction errors, since $u$'s coset-FFT computation involves one
correlation and a scatter rather than a cross-coset summation).

\section{Convergence rate for a fixed subgroup-order generator}
\label{sec:rate}

\Cref{sec:fast} shows that restricting $t$ to have small, fixed
multiplicative order $m$ is computationally attractive. This section asks
whether it is a statistically sound restriction at all: if the generating
vector is built from a subgroup order that is fixed independently of $n$,
does the resulting \emph{full} $n$-point lattice rule $Q_n$ still converge
as $n \to \infty$? This is not automatic. As $n$ grows, $t$ ranges over
only the $\varphi(m)$ generators of a single fixed-order subgroup,  a
pool that does not grow with $n$,  so the generating vector is far from
the ``generic, CBC-optimal'' case that the classical convergence theory
for lattice rules is built around; that theory typically argues by
averaging the worst-case error over a pool of candidate generators whose
size grows with $n$ (e.g.\ all of $\mathbb F_n^\times$), which is simply
unavailable here.

\subsection{Setting}
\label{sec:rate-setting}

In this section $m$ is additionally assumed \emph{prime}. Fix $d$ with $2
\le d \le m-1$. For every prime $n \equiv 1 \pmod m$ and every $t \in
\mathbb F_n^\times$ of multiplicative order exactly $m$ (there are
$\varphi(m) = m-1$ such $t$, since $m$ is prime), let $\mathbf z(t) :=
(1,t,\dots,t^{d-1}) \bmod n$ and let $Q_n$ be the \emph{full} $n$-point
lattice rule of \Cref{sec:rank1} built from $\mathbf z(t)$.

\subsection{Algebraic tools: cyclotomic polynomials and resultants}

\begin{lemma}[Order-$m$ elements are roots of the cyclotomic polynomial
modulo $n$]
\label{lem:cyclotomic}
Let $\Phi_m(x) := 1 + x + \cdots + x^{m-1} \in \mathbb Z[x]$ ($m$ prime;
the $m$-th cyclotomic polynomial, irreducible over $\mathbb Q$ and of
degree $m-1$; see e.g.\ \citealt[Ch.~7]{irelandrosen1990classical}). If $n$
is prime, $m \mid n-1$, and $t \in \mathbb F_n^\times$ has order exactly
$m$, then $\Phi_m(t) \equiv 0 \pmod n$, and moreover $\Phi_m$ splits into
$m-1$ distinct linear factors over $\mathbb F_n$, one root per order-$m$
element of $\mathbb F_n^\times$.
\end{lemma}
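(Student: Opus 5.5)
The plan is to derive both claims from the factorization $x^m - 1 = (x-1)\Phi_m(x)$, valid in $\mathbb Z[x]$ and hence in $\mathbb F_n[x]$ after reduction mod $n$, together with the cyclic structure of $\mathbb F_n^\times$ recalled in \Cref{sec:subgroup-def}.

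\textbf{Step 1 (order-$m$ elements are roots).} Let $t$ have order exactly $m$. Then $t^m - 1 \equiv 0 \pmod n$, so $(t-1)\Phi_m(t) \equiv 0 \pmod n$. Since $m > 1$, we have $t \ne 1$, so $t-1$ is a unit in the field $\mathbb F_n$. Hence $\Phi_m(t) \equiv 0 \pmod n$.

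\textbf{Step 2 (counting roots).} Because $\mathbb F_n^\times$ is cyclic and $m \mid n-1$, the unique subgroup $H$ of order $m$ has exactly $\varphi(m) = m-1$ elements of order exactly $m$; primality of $m$ means these are all of $H \setminus \{1\}$. By Step 1, all $m-1$ of these distinct elements are roots of $\Phi_m$ in $\mathbb F_n$. The polynomial $\Phi_m$ is monic of degree $m-1$ over the field $\mathbb F_n$, so it has at most $m-1$ roots. Each root $\beta$ contributes a factor $(x-\beta)$, and $\prod_{\beta}(x-\beta)$ is monic of degree $m-1$ dividing $\Phi_m$. Therefore
\[
\Phi_m(x) \equiv \prod_{\operatorname{ord}(\beta) = m} (x - \beta) \pmod n,
\]
which gives $m-1$ distinct linear factors, one per order-$m$ element.

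\textbf{Step 3 (the converse direction).} It also follows that every root of $\Phi_m$ mod $n$ has order $m$. Any root $\beta$ satisfies $\beta^m = 1$, so $\beta \in H$. Moreover $\beta \ne 1$, since $\Phi_m(1) = m \not\equiv 0 \pmod n$: $n \equiv 1 \pmod m$ forces $n > m$, and $n$ is prime. So $\beta$ has order exactly $m$.

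No step is a real obstacle; the argument is elementary. The only point needing care is the distinctness and separability of the roots. Distinctness is automatic from counting $m-1$ distinct elements against degree $m-1$. Alternatively, $\gcd(x^m-1, m x^{m-1}) = 1$ in $\mathbb F_n[x]$ because $n \nmid m$.

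The irreducibility over $\mathbb Q$ quoted in the statement is standard and only cited, for example from Eisenstein applied to $\Phi_m(x+1)$.
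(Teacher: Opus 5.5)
Your proof is correct and essentially the paper's. Step~1 is identical to the paper's argument. For the splitting, the paper splits $x^m-1$ over the $m$ elements of $H$ and then cancels $x-1$, whereas you apply the degree count directly to $\Phi_m$ using the $m-1$ order-$m$ roots from Step~1, which makes distinctness automatic. Your Step~3 is valid but already implied by the factorization you obtain in Step~2.
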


\begin{proof}
Over $\mathbb Z$, $x^m - 1 = (x-1)\Phi_m(x)$. Reducing mod $n$: $(t-1)
\Phi_m(t) \equiv t^m - 1 \equiv 0 \pmod n$, since $t^m = 1$. As $n$ is
prime, $\mathbb F_n$ is a field; since $t$ has order exactly $m > 1$, $t
\not\equiv 1$, so $t-1$ is invertible mod $n$, forcing $\Phi_m(t) \equiv 0
\pmod n$. For the splitting claim: $m \mid n-1$ means the cyclic group
$\mathbb F_n^\times$ of order $n-1$ contains a full set of $m$-th roots of
unity, so $x^m - 1$ splits into $m$ distinct linear factors over
$\mathbb F_n$ (distinct because $\gcd(m,n) = 1$ makes $x^m-1$ separable);
removing the factor $(x-1)$ leaves $\Phi_m$ splitting into the remaining
$m-1$, which are exactly the elements of order dividing $m$ but $\ne 1$,
i.e.\ order exactly $m$ since $m$ is prime.
\end{proof}

\begin{lemma}[Resultant criterion]
\label{lem:resultant}
For $h = (h_0,\dots,h_{d-1}) \in \mathbb Z^d \setminus \{\mathbf 0\}$
write $P_h(x) := \sum_{j=0}^{d-1} h_j x^j$ and $R(h) := \Res(P_h,\Phi_m)
\in \mathbb Z$ (the Sylvester resultant, taken at formal degrees $d-1$ and
$m-1$). Then:
\begin{enumerate}
\item[(a)] $R(h) \ne 0$ (uses $d \le m-1$).
\item[(b)] $|R(h)| \le (d\, \|h\|_\infty)^{m-1}$.
\item[(c)] If $h \cdot \mathbf z(t) \equiv 0 \pmod n$ then $n \mid R(h)$.
\end{enumerate}
\end{lemma}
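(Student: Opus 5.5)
The plan is to work entirely with the product formula for the resultant over the roots of $\Phi_m$. Write $\zeta$ for a primitive $m$-th root of unity in $\mathbb C$. Since $\Phi_m$ is monic of degree $m-1$, the resultant at formal degrees $(d-1,m-1)$ satisfies, up to a sign $\pm1$,
\[
R(h) = \pm\prod_{k=1}^{m-1} P_h(\zeta^k).
\]
If the leading coefficient $h_{d-1}$ vanishes, one factor of the formal leading coefficient of $P_h$ appears, but it enters raised to a power $\deg\Phi_m - \deg\Phi_m$. More precisely, the product formula $\Res(A,B)=(\pm)\,\mathrm{lc}(B)^{\deg A}\prod_{B(\beta)=0}A(\beta)$ only requires $B$ to have its true degree. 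So I will use the version that expands over the roots of the monic $\Phi_m$. That version holds even at formal degree $d-1$ for $P_h$, because $\Phi_m$ is monic with exactly $m-1$ roots. This identity gives all three parts.

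\textbf{Part (a).} Suppose $R(h)=0$. Then $P_h(\zeta^k)=0$ for some $k$. Since $\Phi_m$ is irreducible over $\mathbb Q$ and is the minimal polynomial of $\zeta^k$, this forces $\Phi_m \mid P_h$ in $\mathbb Q[x]$. But $P_h\ne0$ has degree at most $d-1<m-1=\deg\Phi_m$, a contradiction. This is exactly where $d\le m-1$ is used.

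\textbf{Part (b).} Each factor satisfies $|P_h(\zeta^k)|\le\sum_j|h_j|\le d\|h\|_\infty$, since $|\zeta^k|=1$. Taking the product of the $m-1$ factors gives the bound $(d\|h\|_\infty)^{m-1}$.

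\textbf{Part (c).} Here I will use the Sylvester matrix characterization rather than the root formula. There exist $A,B\in\mathbb Z[x]$ with
\[
A(x)P_h(x)+B(x)\Phi_m(x)=R(h).
\]
This is the standard adjugate / Cramer's-rule identity for the Sylvester matrix, valid at formal degrees. Evaluate at the integer $t$ (a representative of the order-$m$ element). By \Cref{lem:cyclotomic}, $\Phi_m(t)\equiv0\pmod n$. Also $P_h(t)=h\cdot\mathbf z(t)\equiv0\pmod n$ by hypothesis. Hence $R(h)\equiv0\pmod n$.

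The main point needing care is the formal-degree bookkeeping: the Sylvester resultant at degree $d-1$ when $h_{d-1}=0$ must still equal the product over the roots of $\Phi_m$. I would verify this by noting that $\Res(A,B)=(-1)^{\deg A\deg B}\Res(B,A)$ and that $\Res(\Phi_m,P_h)=\prod_{\Phi_m(\beta)=0}P_h(\beta)$ for monic $\Phi_m$, regardless of the actual degree of $P_h$. This follows from expanding the Sylvester determinant after triangularizing with respect to $\Phi_m$'s rows, equivalently by a continuity argument in the coefficient $h_{d-1}$.
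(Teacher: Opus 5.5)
Your proof is correct. Parts (a) and (b) follow the paper's own argument: the product of $P_h$ over the primitive $m$-th roots of unity, the minimal-polynomial degree count, and the triangle inequality.

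For (c) you take a genuinely different route. The paper reduces the Sylvester determinant mod $n$ and applies the product formula over $\mathbb F_n$. It relies on the complete splitting of $\Phi_m$ from \Cref{lem:cyclotomic}, so that $\overline{P_h}(t)$ appears as one of the $m-1$ factors. You instead use the cofactor (B\'ezout) identity $AP_h+B\Phi_m=R(h)$ with $A,B\in\mathbb Z[x]$ and evaluate it at the integer $t$.

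Your version is more elementary and slightly more general. It needs only $\Phi_m(t)\equiv 0 \pmod n$, not the splitting. It also avoids both the compatibility of the resultant with reduction mod $n$ and any formal-degree bookkeeping over $\mathbb F_n$, since the cofactor identity holds at formal degrees verbatim. The paper's version, in exchange, writes $R(h) \bmod n$ as a product over all $m-1$ admissible generators $t_i$. That is exactly the viewpoint \Cref{lem:multiplicity} later upgrades, via the prime ideals of $\mathbb Z[\zeta_m]$, to $n^{N_h}\mid R(h)$.

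Two small points:
\begin{enumerate}
\item The sentence about a power $\deg\Phi_m-\deg\Phi_m$ is meaningless and should be dropped. What you actually use is the precise statement that follows it: the root formula needs only $\Phi_m$ to have its true degree, justified by a polynomial-identity/density argument in $h_{d-1}$.
\item The sign you hedge on is in fact $(-1)^{(d-1)(m-1)}=+1$. Either $m$ is odd, or $m=2$, which forces $d=1$.
\end{enumerate}
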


\begin{proof}
Since $\Phi_m$ is monic, the resultant identity $\Res(f,\Phi_m) =
\prod_{\Phi_m(\beta)=0} f(\beta)$ holds for any $f$ of formal degree $\le
m-2$, the product over the $m-1$ roots of $\Phi_m$ in $\mathbb C$
(equivalently, in $\mathbb F_n$ once $m \mid n-1$, by
\Cref{lem:cyclotomic}); this identity needs only that $\Phi_m$'s leading
coefficient ($=1$) does not vanish, not any assumption on $f$'s actual
degree. Apply it over $\mathbb C$ with $\zeta := e^{2\pi i/m}$: $R(h) =
\prod_{k=1}^{m-1} P_h(\zeta^k)$.

\emph{(a)} If $R(h) = 0$ then $P_h(\zeta^k) = 0$ for some $k \in
\{1,\dots,m-1\}$, i.e.\ $P_h$ has the primitive $m$-th root of unity
$\zeta^k$ as a root. Since $\Phi_m$ is the minimal polynomial of $\zeta^k$
over $\mathbb Q$, $\Phi_m \mid P_h$ in $\mathbb Q[x]$, forcing $\deg P_h
\ge m-1$ or $P_h \equiv 0$. Both are excluded: $\deg P_h \le d - 1 \le
m-2$ and $h \ne \mathbf 0$. So $R(h) \ne 0$.

\emph{(b)} $|P_h(\zeta^k)| \le \sum_{j=0}^{d-1} |h_j| \le d\|h\|_\infty$
for every $k$ (triangle inequality, $|\zeta^{jk}| = 1$); multiply the
$m-1$ factors.

\emph{(c)} Applying the same resultant-as-product formula over
$\mathbb F_n$ (valid since $\Phi_m$ splits there, by
\Cref{lem:cyclotomic}): $\overline{R(h)} = \prod_k \overline{P_h}(\beta_k)$
over the $m-1$ roots $\beta_k \in \mathbb F_n^\times$ of $\Phi_m$ mod $n$,
one of which is $t$. Since $h \cdot \mathbf z(t) \equiv 0 \pmod n$ means
exactly $\overline{P_h}(t) \equiv 0 \pmod n$, that factor vanishes, so the
whole product is $0$ in $\mathbb F_n$, i.e.\ $n \mid R(h)$ (reduction mod
$n$ of the integer $R(h)$, an integer polynomial in the coefficients of
$P_h$ and $\Phi_m$, via the Sylvester determinant,  agrees with the
resultant of the reduced polynomials, since reduction mod $n$ is a ring
homomorphism and the determinant is a fixed polynomial in the matrix
entries).
\end{proof}

\begin{lemma}[Tail bound for the Korobov weight]
\label{lem:tail}
For $\alpha > 1$, $d \ge 1$, $T \ge 1$,
\begin{equation}
    \sum_{\substack{h \in \mathbb Z^d \\ \|h\|_\infty > T}} \rho(h)^{-1}
\;\le\; \kappa(d,\alpha)\, T^{-(\alpha-1)}, \qquad \kappa(d,\alpha) :=
\frac{2d}{\alpha-1} \big(1 + 2\zeta(\alpha)\big)^{d-1}.
\end{equation}
where $\zeta(\alpha)$ denotes the Riemann Zeta function.
\end{lemma}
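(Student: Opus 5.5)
The plan is a union bound over the coordinate that exceeds $T$, followed by an integral comparison in one variable. If $\|h\|_\infty > T$, then some coordinate $j\in\{1,\dots,d\}$ satisfies $|h_j|>T$. Since $\rho$ is a product over coordinates,
\[
\sum_{\|h\|_\infty>T}\rho(h)^{-1}\;\le\;\sum_{j=1}^d\ \sum_{|h_j|>T}\max(1,|h_j|)^{-\alpha}\prod_{i\ne j}\sum_{h_i\in\mathbb Z}\max(1,|h_i|)^{-\alpha}.
\]
Each of the $d-1$ unrestricted factors equals $1+2\zeta(\alpha)$ by \Cref{def:korobov}. This already produces the factor $d\,(1+2\zeta(\alpha))^{d-1}$ in $\kappa(d,\alpha)$. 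Because $T\ge1$, every $k$ with $|k|>T$ has $\max(1,|k|)=|k|$. The remaining one-dimensional sum is therefore $2\sum_{k>T}k^{-\alpha}$, and it has to be bounded by $\frac{2}{\alpha-1}T^{-(\alpha-1)}$.

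For that one-dimensional step I compare with an integral. The function $x\mapsto x^{-\alpha}$ is decreasing, so $k^{-\alpha}\le\int_{k-1}^{k}x^{-\alpha}\,dx$ for every $k\ge2$. Summing over $k\ge T+1$ gives
\[
\sum_{k\ge T+1}k^{-\alpha}\le\int_T^\infty x^{-\alpha}\,dx=\frac{T^{-(\alpha-1)}}{\alpha-1}.
\]
This is exactly the required bound, with no loss in the constant, because the sum starts at $k=T+1$.

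The main obstacle is that this telescoping works cleanly only when $T$ is an integer, so that $\{k>T\}=\{k\ge T+1\}$. For non-integer $T$ the smallest admissible $k$ is $\lceil T\rceil$, and the integral comparison only reaches down to $\lfloor T\rfloor<T$. I do not think this is a removable artifact of the proof. For $d=1$, $\alpha=10$ and $T=1.999$, the left side is about $2\cdot2^{-10}$, while $\kappa\,T^{-(\alpha-1)}\approx\frac29\cdot2^{-9}$, which is smaller.

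So the displayed constant cannot hold for all real $T\ge1$. I would prove the lemma exactly as stated for integer $T\ge1$, which is the case that matters downstream: applying \Cref{lem:resultant}(b) to an aliasing $h$ gives $\|h\|_\infty\ge n^{1/(m-1)}/d$, and one can pass to the integer threshold $\lfloor\cdot\rfloor$. I would remark in the text that for general real $T$ one either replaces $T$ by $\lfloor T\rfloor$ on the right-hand side or accepts a factor depending on $\alpha$. That change only affects the constant $C(d,\alpha,m)$ in \Cref{thm:rate}, not its exponent.
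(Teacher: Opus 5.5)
Your argument is the paper's argument: the same union bound over the coordinate exceeding $T$, which gives the factor $d\,(1+2\zeta(\alpha))^{d-1}$, followed by an integral comparison for $2\sum_{k>T}k^{-\alpha}$.

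The only difference is at that last step. The paper asserts $2\sum_{k>T}k^{-\alpha}\le 2\int_T^\infty x^{-\alpha}\,dx$ for every real $T\ge1$, and you are right that this fails for $T\notin\mathbb Z$. Your example checks out: $2(\zeta(10)-1)\approx1.99\times10^{-3}$ against $\tfrac29\cdot1.999^{-9}\approx4.4\times10^{-4}$. The failure is not limited to large $\alpha$ either. For $d=1$, $\alpha=2$, $T=1.999$, the left side is $2(\zeta(2)-1)\approx1.29$ and the right side is $2/1.999\approx1.0005$.

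So the lemma as literally stated is false, and your repair is the right one. Either restrict to integer $T$, or for real $T\ge1$ use $\sum_{k\ge\lceil T\rceil}k^{-\alpha}\le T^{-\alpha}+\frac{T^{1-\alpha}}{\alpha-1}\le\frac{\alpha}{\alpha-1}T^{1-\alpha}$, which costs a factor $\alpha$ in $\kappa(d,\alpha)$.

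Nothing downstream breaks. \Cref{thm:rate} applies the lemma only at $T_n=\lfloor n^{1/(m-1)}/(2d)\rfloor$, which is an integer $\ge2$ once $n\ge n_0(d,m)$. \Cref{lem:log-tail} reuses the same comparison, and it too is only invoked at this integer $T_n$.
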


\begin{proof}
Union-bound over which coordinate exceeds $T$ in absolute value:
\[
\sum_{\|h\|_\infty > T} \rho(h)^{-1} \le \sum_{i=1}^d \Big(
\sum_{|k|>T} \max(1,|k|)^{-\alpha} \Big) \prod_{j\ne i} \Big(
\sum_{k\in\mathbb Z} \max(1,|k|)^{-\alpha} \Big) = d\cdot S_T \cdot
\big(1+2\zeta(\alpha)\big)^{d-1},
\]
where $S_T := 2\sum_{k>T} k^{-\alpha} \le 2\int_T^\infty x^{-\alpha}\,dx =
\frac{2}{\alpha-1} T^{1-\alpha}$ by integral comparison (valid for $T \ge
1$, $\alpha>1$).
\end{proof}

\subsection{Main convergence theorem}

\begin{theorem}[Convergence rate for a fixed prime subgroup order]
\label{thm:rate}
Let $m$ be prime, $2 \le d \le m-1$, $\alpha > 1$. Set
\[
n_0(d,m) := (4d)^{m-1}, \qquad C(d,\alpha,m) := \kappa(d,\alpha)\,
(4d)^{\alpha-1} = \frac{2d(4d)^{\alpha-1}}{\alpha-1} \big(1 +
2\zeta(\alpha)\big)^{d-1}.
\]
Then for every prime $n \ge n_0(d,m)$ with $n \equiv 1 \pmod m$ and every
$t \in \mathbb F_n^\times$ of order exactly $m$,
\begin{equation}
e^2(Q_n, \mathbf z(t)) \;\le\; C(d,\alpha,m)\; n^{-(\alpha-1)/(m-1)}.
\label{eq:rate}
\end{equation}
Equivalently, $e(Q_n,\mathbf z(t)) = O_{d,\alpha,m}\big(n^{-(\alpha-1)/(2(m-1))}\big)$
as $n \to \infty$ along primes $n \equiv 1 \pmod m$.
\end{theorem}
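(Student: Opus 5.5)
The plan is to combine the aliasing identity \eqref{eq:aliasing} with the resultant criterion of \Cref{lem:resultant} and the tail bound of \Cref{lem:tail}. By \eqref{eq:aliasing}, $e^2(Q_n,\mathbf z(t))$ is the sum of $\rho(h)^{-1}$ over nonzero $h\in\Z^d$ with $h\cdot\mathbf z(t)\equiv 0\pmod n$. I will show that every such aliasing frequency must have large sup-norm, namely $\|h\|_\infty > T := n^{1/(m-1)}/d$. The error sum is then dominated by the tail sum over $\{\|h\|_\infty>T\}$, and \Cref{lem:tail} bounds that tail.

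\textbf{Step 1: no short aliasing frequencies.} Fix an aliasing $h\ne\mathbf 0$. By \Cref{lem:resultant}(a), which uses $d\le m-1$, we have $R(h)\ne 0$. By part (c), $n\mid R(h)$, so $|R(h)|\ge n$. Part (b) gives $|R(h)|\le (d\|h\|_\infty)^{m-1}$. Together these yield $\|h\|_\infty\ge n^{1/(m-1)}/d$. This bound is not quite strict, but I only need the inclusion of the aliasing set in $\{\|h\|_\infty > T'\}$ for some slightly smaller $T'$. I will take $T' := n^{1/(m-1)}/(4d)$, which accounts for the constant $4d$ appearing in $C$ and $n_0$. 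Since $n^{1/(m-1)}/d > n^{1/(m-1)}/(4d)$, every aliasing $h$ satisfies $\|h\|_\infty > T'$.

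\textbf{Step 2: apply the tail bound.} The hypothesis $n\ge n_0=(4d)^{m-1}$ is exactly what makes $T'\ge 1$, so \Cref{lem:tail} applies. It gives
\[
e^2(Q_n,\mathbf z(t))\le\sum_{\|h\|_\infty>T'}\rho(h)^{-1}\le \kappa(d,\alpha)\,(T')^{-(\alpha-1)}=\kappa(d,\alpha)(4d)^{\alpha-1}n^{-(\alpha-1)/(m-1)},
\]
which is \eqref{eq:rate} with the stated $C(d,\alpha,m)$. The equivalent statement for $e$ follows by taking square roots.

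\textbf{Main obstacle.} The only delicate points are bookkeeping. The first is that part (c) of \Cref{lem:resultant} requires $\Phi_m$ to split mod $n$ and $t$ to be one of its roots; this is supplied by \Cref{lem:cyclotomic} under $n\equiv1\pmod m$, with $m$ prime. The second is making the strict-versus-nonstrict inequality and the $T\ge1$ requirement compatible with the chosen constants. The choice $T'=n^{1/(m-1)}/(4d)$ handles both, at the cost of the slack factor $4^{\alpha-1}$. The substantive content, that no aliasing frequency is shorter than $n^{1/(m-1)}/d$, is already packaged in \Cref{lem:resultant}, so I expect no genuine difficulty beyond verifying these hypotheses.
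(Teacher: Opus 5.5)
Your argument is the paper's argument: the sandwich $n\le|R(h)|\le(d\|h\|_\infty)^{m-1}$ from \Cref{lem:resultant} excludes every aliasing frequency below a threshold of order $n^{1/(m-1)}/d$, and \Cref{lem:tail} bounds the rest.

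One step fails as written: invoking \Cref{lem:tail} at the non-integer threshold $T'=n^{1/(m-1)}/(4d)$. That lemma's proof relies on $\sum_{k>T}k^{-\alpha}\le\int_T^\infty x^{-\alpha}\,dx$, which holds only for integer $T$, and the lemma itself is false otherwise. For example, take $d=\alpha=2$ and $T\in[1.94,2)$. The left side is $(1+\pi^2/3)^2-9\approx 9.40$, while $\kappa(2,2)/T\le 8.85$. This range occurs in your setting: $m=3$, $n=241$ gives $T'\approx 1.94$. So your displayed middle inequality is false there, even though the final bound still holds. This is why the paper works with the integer threshold $T_n=\lfloor n^{1/(m-1)}/(2d)\rfloor$.

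Your factor-$4$ slack repairs this at once. Since $\|h\|_\infty$ is an integer, every aliasing $h$ satisfies $\|h\|_\infty>S:=\lceil n^{1/(m-1)}/d\rceil-1$, and $S\ge 4T'-1\ge T'\ge 1$. Applying \Cref{lem:tail} at the integer $S$ gives $e^2(Q_n,\mathbf z(t))\le\kappa(d,\alpha)S^{-(\alpha-1)}\le\kappa(d,\alpha)(T')^{-(\alpha-1)}$, which is \eqref{eq:rate}.
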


\begin{proof}
Let $T_n := \lfloor n^{1/(m-1)}/(2d) \rfloor$. For $n \ge n_0(d,m)$, $T_n
\ge n^{1/(m-1)}/(4d)$ (the floor loses at most a factor $2$ once
$n^{1/(m-1)}/(2d) \ge 2$, which holds exactly when $n \ge n_0(d,m)$). By
\Cref{lem:resultant}(b), $(d\,T_n)^{m-1} \le \big(n^{1/(m-1)}/2\big)^{m-1}
= n/2^{m-1} < n$. So if some nonzero $h$ with $\|h\|_\infty \le T_n$
satisfied $h\cdot\mathbf z(t) \equiv 0 \pmod n$, \Cref{lem:resultant}(c)
would give $n \mid R(h)$, and $R(h) \ne 0$ by \Cref{lem:resultant}(a)
(using $d \le m-1$), forcing $n \le |R(h)| \le (dT_n)^{m-1} < n$, a
contradiction. Hence every nonzero $h$ with $h \cdot \mathbf z(t) \equiv 0
\pmod n$ has $\|h\|_\infty > T_n$. By \eqref{eq:aliasing},
\[
e^2(Q_n, \mathbf z(t)) = \!\!\!\sum_{\substack{h \ne 0 \\ h\cdot z(t)
\equiv 0\,(n)}}\!\!\! \rho(h)^{-1} \;\le\!\! \sum_{\|h\|_\infty > T_n}
\!\!\rho(h)^{-1} \overset{\text{Lem.\ \ref{lem:tail}}}{\le}
\kappa(d,\alpha)\, T_n^{-(\alpha-1)} \le \kappa(d,\alpha)\,
(4d)^{\alpha-1}\, n^{-(\alpha-1)/(m-1)},
\]
using $T_n \ge n^{1/(m-1)}/(4d)$ in the last step. This is
\eqref{eq:rate}.
\end{proof}

\begin{figure}[t]
\centering
{
\includegraphics[width=0.8\linewidth]{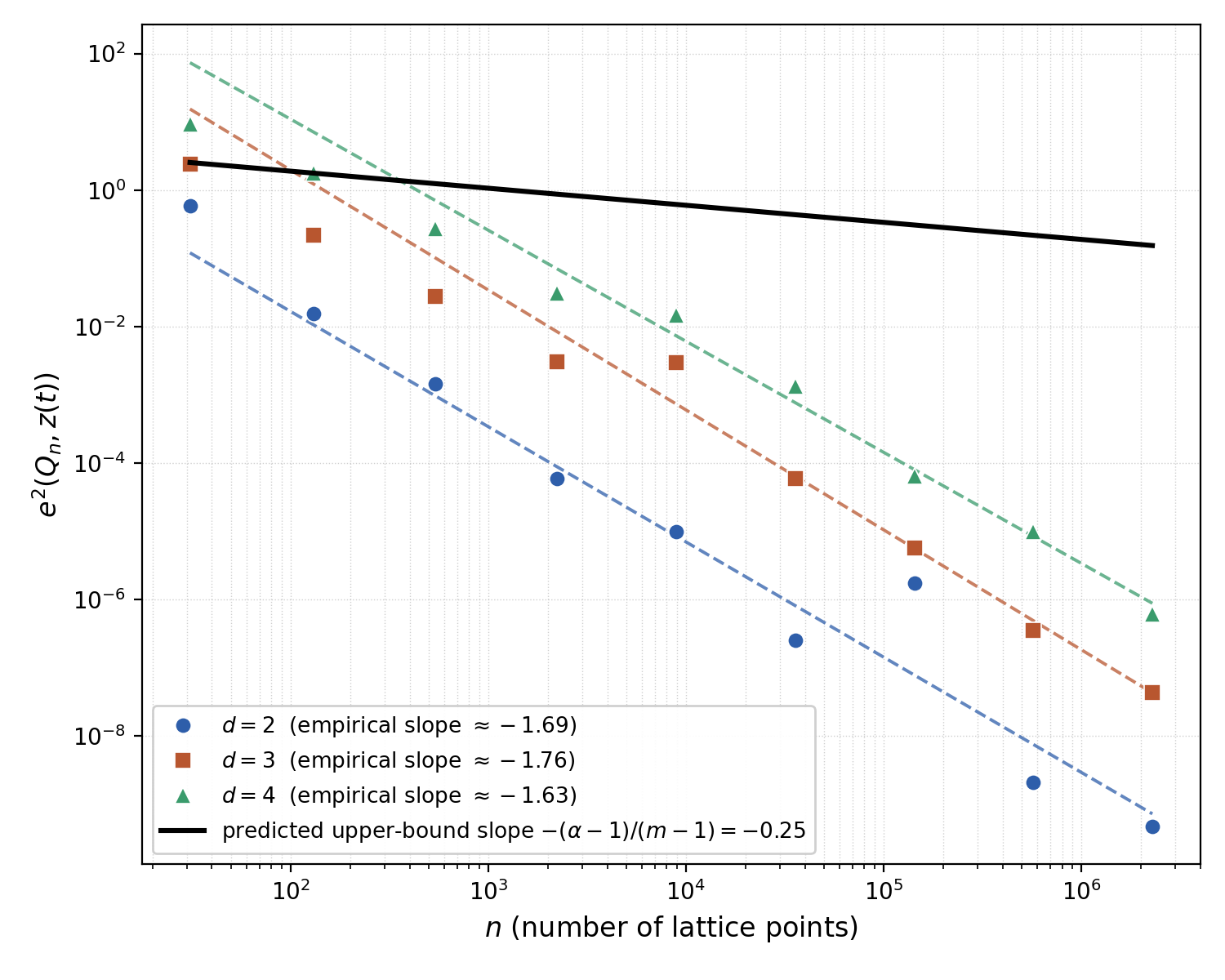}}
\caption{empirical decay v.s. predicted upper-bound exponent at $m=5$ (prime) and $\alpha=2$}
\label{Rate}
\end{figure}

\subsection{Sharpness of the dimension restriction}

\begin{proposition}[A permanent floor once $d > m-1$]
\label{prop:floor}
If $d \ge m$, then for every $n,t$ as in \Cref{sec:rate-setting},
\[
e^2(Q_n, \mathbf z(t)) \;\ge\; 2,
\]
so $e^2(Q_n,\mathbf z(t)) \not\to 0$ for every $n$ in the family,
regardless of how large $n$ grows.
\end{proposition}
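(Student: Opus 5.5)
The plan is to exhibit explicit aliasing frequencies built from the coefficient vector of $\Phi_m$ and then read off the floor from the aliasing identity \eqref{eq:aliasing}. Since $d \ge m$, the vector $\mathbf h^{+} := (1,1,\dots,1,0,\dots,0)\in\mathbb Z^d$, with $m$ leading ones followed by $d-m$ zeros, is a legitimate frequency. Its inner product with the generator is
\[
\mathbf h^{+}\cdot \mathbf z(t) = \sum_{j=0}^{m-1} t^j = \Phi_m(t) \equiv 0 \pmod n
\]
by \Cref{lem:cyclotomic}, since $t$ has order exactly $m$ and $n$ is prime with $m\mid n-1$. The negative $\mathbf h^{-} := -\mathbf h^{+}$ also lies in the dual lattice, and $\mathbf h^{-}\ne \mathbf h^{+}$ since both are nonzero. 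Every coordinate has absolute value at most $1$, so $\max(1,|h_j|)=1$ for all $j$ and $\rho(\mathbf h^{\pm}) = 1$.

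Next, I would invoke \eqref{eq:aliasing}. Every term in the sum is nonnegative, so dropping all but these two aliasing frequencies gives
\[
e^2(Q_n,\mathbf z(t)) \ge \rho(\mathbf h^{+})^{-1} + \rho(\mathbf h^{-})^{-1} = 2.
\]
The vectors $\mathbf h^{\pm}$ are independent of $n$ and of which order-$m$ element $t$ is chosen, so the bound holds uniformly over the whole family, and $e^2$ cannot tend to $0$.

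The only step needing care is confirming that the frequency fits inside $\mathbb Z^d$ and satisfies the congruence. This is exactly where $d\ge m$ enters: $\Phi_m$ has $m$ coefficients, one per coordinate $t^0,\dots,t^{m-1}$, and these are the first $m$ entries of $\mathbf z(t)$ only when $d\ge m$. Note that the argument does not use the primality of $m$ beyond making $\Phi_m = 1+x+\dots+x^{m-1}$ as in \Cref{lem:cyclotomic}. For general $m$ one would replace it with $(x^m-1)/(x-1)$, which has the same coefficients, and the vanishing still follows from $t^m=1$, $t\ne1$. Beyond this check there is no real obstacle; the bound could even be improved by shifting $\mathbf h^{+}$ to later coordinates (multiples $t^k\Phi_m(t)$), giving $2(d-m+1)$, but $2$ suffices for the statement.
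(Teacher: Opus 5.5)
Your proposal is correct and matches the paper's proof: you use the zero-padded coefficient vector of $\Phi_m$ and its negative as two distinct aliasing frequencies of weight $1$ and read the floor of $2$ off \eqref{eq:aliasing}. Your side remarks also hold: primality of $m$ is not needed, and shifting the support of that vector raises the floor to $2(d-m+1)$.
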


\begin{proof}
Let $h^\star := (1,1,\dots,1,0,\dots,0) \in \mathbb Z^d$ be the
coefficient vector of $\Phi_m$ padded with zeros to length $d$ (so
$\rho(h^\star)^{-1} = 1$, since every nonzero entry equals $1$). By
\Cref{lem:cyclotomic}, $\Phi_m(t) \equiv 0 \pmod n$ identically, i.e.\
$h^\star \cdot \mathbf z(t) = 1 + t + \cdots + t^{m-1} \equiv 0 \pmod n$
for \emph{every} valid pair $(n,t)$, not merely below some threshold.
Since $h^\star \ne \mathbf 0$ (as $d \ge m > 0$), it contributes
$\rho(h^\star)^{-1} = 1$ to the sum \eqref{eq:aliasing}; so does $-h^\star$
(same congruence, same weight, since $\rho(-h) = \rho(h)$), and $h^\star
\ne -h^\star$. Both terms are always present, giving the stated floor.
\end{proof}

\begin{corollary}[Exact threshold]
\label{cor:threshold}
For a fixed-order subgroup generator with $m$ prime, the full lattice
$Q_n$ converges, at the explicit rate \eqref{eq:rate}, if $m \ge d+1$, and
provably fails to converge,  with an explicit, $n$-independent floor
$e^2(Q_n, \mathbf z(t)) \ge 2$,  if $m \le d$. The condition ``$m$ prime, $m \ge d+1$'' is
therefore not merely a safe sufficient margin; it is the exact threshold,
one unit inside the boundary $m = d$ at which convergence becomes
impossible.
\end{corollary}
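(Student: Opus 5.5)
The corollary is a direct assembly of \Cref{thm:rate} and \Cref{prop:floor}, so the plan is to split on the relation between $m$ and $d$ and cite one result in each case.

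\textbf{Case $m \ge d+1$.} Together with the standing hypotheses ($m$ prime, $\alpha>1$, and $d\ge 2$ as in \Cref{sec:rate-setting}), this is exactly the hypothesis $2\le d\le m-1$ of \Cref{thm:rate}. For every prime $n\ge n_0(d,m)$ with $n\equiv 1\pmod m$ and every $t$ of order $m$, that theorem gives \eqref{eq:rate}. Since $(\alpha-1)/(m-1)>0$, the bound tends to $0$ along this family of $n$. I would briefly note that such primes are infinite in number (Dirichlet's theorem), so ``$n\to\infty$'' is meaningful and convergence holds at the explicit rate. The finitely many $n<n_0(d,m)$ do not affect convergence.

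\textbf{Case $m\le d$.} This is the hypothesis $d\ge m$ of \Cref{prop:floor}, which gives $e^2(Q_n,\mathbf z(t))\ge 2$ for every admissible $(n,t)$. Convergence therefore fails, with an $n$-independent floor. One point to check is that \Cref{sec:rate-setting} was written with $d\le m-1$, whereas \Cref{prop:floor} is applied with $d\ge m$. Its proof uses only \Cref{lem:cyclotomic} (valid for any prime $n\equiv1\pmod m$ and $t$ of order $m$) together with the aliasing identity \eqref{eq:aliasing}. Neither depends on $d$, so the floor holds for these larger $d$ as well. The power-form vector $\mathbf z(t)$ then has repeated coordinates, but this is harmless.

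\textbf{Exactness.} The two cases together cover all $m$, since for integers ``$m\ge d+1$'' and ``$m\le d$'' are complementary. The dividing line between them is $m=d$ versus $m=d+1$, which is the claimed ``exact threshold''. There is no real obstacle here. The only step needing care is the bookkeeping above: the floor proposition must be shown valid outside the stated range $d\le m-1$ of the setting, and the edge case $d=1$ (excluded from \Cref{thm:rate}) can be ignored because the corollary concerns $d\ge2$.
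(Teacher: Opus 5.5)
Your proposal is correct and takes the same approach as the paper, which gives no separate proof and treats the corollary as the immediate combination of \Cref{thm:rate} for $m\ge d+1$ and \Cref{prop:floor} for $m\le d$. Your extra bookkeeping is sound and makes explicit what the paper leaves implicit: the floor argument uses only \Cref{lem:cyclotomic} and \eqref{eq:aliasing}, so it holds outside the range $d\le m-1$ nominally fixed in \Cref{sec:rate-setting}, and Dirichlet's theorem makes the family of admissible $n$ infinite. The only slip is cosmetic: $\mathbf z(t)$ has repeated coordinates only when $d>m$, not at $d=m$.
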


\section{A sharper constant via averaging over the candidate pool}
\label{sec:averaging}

\Cref{thm:rate} is proved \emph{uniformly}: the argument never distinguishes
among the $m-1$ order-$m$ generators $t_1,\dots,t_{m-1} \in
\mathbb F_n^\times$, so the same bound \eqref{eq:rate} holds for every one
of them simultaneously. This raises a natural question,  can \emph{averaging} the
worst-case error over these $m-1$ candidates certify a sharper bound for
at least one of them? A
naive combination of the tools already in hand does not work: bounding, by
Lagrange's theorem, how many of the $m-1$ candidates can share a single
bad $h$ by $d-1$, independent of $\|h\|_\infty$, gives a multiplicity
bound that cannot be combined with \Cref{thm:rate}'s size-dependent
threshold, and yields no decay in $n$ at all. What does work is a sharper
fact from algebraic number theory, specific to the cyclotomic structure
already underlying \Cref{lem:cyclotomic}.

\subsection{Complete splitting and the resultant as a field norm}

\begin{lemma}[Complete splitting; the resultant as a field norm]
\label{lem:splitting}
Let $\zeta_m := e^{2\pi i/m}$, $K := \mathbb Q(\zeta_m)$, and $\OK =
\mathbb Z[\zeta_m]$ its ring of integers ($m$ prime). Since $m \mid n-1$,
the rational prime $n$ splits completely in $\OK$,
\[
n\OK = \mathfrak p_1 \mathfrak p_2 \cdots \mathfrak p_{m-1},
\]
a product of $m-1$ distinct prime ideals, each of norm exactly $n$, in
canonical bijection with the $m-1$ order-$m$ elements $t_1,\dots,t_{m-1}
\in \mathbb F_n^\times$: $\OK/\mathfrak p_i \cong \mathbb F_n$ via $\zeta_m
\mapsto t_i$, so for $\beta \in \OK$, $\beta \equiv 0 \pmod{\mathfrak
p_i}$ iff the image of $\beta$ under $\zeta_m \mapsto t_i$ vanishes mod
$n$. Moreover, for $h \in \mathbb Z^d \setminus \{\mathbf 0\}$, $d \le
m-1$, setting $\beta := P_h(\zeta_m) \in \OK$,
\[
R(h) = N_{K/\mathbb Q}(\beta) := \prod_{\sigma \in \Gal(K/\mathbb Q)}
\sigma(\beta).
\]
\end{lemma}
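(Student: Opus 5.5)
The plan is to reduce everything to standard facts about cyclotomic fields and Dedekind's criterion, then handle the norm identity separately. I would take two standard inputs as given: $\OK=\mathbb Z[\zeta_m]$ and $\Gal(K/\mathbb Q)\cong(\mathbb Z/m\mathbb Z)^\times$ acting by $\sigma_k:\zeta_m\mapsto\zeta_m^k$. Since the ring of integers is monogenic, generated by $\zeta_m$, the Dedekind--Kummer theorem applies with no index obstruction. It says that if $\overline{\Phi_m}=\prod_i \overline{g_i}^{e_i}$ in $\mathbb F_n[x]$, then $n\OK=\prod_i \mathfrak p_i^{e_i}$ with $\mathfrak p_i=(n,g_i(\zeta_m))$ and residue degree $\deg g_i$.

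\textbf{Splitting.} By \Cref{lem:cyclotomic}, $\overline{\Phi_m}=\prod_{i=1}^{m-1}(x-t_i)$ with the $t_i$ distinct. Dedekind--Kummer then gives $n\OK=\prod_i\mathfrak p_i$ with $\mathfrak p_i=(n,\zeta_m-t_i)$. These are $m-1$ distinct primes, each with residue field $\mathbb F_n$, so each has norm $n$.

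\textbf{Residue maps.} To make the bijection explicit, I would write $\OK\cong\mathbb Z[x]/(\Phi_m)$. Then
\[
\OK/\mathfrak p_i\cong\mathbb Z[x]/(\Phi_m,n,x-t_i)\cong\mathbb F_n[x]/(\overline{\Phi_m},x-t_i)\cong\mathbb F_n,
\]
and the last isomorphism is evaluation at $t_i$, because $\overline{\Phi_m}(t_i)=0$. So the reduction map sends $\zeta_m\mapsto t_i$. For $\beta=Q(\zeta_m)$ with $Q\in\mathbb Z[x]$, this gives $\beta\in\mathfrak p_i$ if and only if $Q(t_i)\equiv0\pmod n$. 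The map $i\mapsto\mathfrak p_i$ is a bijection: the primes are pairwise distinct because the residue maps differ, since $t_i\neq t_j$.

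\textbf{Norm identity.} From the proof of \Cref{lem:resultant}, $R(h)=\prod_{k=1}^{m-1}P_h(\zeta_m^k)$. Because $P_h$ has integer coefficients, $P_h(\zeta_m^k)=\sigma_k(P_h(\zeta_m))=\sigma_k(\beta)$. Since $k$ runs over all of $(\mathbb Z/m\mathbb Z)^\times$ (here $m$ is prime), the product is exactly $N_{K/\mathbb Q}(\beta)$. The hypothesis $d\le m-1$ is needed only so that the formal-degree resultant formula of \Cref{lem:resultant} applies; it also makes $\beta\ne0$, though that is not required for the identity.

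\textbf{Main obstacle.} The only nontrivial input is the Dedekind--Kummer step, specifically knowing that $\OK=\mathbb Z[\zeta_m]$. I would cite this from \citet{washington1997introduction} rather than prove it. If a self-contained check is wanted, the splitting can be verified directly. Each $\mathfrak p_i=(n,\zeta_m-t_i)$ is the kernel of the surjection $\mathbb Z[\zeta_m]\to\mathbb F_n$, so it is maximal. The product $\prod_i\mathfrak p_i$ contains $\prod_i(\zeta_m-t_i)\equiv\Phi_m(\zeta_m)=0$ modulo $n$, so it is contained in $n\OK$. Comparing norms, $n^{m-1}=N(n\OK)$, forces equality.
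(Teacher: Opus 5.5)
Your proposal is correct and follows essentially the same route as the paper. Both arguments apply Dedekind--Kummer to the factorization $\overline{\Phi_m}=\prod_i(x-t_i)$ from \Cref{lem:cyclotomic} to obtain $\mathfrak p_i=(n,\zeta_m-t_i)$ with residue map $\zeta_m\mapsto t_i$, and then use the Galois action $\sigma_k:\zeta_m\mapsto\zeta_m^k$ to identify $\prod_{k=1}^{m-1}P_h(\zeta_m^k)$ with $N_{K/\mathbb Q}(\beta)$. Your explicit computation of $\OK/\mathfrak p_i$ via $\mathbb Z[x]/(\Phi_m,n,x-t_i)$ and your optional norm-comparison check of the splitting only fill in details that the paper leaves to the standard reference.
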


\begin{proof}
That $n$ splits completely in $\mathbb Q(\zeta_m)$ iff $n \equiv 1 \pmod
m$ is classical (see e.g.\ \citealt[Thm.~2.13]{washington1997introduction},
or Dedekind's factorization theorem applied to $\Phi_m$, whose reduction
mod $n$ we already showed in \Cref{lem:cyclotomic} splits into $m-1$
distinct linear factors,  one irreducible factor of degree $1$ per prime
above $n$, since $n$ is unramified, $\gcd(n,m) = 1$). The
Kummer--Dedekind correspondence identifies each prime $\mathfrak p_i$ with
the ideal $(n, \zeta_m - t_i)$, giving the stated residue map. For the
norm identity: $\Gal(K/\mathbb Q) \cong (\mathbb Z/m\mathbb Z)^\times$
acts via $\sigma_k : \zeta_m \mapsto \zeta_m^k$ for $k = 1,\dots,m-1$ (all
of $(\mathbb Z/m\mathbb Z)^\times$ since $m$ is prime), and $\sigma_k(\beta)
= \sigma_k(P_h(\zeta_m)) = P_h(\zeta_m^k)$ since $P_h$ has rational
coefficients. So $N_{K/\mathbb Q}(\beta) = \prod_{k=1}^{m-1}
P_h(\zeta_m^k)$, matching the product formula already used to define
$R(h)$ in \Cref{lem:resultant}.
\end{proof}

\begin{lemma}[Multiplicity forces a higher power of $n$]
\label{lem:multiplicity}
For $h \in \mathbb Z^d \setminus \{\mathbf 0\}$, $d \le m-1$, define
\[
N_h := \#\{ i \in \{1,\dots,m-1\} : h \cdot \mathbf z(t_i) \equiv 0
\pmod n \}.
\]
Then $n^{N_h} \mid R(h)$.
\end{lemma}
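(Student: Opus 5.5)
The plan is to pass from the divisibility condition $h\cdot\mathbf z(t_i)\equiv 0 \pmod n$, one generator at a time, to ideal membership in $\OK$, and then to combine the $N_h$ conditions via unique factorization of ideals. Set $\beta := P_h(\zeta_m)\in\OK$, which is nonzero by \Cref{lem:resultant}(a). Let $I := \{i : h\cdot\mathbf z(t_i)\equiv 0 \pmod n\}$, so that $|I| = N_h$.

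\textbf{Step 1 (ideal membership).} For each $i\in I$, the image of $\beta$ under the residue map $\OK\to\OK/\mathfrak p_i\cong\mathbb F_n$, $\zeta_m\mapsto t_i$, of \Cref{lem:splitting} is $\overline{P_h}(t_i) = h\cdot\mathbf z(t_i) \bmod n = 0$. Hence $\beta\in\mathfrak p_i$, that is, $\mathfrak p_i \mid \beta\OK$.

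\textbf{Step 2 (combine the primes).} The $\mathfrak p_i$ are pairwise distinct maximal ideals, so by unique factorization in the Dedekind domain $\OK$, or simply because they are pairwise coprime, we get
\[
\prod_{i\in I}\mathfrak p_i \;\Big|\; \beta\OK .
\]
Write $\beta\OK = \mathfrak a\prod_{i\in I}\mathfrak p_i$ with $\mathfrak a$ an integral ideal.

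\textbf{Step 3 (take norms).} The absolute ideal norm is multiplicative, and for a principal ideal it satisfies $N(\beta\OK) = |N_{K/\mathbb Q}(\beta)|$. Each $\mathfrak p_i$ has norm $n$ by \Cref{lem:splitting}. Therefore
\[
|R(h)| = |N_{K/\mathbb Q}(\beta)| = N(\mathfrak a)\, n^{N_h},
\]
where the first equality is the norm identity of \Cref{lem:splitting} and $N(\mathfrak a)$ is a positive integer. This gives $n^{N_h}\mid R(h)$. If $N_h = 0$, the statement is trivial.

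\textbf{Main obstacle.} The only delicate point is justifying the standard facts used in Steps 2 and 3: that $N(\beta\OK) = |N_{K/\mathbb Q}(\beta)|$, and that the ideal norm is multiplicative. Both are textbook results (e.g.\ \citealt{washington1997introduction}), and I would cite them rather than reprove them.

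An elementary alternative would avoid ideal norms entirely. Use the identification $\OK/n\OK \cong \prod_i \OK/\mathfrak p_i$ from CRT, and check that $\prod_{i\in I}\mathfrak p_i$ divides $\beta$. Then note that applying the Galois automorphisms permutes the $\mathfrak p_i$, so that $N_{K/\mathbb Q}(\beta) = \prod_\sigma \sigma(\beta)$ lies in $\prod_{\sigma}\sigma\big(\prod_{i\in I}\mathfrak p_i\big)\cap\mathbb Z$. Since the Galois group acts simply transitively on the $m-1$ primes, this product contains each $\mathfrak p_j$ with multiplicity $N_h$, so it equals $(n\OK)^{N_h}$. Intersecting with $\mathbb Z$ gives $n^{N_h}\mathbb Z$, since $n^{N_h}\OK\cap\mathbb Z = n^{N_h}\mathbb Z$. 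I would present this Galois-orbit version if I wanted the argument fully self-contained.
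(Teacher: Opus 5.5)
Your proof is correct and follows the paper's argument essentially step for step. You show $\beta = P_h(\zeta_m)$ lies in each $\mathfrak p_i$ with $i \in I$, deduce that the product of these distinct primes divides $(\beta)$ by unique factorization, and finish with multiplicativity of the ideal norm together with $N(\beta\OK) = |N_{K/\mathbb Q}(\beta)| = |R(h)|$. Your Galois-orbit alternative is also valid, since complete splitting makes the action on the $m-1$ primes above $n$ simply transitive, but it is not needed.
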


\begin{proof}
Let $\beta := P_h(\zeta_m)$ as in \Cref{lem:splitting}. By construction,
$h \cdot \mathbf z(t_i) \equiv 0 \pmod n$ iff $\beta \equiv 0 \pmod{
\mathfrak p_i}$. If this holds for $N_h$ distinct indices, $\beta$ lies in
each of $N_h$ distinct (hence pairwise coprime) prime ideals of $\OK$; by
unique factorization of ideals in the Dedekind domain $\OK$, their
product, an ideal of norm $n^{N_h}$,  divides the ideal $(\beta)$.
Norms are multiplicative over ideal divisibility, so $n^{N_h} \mid
N_{K/\mathbb Q}(\beta) = R(h)$ (in absolute value).
\end{proof}

This is a genuine strengthening of \Cref{lem:resultant}(c), which only
gives $n^1 \mid R(h)$: the more of the $m-1$ candidates share a bad $h$,
the more of $n$'s ``budget'' inside $R(h)$'s bounded size gets used up.

\Cref{tab:multiplicity} illustrates \Cref{lem:multiplicity} at $m=5$,
$d=3$: for each prime $n$, we searched directly for the shortest $h$ that
is simultaneously bad ($h\cdot\mathbf z(t_i)\equiv 0\pmod n$) for two
order-$5$ elements $t_i$ that are \emph{not} an inverse pair. (An inverse
pair $t,t^{-1}$ admits an unrelated, cheaper shortcut, replacing $t$ by
$t^{-1}$ only reverses the $d$ coordinates of $\mathbf z(t)$, so
$e^2(Q_n,\mathbf z(t)) = e^2(Q_n,\mathbf z(t^{-1}))$ exactly, making a
shared bad $h$ far easier to arrange between such a pair than between two
unrelated candidates.) We verified independently, by direct computation
of both the order-$5$ elements mod $n$ and the resultant $R(h)$, that
$R(h)/n^2$ is an exact integer in every row, confirming $n^2 \mid R(h)$
(not merely $n^1$) whenever $N_h = 2$, exactly as \Cref{lem:multiplicity}
predicts.

\begin{table}[t]
\centering
\caption{Verification of \Cref{lem:multiplicity} at $m=5$, $d=3$: the
shortest $h$ simultaneously bad for two non-inverse order-$5$ elements,
and the resulting resultant.}
\label{tab:multiplicity}
\begin{tabular}{@{}rrrr@{}}
\toprule
$n$ & shortest simultaneous-bad $h$ & $R(h)$ & $R(h)/n^2$ \\
\midrule
$41$ & $(-7,2,-3)$ & $1{,}681$ & $1$ \\
$131$ & $(-7,2,-13)$ & $17{,}161$ & $1$ \\
$401$ & $(-27,15,-19)$ & $160{,}801$ & $1$ \\
$1231$ & $(43,-32,-84)$ & $92{,}437{,}021$ & $61$ \\
\bottomrule
\end{tabular}
\end{table}

\subsection{A sharper average, via a log-weighted tail bound}

Converting \Cref{lem:multiplicity} into a bound on the average worst-case
error over the $m-1$ candidates requires a version of \Cref{lem:tail}
that also tracks the multiplicity $N_h$, which \Cref{lem:multiplicity}
controls only logarithmically (via \Cref{lem:resultant}(b)). This forces
a log-weighted tail bound.

\begin{lemma}[Log-weighted tail bound, explicit constant]
\label{lem:log-tail}
For $\alpha > 1$, $d \ge 2$, $T \ge 3$,
\begin{equation}
\sum_{\|h\|_\infty > T} \rho(h)^{-1} \ln(d\|h\|_\infty) \;\le\;
\lambda(d,\alpha)\, T^{-(\alpha-1)} \ln(dT),
\label{eq:log-tail-explicit}
\end{equation}
where
\[
\lambda(d,\alpha) := \frac{2d}{\alpha-1}\big(1+2\zeta(\alpha)\big)^{d-2}
\left[\frac{\alpha\big(1+2\zeta(\alpha)\big)}{\alpha-1} +
2(d-1)\big({-\zeta'(\alpha)}\big)\right].
\]
\end{lemma}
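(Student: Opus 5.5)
The plan is to mimic the union-bound proof of \Cref{lem:tail}, after first splitting the logarithmic weight so that it factorizes across coordinates. Fix $h$ with $\|h\|_\infty > T$ and, as in \Cref{lem:tail}, charge $h$ to some coordinate $i$ with $|h_i| > T$. The key pointwise inequality I will use is
\[
\|h\|_\infty \le |h_i| \prod_{j\ne i} \max(1,|h_j|),
\]
valid because $|h_i| \ge 1$. Taking logarithms gives
\[
\ln(d\|h\|_\infty) \le \ln(d|h_i|) + \sum_{j\ne i} \ln\max(1,|h_j|).
\]
Since $\rho$ is a product over coordinates, each resulting term is a product of one-dimensional sums. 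Summing over $i$ bounds the left side of \eqref{eq:log-tail-explicit} by $A + B$, where
\[
A := d\,(1+2\zeta(\alpha))^{d-1}\cdot 2\sum_{k>T} k^{-\alpha}\ln(dk),
\qquad
B := d(d-1)\,S_T\,(1+2\zeta(\alpha))^{d-2}\cdot \sum_{k\in\mathbb Z} \max(1,|k|)^{-\alpha}\ln\max(1,|k|).
\]
Here $S_T \le \frac{2}{\alpha-1}T^{1-\alpha}$ is the quantity from \Cref{lem:tail}.

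For $B$, the last one-dimensional sum equals $2\sum_{k\ge 2} k^{-\alpha}\ln k = 2(-\zeta'(\alpha))$, because the $k=0,\pm1$ terms vanish. This gives
\[
B \le \frac{2d}{\alpha-1}(1+2\zeta(\alpha))^{d-2}\cdot 2(d-1)(-\zeta'(\alpha))\, T^{1-\alpha}.
\]
To insert the missing factor $\ln(dT)$ I use $d \ge 2$ and $T \ge 3$, so $dT \ge 6$ and $\ln(dT) \ge 1$. This is exactly the second summand in the bracket of $\lambda(d,\alpha)$.

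For $A$, I will use integral comparison with $f(x) := x^{-\alpha}\ln(dx)$. This requires $f$ to be decreasing on $[T,\infty)$, and that is the one point that needs care. Since $f'(x) = x^{-\alpha-1}\big(1 - \alpha\ln(dx)\big)$, $f$ is decreasing as soon as $\ln(dx) > 1/\alpha$, which holds because $dx \ge 6$ and $1/\alpha < 1$. Hence $\sum_{k>T} f(k) \le \int_T^\infty f$. Integrating by parts,
\[
\int_T^\infty x^{-\alpha}\ln(dx)\,dx = \frac{T^{1-\alpha}\ln(dT)}{\alpha-1} + \frac{T^{1-\alpha}}{(\alpha-1)^2}.
\]
Using $\ln(dT) \ge 1$ again, this is at most
\[
\frac{T^{1-\alpha}\ln(dT)}{\alpha-1}\Big(1+\frac{1}{\alpha-1}\Big) = \frac{\alpha\, T^{1-\alpha}\ln(dT)}{(\alpha-1)^2}.
\]
Therefore
\[
A \le \frac{2d}{\alpha-1}(1+2\zeta(\alpha))^{d-2}\cdot\frac{\alpha(1+2\zeta(\alpha))}{\alpha-1}\,T^{1-\alpha}\ln(dT),
\]
which is the first summand of the bracket. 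Adding $A$ and $B$ yields \eqref{eq:log-tail-explicit} with exactly the stated $\lambda(d,\alpha)$.

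The main obstacle is not the computation but getting the decomposition of $\ln(d\|h\|_\infty)$ right, so that it respects the product structure of $\rho$ and produces precisely $(-\zeta')$ and the exponent $d-2$. The rest consists of the two uses of $\ln(dT)\ge 1$ and the monotonicity check for the integral comparison, which is where the hypotheses $d\ge2$ and $T\ge3$ enter.
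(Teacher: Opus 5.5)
Your proof is the paper's argument with slightly different bookkeeping. The paper first writes $\ln(d\|h\|_\infty)\le\ln d+\sum_{j}\ln^+|h_j|$, bounds the $\ln d$ piece separately by \Cref{lem:tail}, and then union-bounds each $\ln^+|h_j|$ term over the dominant coordinate. You union-bound first and keep $\ln d$ with the dominant coordinate. After writing $\ln(dk)=\ln d+\ln k$, your $A$ and $B$ coincide term by term with the paper's sums, and your version makes the paper's final absorption into $\lambda(d,\alpha)$ fully explicit.

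One caveat applies equally to both proofs. The comparisons $\sum_{k>T}f(k)\le\int_T^\infty f(x)\,dx$ and $S_T\le\frac{2}{\alpha-1}T^{1-\alpha}$ hold only for integer $T$. Otherwise the sum starts at $\lfloor T\rfloor+1<T+1$, and the correct bound involves $\lfloor T\rfloor$ instead of $T$. The inequality genuinely fails without integrality. For $d=2$, $\alpha=10$, $T=3.9$, the twelve vectors with $\|h\|_\infty=4$ and the other coordinate in $\{0,\pm1\}$ already contribute $12\cdot4^{-10}\ln 8\approx2.4\times10^{-5}$, while the right-hand side is about $1.5\times10^{-5}$. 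So you should state the lemma for integers $T\ge3$. That is all \Cref{thm:average} needs, since $T_n$ is defined as a floor.
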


\begin{proof}
Using $\|h\|_\infty \le \prod_j \max(1,|h_j|)$ (each factor $\ge 1$),
$\ln(d\|h\|_\infty) \le \ln d + \sum_{j=1}^d \ln^+ |h_j|$. Bound
$\sum_{\|h\|_\infty>T} \rho(h)^{-1} \ln d$ by \Cref{lem:tail}. For each
$j$, bound $A_j := \sum_{\|h\|_\infty>T} \rho(h)^{-1} \ln^+|h_j|$ by the
union bound over the dominant coordinate exactly as in \Cref{lem:tail}'s
proof: the ``dominant coordinate is $j$ itself'' case uses the tail
estimate $2\sum_{k>T} k^{-\alpha}\ln k \le \frac{2T^{1-\alpha}}{\alpha-1}
\big(\ln T + \frac1{\alpha-1}\big)$ (integration by parts; $x^{-\alpha}\ln
x$ is eventually decreasing, valid once $T \ge 3 > e^{1/\alpha}$ for every
$\alpha>1$), contributing a factor $(1+2\zeta(\alpha))^{d-1}$ from the
other $d-1$ unweighted coordinates; the ``dominant coordinate is some $i
\ne j$'' case ($d-1$ such $i$) uses $2\sum_{k>T} k^{-\alpha} \le
\frac2{\alpha-1}T^{1-\alpha}$ (\Cref{lem:tail}) times the finite constant
$L(\alpha) := \sum_k \max(1,|k|)^{-\alpha}\ln^+|k| = 2\sum_{k\ge1}
k^{-\alpha}\ln k = -2\zeta'(\alpha)$ in place of $j$'s own factor, times
$(1+2\zeta(\alpha))^{d-2}$ from the remaining coordinates. Summing $A_j$
over $j=1,\dots,d$ (all equal by symmetry) and adding the $\ln d$ piece,
then using $\ln(dT) \ge \ln 3 > 1$ (from $T \ge 3$) to absorb every bare
additive constant, the $1/(\alpha-1)$ inside the integration-by-parts
term, and the implicit factor needed to fold the $\ln d$ piece in, into
the $\ln(dT)$-carrying term, collects everything into the stated closed
form.
\end{proof}

\subsection{Existence of a sharper-constant generator}

\begin{theorem}[Existence of a sharper-constant generator]
\label{thm:average}
Let $m$ be prime, $2 \le d \le m-1$, $\alpha > 1$, and let $t_1,\dots,
t_{m-1}$ be the $m-1$ order-$m$ elements of $\mathbb F_n^\times$. Then, as
$n \to \infty$ along primes $n \equiv 1 \pmod m$,
\begin{equation}
\frac1{m-1} \sum_{i=1}^{m-1} e^2(Q_n, \mathbf z(t_i)) \;\le\;
\frac{\lambda(d,\alpha)(4d)^{\alpha-1}}{m-1}\, n^{-(\alpha-1)/(m-1)}
\big(1 + o(1)\big).
\label{eq:average-rate}
\end{equation}
In particular, there exists $t^\star \in \{t_1,\dots,t_{m-1}\}$ (at
least one, possibly $n$-dependent) with $e^2(Q_n, \mathbf z(t^\star))$ at
most the average, a factor $\Theta(m-1)$ below \Cref{thm:rate}'s bound
\eqref{eq:rate}, which holds uniformly for \emph{every} choice.
\end{theorem}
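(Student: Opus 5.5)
The plan is to exchange the order of summation in the average and let \Cref{lem:multiplicity} control how often each frequency is counted. By \eqref{eq:aliasing},
\[
\sum_{i=1}^{m-1} e^2(Q_n,\mathbf z(t_i)) = \sum_{h\ne\mathbf 0} N_h\,\rho(h)^{-1},
\]
where $N_h$ is the number of candidates for which $h$ aliases, exactly as defined in \Cref{lem:multiplicity}. Only $h$ with $N_h\ge 1$ contribute. For each such $h$, \Cref{lem:resultant}(a) gives $R(h)\ne 0$, and \Cref{lem:multiplicity} gives $n^{N_h}\mid R(h)$. Combining with \Cref{lem:resultant}(b),
\[
n^{N_h}\le |R(h)|\le (d\|h\|_\infty)^{m-1},
\]
so
\[
N_h \le \frac{(m-1)\ln(d\|h\|_\infty)}{\ln n}.
\]

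This turns the multiplicity into a logarithmic weight. Two facts restrict the range of $h$:
\begin{itemize}
\item As in the proof of \Cref{thm:rate}, every $h$ with $N_h\ge1$ has $\|h\|_\infty>T_n:=\lfloor n^{1/(m-1)}/(2d)\rfloor$.
\item For $n\ge n_0(d,m)$ we have $T_n\ge n^{1/(m-1)}/(4d)$, and $T_n\ge 3$ once $n$ is large.
\end{itemize}
Therefore
\[
\frac1{m-1}\sum_i e^2(Q_n,\mathbf z(t_i))\le \frac{1}{\ln n}\sum_{\|h\|_\infty>T_n}\rho(h)^{-1}\ln(d\|h\|_\infty)\le \frac{\lambda(d,\alpha)}{\ln n}\,T_n^{-(\alpha-1)}\ln(dT_n),
\]
where the last step is \Cref{lem:log-tail}.

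It remains to substitute $T_n$. We have $T_n^{-(\alpha-1)}\le (4d)^{\alpha-1}n^{-(\alpha-1)/(m-1)}$. Also $dT_n\le n^{1/(m-1)}/2$, so $\ln(dT_n)\le \frac{\ln n}{m-1}-\ln 2$. Hence
\[
\frac{\ln(dT_n)}{\ln n}\le \frac1{m-1}.
\]
This produces exactly the factor $1/(m-1)$ in \eqref{eq:average-rate}, and even without needing the $(1+o(1))$ slack. I would keep the $o(1)$ only to absorb the requirement $T_n\ge3$ and the floor in $T_n$, that is, the restriction to sufficiently large $n$. The existence of $t^\star$ then follows because the minimum over the candidates is at most the average.

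The main obstacle I expect is bookkeeping in the log-weighted step. The bound $N_h\le (m-1)\ln(d\|h\|_\infty)/\ln n$ loses nothing only because $\ln(dT_n)$ is essentially $\ln n/(m-1)$ at the threshold, so the two factors of $m-1$ cancel and leave $1/(m-1)$ overall. One has to check that the upper bound on $\ln(dT_n)$ (as opposed to a lower bound) is the direction needed. It is, since $T_n$ enters \Cref{lem:log-tail} through $T^{-(\alpha-1)}\ln(dT)$. That expression is decreasing in $T$ for large $T$, so replacing $T_n$ by its lower bound in the power and by its upper bound in the log is legitimate. If that monotonicity caused trouble, I would instead apply \Cref{lem:log-tail} at $T=T_n$ directly and bound each factor separately, as above.

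The Θ(m−1) comparison with \eqref{eq:rate} follows because $\lambda(d,\alpha)$ and $\kappa(d,\alpha)$ depend only on $d$ and $\alpha$, not on $m$.
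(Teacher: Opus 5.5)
Your proposal is correct and follows the paper's proof essentially step for step. You exchange the sums to get $\sum_{h\ne\mathbf 0}\rho(h)^{-1}N_h/(m-1)$, discard $\|h\|_\infty\le T_n$ by the \Cref{thm:rate} argument, turn $n^{N_h}\mid R(h)$ together with \Cref{lem:resultant}(b) into $N_h\le (m-1)\ln(d\|h\|_\infty)/\ln n$, and finish with \Cref{lem:log-tail} at $T=T_n$. Your final step is slightly cleaner than the paper's asymptotic substitution: you bound the two positive factors separately via $T_n\ge n^{1/(m-1)}/(4d)$ and $\ln(dT_n)\le \ln n/(m-1)-\ln 2$, which needs no monotonicity and removes the $(1+o(1))$ altogether for every prime $n\ge(6d)^{m-1}$ with $n\equiv1\pmod m$.
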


\begin{proof}
Averaging over the $m-1$ candidates and exchanging summation order,
\[
\frac1{m-1} \sum_{i=1}^{m-1} e^2(Q_n, \mathbf z(t_i)) = \sum_{h \ne 0}
\rho(h)^{-1} \frac{N_h}{m-1}.
\]
For $\|h\|_\infty \le T_n$ (the \Cref{thm:rate} threshold), $N_h = 0$ by
\Cref{lem:resultant} exactly as before, so these terms vanish. For
$\|h\|_\infty > T_n$, \Cref{lem:multiplicity} combined with
\Cref{lem:resultant}(b) gives $n^{N_h} \le |R(h)| \le (d\|h\|_\infty)^{m-1}$,
i.e.\ $N_h \le (m-1)\ln(d\|h\|_\infty)/\ln n$. So
\[
\sum_{\|h\|_\infty > T_n} \rho(h)^{-1} \frac{N_h}{m-1} \le \frac1{\ln n}
\sum_{\|h\|_\infty > T_n} \rho(h)^{-1} \ln(d\|h\|_\infty)
\overset{\text{Lem.~\ref{lem:log-tail}}}{\le} \frac{\lambda(d,\alpha)}{\ln
n}\, T_n^{-(\alpha-1)} \ln(dT_n).
\]
With $T_n \sim n^{1/(m-1)}/(4d)$ (\Cref{thm:rate}), $T_n^{-(\alpha-1)}
\sim (4d)^{\alpha-1} n^{-(\alpha-1)/(m-1)}$ and $\ln(dT_n)/\ln n \to
1/(m-1)$, giving \eqref{eq:average-rate}.
\end{proof}

\section{Experiments}
\label{sec:experiments}

\Cref{sec:fast} and \Cref{sec:rate} make a structural argument for the
fixed-order power-form generator: it is fast to turn into an elementwise
feature map, and, despite drawing from a pool of only $\varphi(m)=m-1$
candidates rather than one that grows with $n$, it still converges at an
explicit rate. This section asks the complementary empirical question: on
the downstream task that motivates the construction in the first place,  does the
resulting feature map actually estimate the target kernel more accurately
than the constructions practitioners use today? We evaluate this on two
tasks that instantiate $\Psi(X)w$-style expansion maps directly: a
synthetic estimator of the exponential inner-product kernel
$\exp(\mathbf x^\top\mathbf y)$ (\Cref{sec:exp-synthetic}), and a
self-normalized softmax-attention estimator on nine real embedding
datasets (\Cref{sec:exp-realdata}), the latter being exactly the
large-language-model attention application named in
\Cref{sec:intro}.  Both compare our subgroup rank-1
lattice rule  against four feature-map
constructions in standard use, namely,  i.i.d.\ Gaussian random features,
Orthogonal Random Features \citep[ORF;][]{yu2016orthogonal}, and scrambled
Sobol' and Halton low-discrepancy sequences,  all mapped through the same
elementwise nonlinearity so that any difference in accuracy is attributable
to the point set alone. 

\paragraph{Method and generator selection, common to both experiments.}
For a target dimension $d$, the subgroup order is fixed at $m = $ the
smallest prime with $m \ge 2d-1$ (so $m \ge d+1$ whenever $d \ge 2$,
satisfying \Cref{def:fixed-order} and the threshold of \Cref{thm:rate}; the lattice size
$n$ is then the smallest prime with $n \equiv 1 \pmod m$ at or above a
target $n_0$. Because $m$ is prime, every nonzero residue generates the
order-$m$ subgroup $H$ (\Cref{ex:subgroup}), so all $\varphi(m)=m-1$
admissible power-form generators are available; for each $(d,m,n)$
configuration we search this entire pool exhaustively, evaluating every
candidate's empirical error on a hold-out batch via the coset-FFT
elementwise transform of \Cref{sec:fast} and caching the arg-min to disk
for reuse. This is the same pool \Cref{thm:rate} bounds \emph{uniformly}
and \Cref{thm:average} shows is, \emph{on average}, a factor $\Theta(m-1)$
below that uniform bound; the experiments below pick the best of the pool based 
on the hold-out batch rather than relying on either guarantee directly, so
the results are a best-case realization of what a $\varphi(m)$-sized
candidate pool can deliver, not a test of \Cref{thm:rate}'s bound itself.
This search is the one place a fixed subgroup order costs something beyond
the $O(n\log m)$ evaluation cost of \Cref{sec:fast}: it is $O(m)$
candidate evaluations, each itself dominated by FFT work whose cost can
vary by close to an order of magnitude between otherwise similarly sized
primes $m$, depending on which prime-length FFT algorithm the underlying
FFT implementation falls back to; in the synthetic sweep below it ranged
from under a second at small $d$ to $3.02$ hours for the single largest
configuration tested ($d=2048$, $n\approx 2\times10^7$)  on a single GPU. Because the
result depends only on $(d,m,n)$ and not on the downstream task, it needs
to be paid once and cached, after which evaluation-only runs incur no
search cost at all.

\subsection{Synthetic kernel estimation}
\label{sec:exp-synthetic}

In this subsection, we evaluate the subgroup rank-1 lattice  by comparing the approximation of the exponential inner-product kernel studied in ~\citep{choromanski2021rethinking}: 
\begin{equation}
\label{inneProductApp}
    \exp(\mathbf x^\top\mathbf y) = \exp(-\frac{\|\mathbf x \|^2 + \|\mathbf y \|^2}{2}) \mathbb{E}_{\boldsymbol{w} \sim \mathcal{N}(\boldsymbol{0},\boldsymbol{I}_d)} \text{cosh} (\boldsymbol{w}^\top (\mathbf x + \mathbf y))
\end{equation}
The approximation in (\ref{inneProductApp}) needs constructing feature maps by  samples from Gaussian distribution  $\mathcal{N}(\boldsymbol{0},\boldsymbol{I}_d)$. Thus, for the Quasi-Monte Carlo baselines (Sobol' and Halton),  we construct the Quasi Gaussian samples from transforming the QMC points in cube $[0,1)^d$ to the Gaussian via element-wise inverse  cumulative density function  of standard Gaussian, i.e., $\Phi^{-1}(\cdot)$. For subgroup rank-1 lattice, we construct Quasi Gaussian samples  via element-wise  mapping:
\begin{equation}
    \Psi(X) = \Phi^{-1}(\{X+0.5\})  ,
\end{equation}
where $\{\cdot\}$ denotes the element-wise operation that takes the  fractional part. Notable, $\Psi(0) = \Phi^{-1}(\{0+0.5\}) = 0$.

For each trial, vectors $\mathbf x,\mathbf y\in\R^d$ are drawn i.i.d.\
$\mathrm{Uniform}[0,1]^d$ and rescaled by a fixed global norm constant; the
ground truth is $\mathrm{GT}=\exp(\mathbf x^\top\mathbf y)$, each method
produces an estimate $\widehat{\mathrm{GT}}$ from $n$ feature samples, and the reported error is the
mean squared relative error $\mathrm{RLerr}=\mathrm{mean}\big((|\mathrm{GT}
-\widehat{\mathrm{GT}}|/\mathrm{GT})^2\big)$, averaged over a batch of trial
pairs and further averaged, with standard deviation, over independent
repeats. We sweep $d\in\{8,16,32,64,128,256,512,1024,2048\}$ and, for each
$d$, six target ratios $n_0/d $  such that  $n_0/d \in\{100,200,1000,2000,10000,20000\}$ with
$n_0=(n_0/d)\cdot d$, for $9\times6=54$ configurations in total; the number of repeats is fixed at $n\_run=5$ and the trial-batch size at $BN=2000$ for \emph{every} configuration,  so that all $54$ configurations  are evaluated with the same statistical power. The index \texttt{best\_jj} is reused from the cached exhaustive search over $BN\_search=512$ independent trials described above rather than recomputed.
The evaluation-only time of the full sweep was $100{,}404.1\,\mathrm s$ ($\approx27.9$ hours) on a single GPU,
with the single largest configuration ($d=2048$, $n_0/d=20{,}000$) alone accounting for $43{,}605.1\,\mathrm s$ ($\approx12.1$ hours) of that total. Almost all of this time is spent in the baselines: summed over all $54$ configurations, ORF took $88{,}581\,\mathrm s$, Halton $9{,}138\,\mathrm s$, Sobol $2{,}423\,\mathrm s$ and Gaussian $104\,\mathrm s$, whereas the subgroup rank-1 lattice took only $161\,\mathrm s$ in total.

\subsubsection{Empirical Error Evaluation}

\paragraph{Headline result.} The subgroup lattice attains the lowest mean
error in $49$ of the $54$ configurations ($91\%$); \Cref{tab:synth-exceptions}
lists every exception, and in each of them the most accurate method is one of
the two QMC baselines (Sobol or Halton).
Four of the five exceptions are concentrated at the smallest tested
dimensions ($d=8,16,32$) and only at the largest tested ratios; the fifth
($d=1024$, $n_0/d=1000$) is a near-tie ($1.5\%$ apart) discussed separately
below. For every $d\ge64$ other than that single $d=1024$ point, the
subgroup lattice is the best method at \emph{every} $n$ tested.

\begin{table}[h]
\centering
\small
\caption{Synthetic $\exp(\mathbf x^\top\mathbf y)$ estimation: the five (of
$54$) configurations in which the subgroup lattice was not the most
accurate method. Four of them are at the smallest tested dimensions and
the largest tested $n$; the fifth is an isolated near-tie at $d=1024$.}
\label{tab:synth-exceptions}
\resizebox{\textwidth}{!}{%
\begin{tabular}{@{}rrrlrr@{}}
\toprule
$d$ & $n_0/d$ & $n$ & winning method & subgroup RLerr & winner's RLerr \\
\midrule
8    & 10{,}000 & 80{,}071    & Halton & $1.083\times10^{-4}$ & $9.856\times10^{-5}$ (within noise) \\
8    & 20{,}000 & 160{,}073   & Sobol & $7.927\times10^{-5}$ & $3.455\times10^{-5}$ \\
16   & 20{,}000 & 320{,}107   & Sobol & $3.982\times10^{-5}$ & $2.428\times10^{-5}$ \\
32   & 10{,}000 & 321{,}199   & Sobol & $8.916\times10^{-5}$ & $2.895\times10^{-5}$ \\
1024 & 1{,}000  & 1{,}047{,}031 & Halton & $1.470\times10^{-5}$ & $1.448\times10^{-5}$ (within noise, $1.5\%$ apart) \\
\bottomrule
\end{tabular}
}
\end{table}

\begin{table}[t]
\centering
\small
\caption{Advantage of the subgroup lattice (best-of-\{Gaussian, ORF,
Sobol, Halton\} error $\div$ subgroup error; $>1$ favors the subgroup
lattice) at the smallest and the largest tested $n$ for each $d$, with the
closest competing baseline named in parentheses.}
\label{tab:synth-advantage}
\resizebox{\textwidth}{!}{%
\begin{tabular}{@{}rrrrr@{}}
\toprule
$d$ & subgroup RLerr @ smallest $n$ & advantage & subgroup RLerr @ largest $n$ & advantage \\
\midrule
8    & $2.265\times10^{-3}$ & $3.2\times$ (Sobol)     & $7.927\times10^{-5}$ & $0.4\times$ (Sobol wins) \\
16   & $2.329\times10^{-3}$ & $4.7\times$ (Gaussian)  & $3.982\times10^{-5}$ & $0.6\times$ (Sobol wins) \\
32   & $6.979\times10^{-4}$ & $8.4\times$ (Halton)    & $6.292\times10^{-6}$ & $3.4\times$ (Halton) \\
64   & $3.972\times10^{-4}$ & $4.2\times$ (Halton)    & $3.610\times10^{-6}$ & $2.6\times$ (Sobol) \\
128  & $5.628\times10^{-4}$ & $1.8\times$ (ORF)       & $2.187\times10^{-6}$ & $2.0\times$ (Halton) \\
256  & $1.969\times10^{-4}$ & $2.0\times$ (Sobol)     & $7.746\times10^{-7}$ & $4.9\times$ (Gaussian) \\
512  & $4.067\times10^{-5}$ & $7.5\times$ (Sobol)     & $4.498\times10^{-7}$ & $3.4\times$ (Sobol) \\
1024 & $2.551\times10^{-5}$ & $6.8\times$ (Halton)    & $1.816\times10^{-7}$ & $4.7\times$ (Sobol) \\
2048 & $9.006\times10^{-6}$ & $13.5\times$ (Gaussian) & $8.951\times10^{-8}$ & $5.5\times$ (Halton) \\
\bottomrule
\end{tabular}
}
\end{table}

\begin{figure}[t]
\centering
\includegraphics[width=0.9\linewidth]{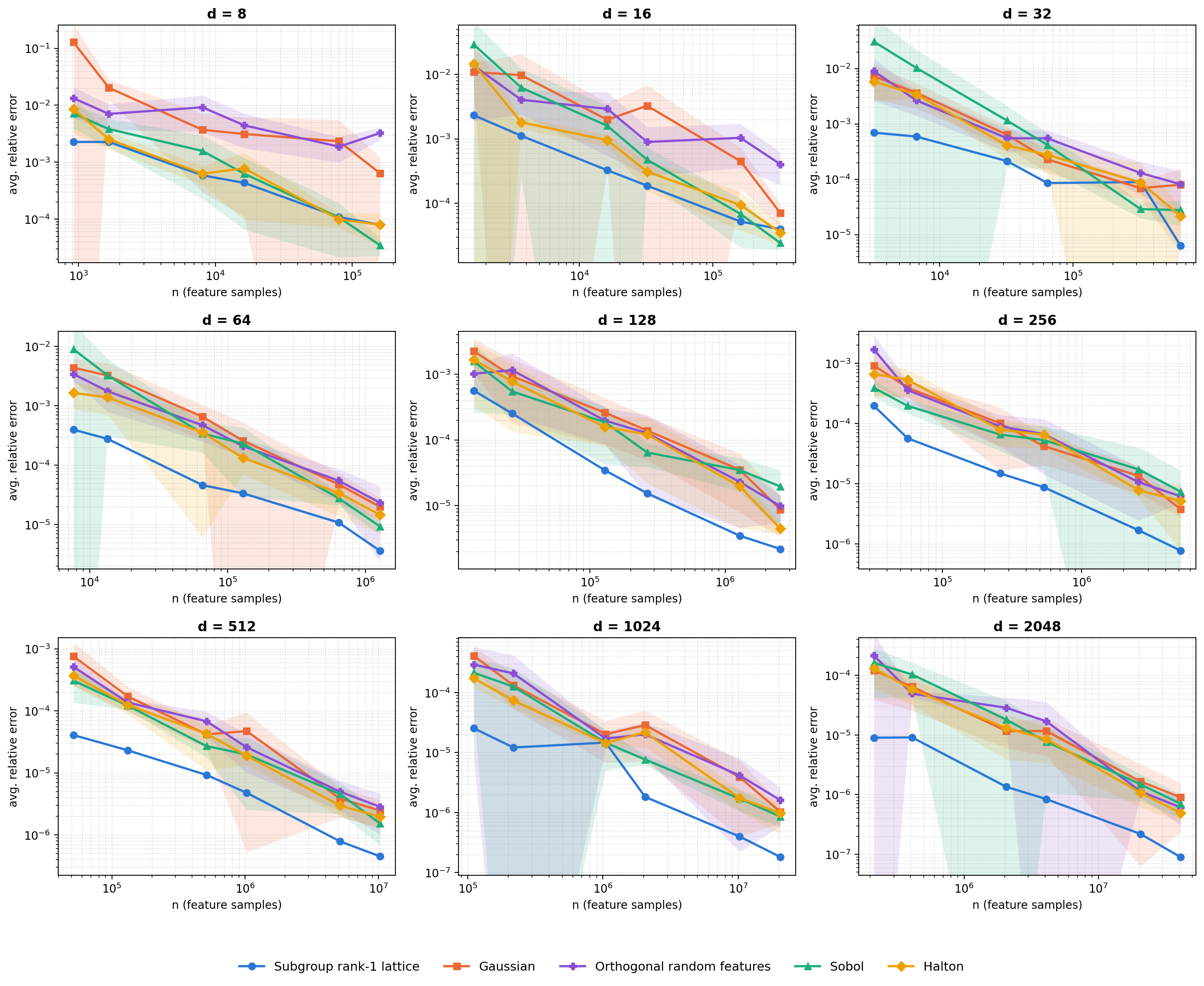}
\caption{Subgroup rank-1 lattice vs.\ Gaussian / ORF / Sobol / Halton:
mean relative error of $\exp(\mathbf x^\top\mathbf y)$ estimation, one
panel per dimension $d\in\{8,\dots,2048\}$, log-log axes, shaded bands
$\pm 1$ standard deviation over repeats.}
\label{fig:synth-sweep}
\end{figure}

\paragraph{Magnitude and scaling of the advantage.} \Cref{tab:synth-advantage}
and \Cref{fig:synth-sweep} show that, at the smallest tested $n$ for a
given $d$, the subgroup lattice's advantage already ranges from roughly
$1.8\times$ to $13.5\times$, the largest being at $d=2048$; at the largest
tested $n$ for $d\ge32$ it remains between $2.0\times$ and $5.5\times$,
again largest at $d=2048,\ n\approx4.1\times10^7$, where the subgroup
lattice's mean relative error of $8.95\times10^{-8}$ is below all four
baselines, including the closest, Halton, at $4.92\times10^{-7}$. The
advantage is not monotone in $d$ at the largest $n$ ($3.4\times$, $2.6\times$,
$2.0\times$, $4.9\times$, $3.4\times$, $4.7\times$, $5.5\times$ for
$d=32,\dots,2048$), but it is at least $2\times$ throughout this range. This is consistent with Theorem~\ref{thm:average}, by the exhaustive search over its small candidate pool ($\varphi(m)=m-1$), the error bound can further reduce at least $O(\frac{1}{m})$. 
Four of the five exceptions
in \Cref{tab:synth-exceptions} sit exactly where this adaptivity has the
least room to work: at $d=8,16,32$ the subgroup order is itself small
($m=17,31,67$ respectively), so the candidate pool $\varphi(m)=m-1$ is
small and the lattice's structural advantage is correspondingly limited,
while the four baselines are already in, or close to, their asymptotic
sampling-density regime at these small dimensions and largest tested $n$.
The fifth exception, at $d=1024$, does not fit this pattern, $m$ is large
there and the lattice wins at every other $n$ tested for that $d$,   and is
a property of this specific point in the sweep rather than of the
subgroup construction generally: its error curve is flat rather than
decreasing between $n_0/d=200$ and $n_0/d=1000$ (mean RLerr
$2.55\times10^{-5}$ then $1.22\times10^{-5}$ then $1.47\times10^{-5}$,
before dropping roughly $8\times$ at the next point), with per-repeat
standard deviations small enough relative to the mean that this is
unlikely to be sampling noise from the $5$ repeats alone. This may due to the deterministic points set construction without randomness as Monte Carlo.  Nevertheless, the subgroup rank-1 lattice is still a near-tie with the best baselines in this case ($1.5\%$ above Halton and $0.5\%$ above Sobol). 

Throughout this paper $m$ is fixed to the \emph{first}
prime such that $m \ge 2d-1$.  We re-ran the single
$(d=1024,\,n_0=1{,}024{,}000)$ configuration with $m$ set instead to the
\emph{second} prime such that $m \ge 2d-1$ ($m=2063$ instead of
$2053$, giving $n=1{,}118{,}147$ instead of $1{,}047{,}031$), together with
a full  exhaustive search over all $m-1=2062$ candidates at
$BN\_search=2000$. The result is a substantially lower subgroup
error, $3.476\times10^{-6}\pm9.1\times10^{-8}$, a $4.2\times$ reduction
from the first-next-prime run's $1.470\times10^{-5}$, which now beats
every baseline at this $(d,n_0)$ by a comfortable margin (Sobol,
the closest baseline, at $1.286\times10^{-5}$: a $3.7\times$ advantage,
versus $8.1\times$ over the worst baseline, Gaussian), turning the earlier
near-loss into a clear win. This single comparison suggests that  a small change of $m$ may mitigate the "not good" lattice  corresponding to the $(d,m,n)$-pair due to the deterministic points set construction.

\subsubsection{Sample set construction time comparison}
\label{sec:exp-construction-time}

The accuracy comparison above holds the number of feature samples $n$
fixed across methods. It does not say what each sample set costs to build.
We therefore time, on the same grid of $54$ configurations
($d\in\{8,\dots,2048\}$, $n_0/d\in\{100,200,1000,2000,10000,20000\}$, the
same $m$, $n$ and cached \texttt{best\_jj} as above), the construction of
each method's complete set of $n$ Gaussianized projection directions in
$\R^d$. No data are involved: nothing is projected, exponentiated or
compared with the kernel, so the timings isolate the cost of the point set
itself.

\paragraph{What each method builds.} For the four baselines the sample set
is a dense $d\times n$ matrix: \texttt{torch.randn}$(d,n)$ for Gaussian;
$n/d$ independent $d\times d$ orthogonal blocks, each obtained from an SVD
and scaled by $\sqrt d$, for ORF; and scrambled Sobol' or Halton points in
$[0,1]^d$ mapped entrywise through $\Phi^{-1}$ for the two QMC baselines.
The subgroup lattice does not need this matrix. Write $n-1=mq$ and let
$c_1,\dots,c_q$ be representatives of the $q$ cosets of $H=\langle
t\rangle$ in $\Fnx$ (\Cref{sec:fast-setting}). Every nonzero lattice point
$l=c_s t^{k}$ has coordinates $X_{l,j}=\{c_s t^{k+j-1}/n\}$,
$j=1,\dots,d$, which is a length-$d$ cyclic window of the single coset
sequence $(\{c_s t^{i}/n\})_{i=0}^{m-1}$; the point $l=0$ maps to
$\Psi(0)=0$. The whole sample set $\Psi(X)$ is therefore determined by the
$m\times q$ array
\begin{equation}
\label{eq:coset-array}
\Delta_{i,s} := \Psi\big((c_s t^{i} \bmod n)/n\big)
= \Phi^{-1}\big(\{c_s t^{i}/n + 0.5\}\big), \qquad
i=0,\dots,m-1,\ \ s=1,\dots,q,
\end{equation}
whose $q$ columns are exactly the coset vectors $\phi^{(1)},\dots,\phi^{(q)}$
consumed by \Cref{alg:fast-Xw} (and, up to reversing the order within each
coset, the vectors $a^{(s)}$ of \Cref{alg:fast-transform}). Building the sample set thus means filling
$mq=n-1$ entries instead of $dn$, with no loss of information: the cyclic
shifts of the columns of $\Delta$ along the $i$ axis recover all $n-1$
nonzero directions. We time exactly this step.

\paragraph{Protocol.} All methods run in \texttt{float64} with PyTorch
2.8.0 on a single NVIDIA H200 NVL GPU (host CPU: AMD EPYC 9655). Each
measurement is one untimed warm-up call followed by $5$ timed repeats,
each bracketed by \texttt{torch.cuda.synchronize()}; we report the mean
and standard deviation over the repeats. Sobol' and Halton points are
generated on the CPU (\texttt{torch.quasirandom.SobolEngine} and
\texttt{scipy.stats.qmc.Halton}, respectively), as in the accuracy
experiments, then copied to the GPU for the $\Phi^{-1}$ map. To bound
memory, every method builds its matrix in chunks of at most $8\times10^8$
entries, discarding each chunk once built. 

\begin{figure}[t]
\centering
\includegraphics[width=0.9\linewidth]{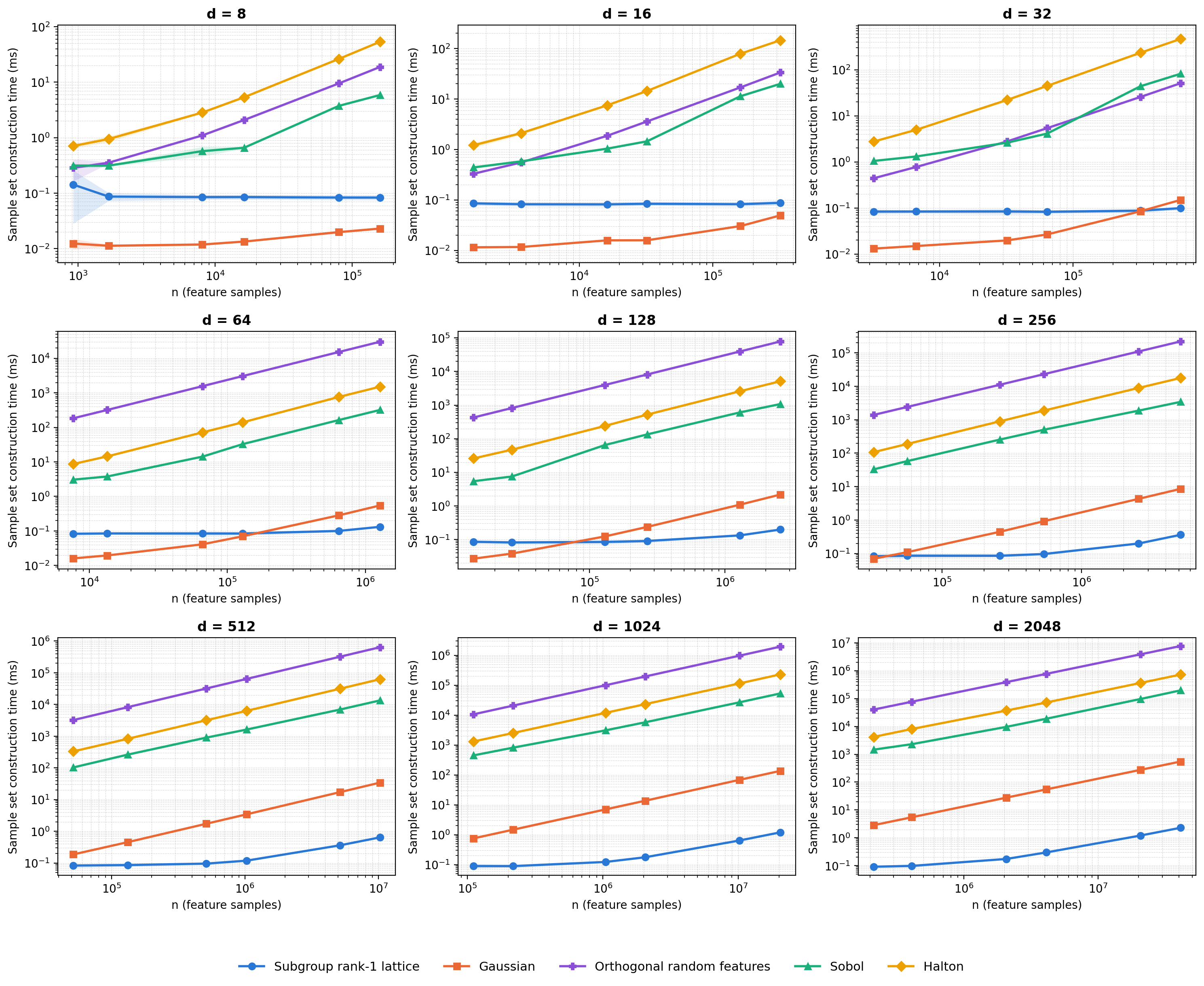}
\caption{Wall-clock time to construct the full sample set of $n$
Gaussianized directions, subgroup rank-1 lattice vs.\ Gaussian / ORF /
Sobol' / Halton, one panel per dimension $d\in\{8,\dots,2048\}$, log-log
axes, shaded bands $\pm1$ standard deviation over $5$ repeats.}
\label{fig:construction-time}
\end{figure}

\begin{table}[t]
\centering
\small
\caption{Sample set construction time at the largest tested $n$ for each
$d$ ($n_0/d=20{,}000$), mean over $5$ repeats. The last column is the
Gaussian time divided by the subgroup time ($>1$ favors the subgroup
lattice).}
\label{tab:construction-time}
\begin{tabular}{@{}rrrrrrrr@{}}
\toprule
$d$ & $n$ & Subgroup & Gaussian & ORF & Sobol' & Halton & Gauss.\,/\,Subgr. \\
\midrule
8    & 160{,}073    & 0.083\,ms & 0.023\,ms & 18.9\,ms & 5.9\,ms  & 53.6\,ms & $0.28\times$ \\
16   & 320{,}107    & 0.088\,ms & 0.049\,ms & 33.5\,ms & 20.0\,ms & 146\,ms  & $0.56\times$ \\
32   & 641{,}057    & 0.099\,ms & 0.150\,ms & 50.9\,ms & 81.8\,ms & 466\,ms  & $1.5\times$ \\
64   & 1{,}280{,}161  & 0.132\,ms & 0.549\,ms & 30.3\,s  & 0.32\,s  & 1.51\,s  & $4.2\times$ \\
128  & 2{,}561{,}263  & 0.198\,ms & 2.18\,ms  & 78.4\,s  & 1.08\,s  & 5.14\,s  & $11\times$ \\
256  & 5{,}130{,}809  & 0.362\,ms & 8.56\,ms  & 220\,s   & 3.43\,s  & 17.7\,s  & $24\times$ \\
512  & 10{,}318{,}249 & 0.646\,ms & 34.4\,ms  & 643\,s   & 13.5\,s  & 63.0\,s  & $53\times$ \\
1024 & 20{,}493{,}047 & 1.20\,ms  & 137\,ms   & 1{,}954\,s & 53.2\,s & 231\,s  & $114\times$ \\
2048 & 41{,}039{,}189 & 2.29\,ms  & 548\,ms   & 7{,}690\,s & 197\,s  & 726\,s  & $239\times$ \\
\bottomrule
\end{tabular}
\end{table}

\paragraph{Headline result.} \Cref{fig:construction-time} and
\Cref{tab:construction-time} show that the subgroup lattice is the fastest
method in every configuration with $d\ge512$, and that its construction
time never exceeds $2.3$\,ms anywhere on the grid. At the largest
configuration ($d=2048$, $n\approx4.1\times10^7$) it builds the sample set
in $2.29$\,ms, against $0.55$\,s for Gaussian ($239\times$ slower),
$197$\,s for Sobol', $726$\,s for Halton, and $7{,}690$\,s ($2.1$ hours)
for ORF. 

\paragraph{Scaling.} The gap follows directly from the entry counts: $n-1$
for the subgroup lattice against $dn$ for every baseline. At fixed $d$,
the log-log slope in $n$ between the two largest tested $n$ is
$0.96$--$1.01$ for Gaussian, ORF and Halton once $d\ge64$, and
$0.83$--$1.04$ for Sobol'. The subgroup lattice instead sits at a floor of
about $0.08$\,ms, which is the fixed cost of launching its few GPU kernels,
up to $n\approx10^5$--$10^6$ (slope $0.0$--$0.2$ for $d\le32$), and
approaches linear growth only at the largest sizes (slope $0.83$--$0.93$
for $d\ge256$). Along the diagonal $n\propto d$ (fitting $n_0/d=1000$ over
$d\in\{256,\dots,2048\}$), Gaussian grows as $d^{2.0}$, matching its
$dn\propto d^2$ entries, Sobol', Halton and ORF as $d^{1.7}$--$d^{1.8}$,
and the subgroup lattice as only $d^{0.34}$, since at these sizes it is
still close to its fixed-cost floor.

\paragraph{Where Gaussian is faster.} Gaussian sampling is a single
\texttt{torch.randn} call and is faster than the subgroup construction
when $dn$ is small: for every $n$ at $d\le16$, for $n\le3.2\times10^5$ at
$d=32$, for $n\le1.3\times10^5$ at $d=64$, for $n\le2.7\times10^4$ at
$d=128$, and only at the smallest $n$ ($3.2\times10^4$) at $d=256$. In all
of these configurations both methods finish in about $0.15$\,ms or less,
so the difference is immaterial in practice. From $d=512$ upward the
subgroup lattice is faster at every $n$, and its lead grows with both $d$
and $n$.

\paragraph{The structured baselines.} ORF is the slowest method for every
$d\ge64$; its cost is dominated by one $d\times d$ SVD per block of $d$
columns. Its time jumps by about $600\times$ between $d=32$ and $d=64$ at
the largest $n$ ($50.9$\,ms to $30.3$\,s), far more than the change in
the number of SVDs. We have not profiled this, but a plausible cause is
that the batched \texttt{float64} SVD in PyTorch uses a fast batched
cuSOLVER path only for matrices up to $32\times32$. Halton and Sobol' are
the next slowest, largely because their points are generated on the CPU
and then copied to the GPU. A GPU-native QMC generator would narrow this
gap, but not close it, since it would still have to produce $dn$ entries.

\paragraph{Noise and scope.} The standard deviation over repeats is below
$7\%$ of the mean for every timing longer than $1$\,s and at most $20\%$
everywhere except the smallest configuration ($d=8$, $n=919$), where the
subgroup ($0.143\pm0.115$\,ms) and ORF ($0.29\pm0.12$\,ms) timings are
dominated by launch overhead at the sub-millisecond floor. These numbers
measure construction only, not the application of the feature map to
data. Construction is a one-time cost that can be reused across data
batches, but for the ORF, Sobol' and Halton baselines at large $d$ and $n$
it runs from minutes to hours per sample set, which is a real cost
whenever $n$ or $d$ changes. All timings come
from a single GPU.

\subsection{Real-data softmax-attention approximation}
\label{sec:exp-realdata}

The second experiment applies the same five feature-map constructions to
\emph{self-normalized softmax attention}
\citep{vaswani2017attention,choromanski2021rethinking} on real,
high-dimensional embedding vectors rather than synthetic uniform data.
For a batch of embedding vectors $\mathbf x_i$ sampled from a dataset and
rescaled by a global norm constant, the ground truth is the exact
row-softmax attention matrix $\mathrm{GT}=\mathrm{softmax}(\boldsymbol{X}\boldsymbol{X}^\top)$; each
method's feature map  is turned into a self-normalized attention-row
estimate by dividing by a feature-based softmax normalizer, matching the
positive-random-feature (FAVOR+-style) construction of
\citet{choromanski2021rethinking}, and the reported error is the mean
squared relative error $\mathrm{mean}\big((|\mathrm{GT}-\widehat{\mathrm{GT}}|
/\min(\mathrm{GT},1-\mathrm{GT}))^2\big)$, normalized to stay well behaved
near the $[0,1]$ boundary of a softmax probability. \Cref{tab:dataset_summary}
lists the nine datasets\footnote{https://github.com/vector-index-bench/vibe} tested, spanning three embedding families (text,
image, and multimodal), dimensions from $128$ to $2048$, and $114$K to
$1.34$M points; for each we test five target ratios $n_0/d\in\{100,200,
300,400,500\}$, for $9\times5=45$ configurations, with the number of
repeats fixed at $n\_seeds=5$ for every configuration.   Evaluation at batch with batchsize 
$BN\_final=5000$  took $\approx1\,\mathrm h\,51\,\mathrm m$ of GPU
time across all nine datasets, dominated by celeba-resnet-2048-cosine
alone ($\approx1\,\mathrm h\,4\,\mathrm m$, its largest $n$ exceeding
$10^6$).  With index \texttt{best\_jj} reused from the  cached
exhaustive search over $m-1$ generator candidate on a held-out batch with batchsize $BN\_search=5000$.

\begin{table}[t]
\centering
\caption{Dataset Summary}
\label{tab:dataset_summary}
\begin{tabular}{lrr}
\toprule
Dataset & Points ($N$) & Dimension ($d$) \\
\midrule
yandex-200-cosine              & 1{,}000{,}000 & 200   \\
laion-clip-512-normalized      & 999{,}448     & 512   \\
arxiv-nomic-768-normalized     & 1{,}344{,}643 & 768   \\
landmark-nomic-768-normalized  & 760{,}757     & 768   \\
imagenet-align-640-normalized  & 315{,}648     & 640   \\
llama-128-ip                   & 255{,}921     & 128   \\
celeba-resnet-2048-cosine      & 201{,}599     & 2{,}048 \\
yi-128-ip                      & 186{,}843     & 128   \\
coco-nomic-768-normalized      & 114{,}054     & 768   \\
\bottomrule
\end{tabular}
\end{table}

\begin{table}[t]
\centering
\small
\caption{Subgroup lattice's advantage over the best of \{Gaussian, ORF,
Sobol, Halton\} on softmax-attention approximation, by dataset, across the
five tested ratios $n_0/d\in\{100,\dots,500\}$.}
\label{tab:realdata-advantage}
\begin{tabular}{@{}lrrrr@{}}
\toprule
Dataset & $d$ & subgroup wins & advantage range & mean advantage \\
\midrule
celeba-resnet-2048-cosine      & 2048 & 5/5 & $1.13$--$1.39\times$ & $1.27\times$ \\
arxiv-nomic-768-normalized     & 768  & 5/5 & $1.57$--$1.84\times$ & $1.70\times$ \\
landmark-nomic-768-normalized  & 768  & 5/5 & $1.95$--$3.37\times$ & $2.83\times$ \\
coco-nomic-768-normalized      & 768  & 5/5 & $1.37$--$1.82\times$ & $1.59\times$ \\
imagenet-align-640-normalized  & 640  & 5/5 & $1.52$--$1.76\times$ & $1.66\times$ \\
laion-clip-512-normalized      & 512  & 5/5 & $1.25$--$1.64\times$ & $1.38\times$ \\
yandex-200-cosine              & 200  & 5/5 & $1.15$--$1.36\times$ & $1.25\times$ \\
llama-128-ip                   & 128  & 5/5 & $1.45$--$2.21\times$ & $1.78\times$ \\
yi-128-ip                      & 128  & 5/5 & $1.43$--$1.83\times$ & $1.60\times$ \\
\bottomrule
\end{tabular}
\end{table}

\begin{figure}[t]
\centering
\includegraphics[width=0.9\linewidth]{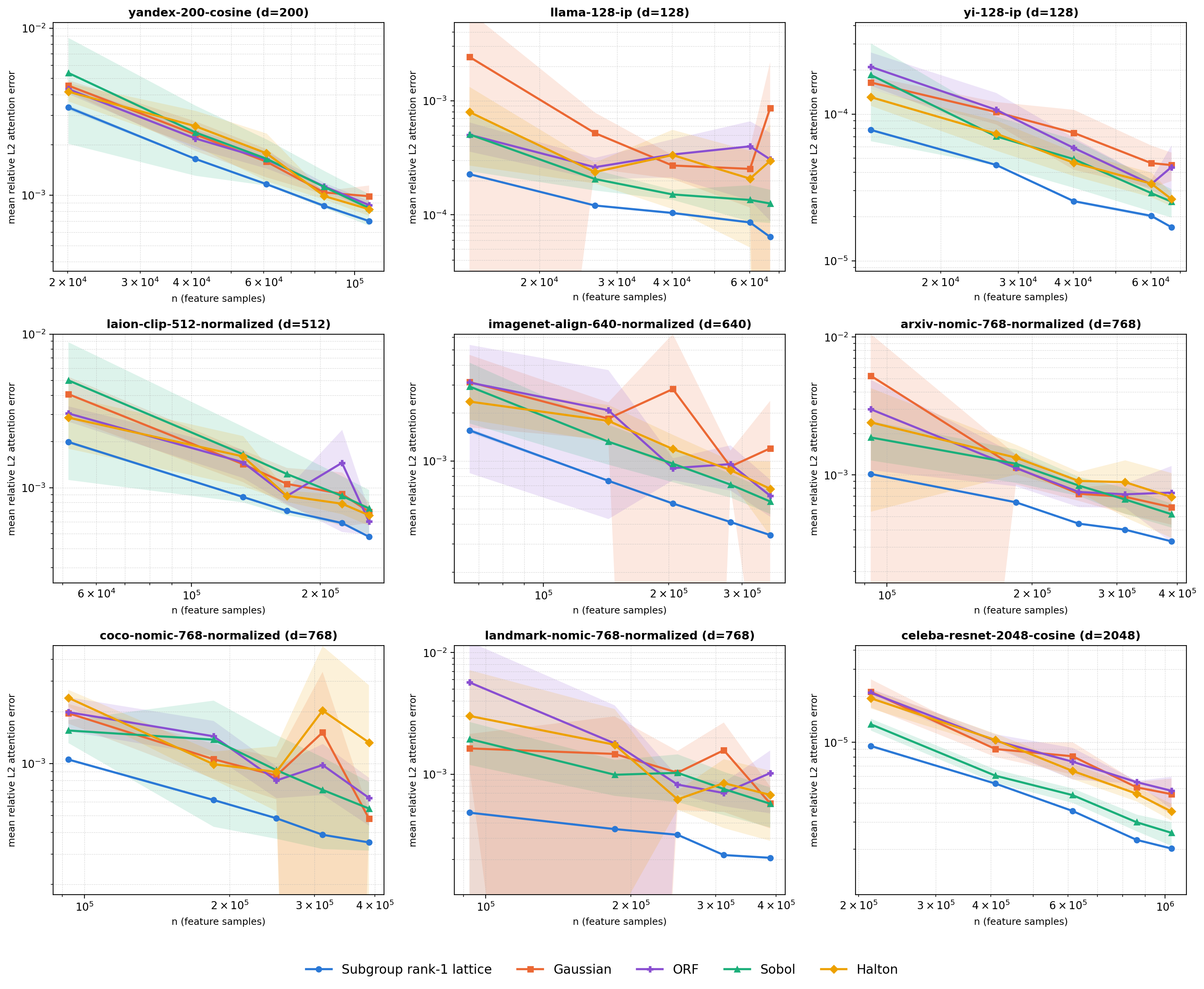}
\caption{Subgroup rank-1 lattice vs.\ Gaussian / ORF / Sobol / Halton:
self-normalized softmax-attention approximation error on all $9$ datasets
of \Cref{tab:dataset_summary}, one panel per dataset, log-log axes,
shaded bands $\pm 1$ standard deviation over repeats.}
\label{fig:dataset-grid}
\end{figure}

\paragraph{Headline result.} The subgroup lattice attains the lowest mean
attention error in all $45$ of $45$ configurations ($100\%$;
\Cref{tab:realdata-advantage}), a stronger result than the $91\%$ win rate
on synthetic data.  Averaged over all
$45$ configurations the advantage ranges from $1.13\times$ to $3.37\times$
(mean $1.67\times$, median $1.57\times$); \Cref{tab:realdata-advantage}
shows no clear monotonic trend of advantage with $d$ over the five ratios range  on different datasets,
landmark-nomic-768-normalized showing the largest mean advantage
($2.83\times$) and yandex-200-cosine the smallest ($1.25\times$, still a
win at every tested $n$).

\paragraph{Runner-up and variance.}  Across all $45$
configurations, which of the four baselines comes closest to the subgroup
lattice: Sobol is closest in $22/45$ ($49\%$), Halton in $10/45$ ($22\%$),
ORF in $8/45$ ($18\%$), and Gaussian in $5/45$ ($11\%$).  Sobol is the
strongest baseline on real data, more consistently so than on the
synthetic benchmark, where Halton was comparably competitive. Gaussian and
Halton show the widest, sometimes non-monotonic, $\pm1$-std bands in
\Cref{fig:dataset-grid}, especially on imagenet-align-640-normalized,
llama-128-ip, and landmark-nomic-768-normalized; the subgroup lattice's
curve is again both the lowest-error and the tightest-banded curve in
every one of the nine panels, echoing \Cref{sec:exp-synthetic}'s finding
that it is more stable, not merely more accurate. ORF's underperformance
is likewise confirmed on this second, independent task: it is the
strongest of the four baselines in only $8/45$ ($18\%$) of configurations,
behind both Sobol and Halton and only modestly ahead of plain Gaussian.

\subsection{Discussion}
\label{sec:exp-discussion}

Taken together, the two experiments show the fast elementwise transform of
\Cref{sec:fast} is not merely an efficient way to compute an existing
feature map, but a route to a \emph{more accurate} one: on both a raw
exponential-kernel estimator and a self-normalized softmax-attention
estimator, on both synthetic and real, high-dimensional embedding data,
selecting a power-form generator from the small, fixed-order candidate
pool of \Cref{def:fixed-order} gives a feature map that is simultaneously
lower-error and lower-variance than Gaussian, ORF, Sobol, or Halton
features at almost every scale tested, with the one systematic exception
(small $d$, very large $n$) plausibly explained by the candidate pool
$\varphi(m)=m-1$ itself being small there. This advantage is not free: it
requires the one-time, cacheable, $O(m)$-candidate exhaustive search
described above, whose cost is dominated by FFT work, a
genuine practical cost, but one paid once per $(d,m,n)$ triple rather than
once per use. 

\section{Conclusion}
\label{sec:conclusion}

We studied subgroup rank-1 lattices: full $n$-point rank-1 lattice rules
whose Korobov power-form generator $\mathbf z(t)=(1,t,\dots,t^{d-1})$ uses
a scalar $t$ of fixed, $n$-independent multiplicative order $m$. The
standard component-by-component construction would never produce such a
generator, because it searches a pool of candidates that grows with $n$.
Our results show that the restriction is nonetheless a useful trade: it
gives up generic freedom in the generating vector and receives algebraic
structure in return, and that structure pays off in three ways.

The first is computational. The elementwise transforms $\Psi(X)^\top v$
and $\Psi(X)w$ that underlie lattice-based feature maps can be computed
exactly in $O(n\log m)$ arithmetic operations, $O(n\log d)$ when
$m=\Theta(d)$, and $O(n)$ memory, for an arbitrary scalar map $\Psi$
(\Cref{prop:fast-fft,prop:fast-Xw}). The baseline for any standard point
set (a Halton or Sobol' sequence, or a rank-1 lattice with a generic
generating vector) is $O(nd)$ time and memory. The saving comes from a
coset decomposition of $\Fnx$ into short cyclic correlations evaluated by
FFT, and it never requires the $n\times d$ matrix $X$ to be stored. In our
measurements the coset-FFT transform matched direct evaluation to
floating-point precision, ran $13\times$ to $35\times$ faster at
$n\approx1.5\times10^6$, and built a complete sample set of
$n\approx4.1\times10^7$ points in $d=2048$ dimensions in $2.3$\,ms, where
the baselines took between $0.55$\,s and $2.1$ hours.

The second is theoretical. Because the candidate pool has only
$\varphi(m)$ members, classical average-case arguments do not apply, so
convergence has to be proved directly. Viewing the aliasing condition as
the vanishing of an integer polynomial at a root of the cyclotomic
polynomial $\Phi_m$ modulo $n$, we showed that for prime $m\ge d+1$ every
admissible generator satisfies
$e^2(Q_n,\mathbf z(t))=O(n^{-(\alpha-1)/(m-1)})$ (\Cref{thm:rate}), and
that the threshold is exact: for $m\le d$ the squared worst-case error is
bounded below by $2$ for every $n$ (\Cref{prop:floor,cor:threshold}).
The complete splitting of $n$ in $\mathbb Q(\zeta_m)$ then shows that a
bad frequency shared by several candidates forces a correspondingly higher
power of $n$ to divide its resultant, which improves the average
worst-case error over the candidate pool by a factor $\Theta(m-1)$
(\Cref{thm:average}).

The third is empirical. Using the best of the $m-1$ admissible generators,
selected once on a held-out batch and cached, the subgroup lattice was the
most accurate of five feature-map constructions in $49$ of $54$ synthetic
$\exp(\mathbf x^\top\mathbf y)$ estimation settings, with an advantage of
up to $13.5\times$ (at $d=2048$), and in all $45$
self-normalized softmax-attention settings on nine real embedding datasets
(mean advantage $1.67\times$), where it also had the narrowest error
bands across repeats on every dataset.

Selecting the generator requires a
one-time search over $m-1$ candidates, which cost up to about five GPU
hours for our largest configuration, although it is paid once per
$(d,m,n)$ and reused afterwards. The construction was least competitive at
small $d$ with very large $n$, where the candidate pool is small, and in
one configuration a poorly suited $(d,m,n)$ triple was fixed simply by
moving to the next admissible prime $m$. Finally, all theory assumes that
both $n$ and $m$ are prime.


\clearpage

\vskip 0.2in
\bibliography{references}

@article{sloan2002component,
  author  = {Sloan, Ian H. and Reztsov, Andrei V.},
  title   = {Component-by-component construction of good lattice rules},
  journal = {Mathematics of Computation},
  volume  = {71},
  number  = {237},
  pages   = {263--274},
  year    = {2002}
}

@article{kuo2003component,
  author  = {Kuo, Frances Y.},
  title   = {Component-by-component constructions achieve the optimal rate of convergence for multivariate integration in weighted {K}orobov and {S}obolev spaces},
  journal = {Journal of Complexity},
  volume  = {19},
  number  = {3},
  pages   = {301--320},
  year    = {2003}
}

@article{nuyens2006fast,
  author  = {Nuyens, Dirk and Cools, Ronald},
  title   = {Fast component-by-component construction of rank-1 lattice rules with a non-prime number of points},
  journal = {Journal of Complexity},
  volume  = {22},
  number  = {1},
  pages   = {4--28},
  year    = {2006}
}

@article{dick2013high,
  author  = {Dick, Josef and Kuo, Frances Y. and Sloan, Ian H.},
  title   = {High-dimensional integration: the quasi-{M}onte {C}arlo way},
  journal = {Acta Numerica},
  volume  = {22},
  pages   = {133--288},
  year    = {2013}
}

@article{korobov1959approximate,
  author  = {Korobov, N. M.},
  title   = {The approximate computation of multiple integrals},
  journal = {Doklady Akademii Nauk SSSR},
  volume  = {124},
  pages   = {1207--1210},
  year    = {1959},
  note    = {(In Russian)}
}

@book{niederreiter1992random,
  author    = {Niederreiter, Harald},
  title     = {Random Number Generation and Quasi-{M}onte {C}arlo Methods},
  series    = {CBMS-NSF Regional Conference Series in Applied Mathematics},
  volume    = {63},
  publisher = {SIAM},
  address   = {Philadelphia, PA},
  year      = {1992}
}

@book{sloanjoe1994lattice,
  author    = {Sloan, Ian H. and Joe, Stephen},
  title     = {Lattice Methods for Multiple Integration},
  publisher = {Oxford University Press},
  address   = {Oxford, UK},
  year      = {1994}
}

@article{halton1960efficiency,
  author  = {Halton, John H.},
  title   = {On the efficiency of certain quasi-random sequences of points in evaluating multi-dimensional integrals},
  journal = {Numerische Mathematik},
  volume  = {2},
  number  = {1},
  pages   = {84--90},
  year    = {1960}
}

@article{sobol1967distribution,
  author  = {Sobol', Ilya M.},
  title   = {On the distribution of points in a cube and the approximate evaluation of integrals},
  journal = {USSR Computational Mathematics and Mathematical Physics},
  volume  = {7},
  number  = {4},
  pages   = {86--112},
  year    = {1967}
}

@book{irelandrosen1990classical,
  author    = {Ireland, Kenneth and Rosen, Michael},
  title     = {A Classical Introduction to Modern Number Theory},
  edition   = {2nd},
  series    = {Graduate Texts in Mathematics},
  volume    = {84},
  publisher = {Springer-Verlag},
  address   = {New York},
  year      = {1990}
}

@article{avron2016quasi,
  author  = {Avron, Haim and Sindhwani, Vikas and Yang, Jiyan and Mahoney, Michael W.},
  title   = {Quasi-{M}onte {C}arlo Feature Maps for Shift-Invariant Kernels},
  journal = {Journal of Machine Learning Research},
  volume  = {17},
  number  = {120},
  pages   = {1--38},
  year    = {2016}
}

@book{washington1997introduction,
  author    = {Washington, Lawrence C.},
  title     = {Introduction to Cyclotomic Fields},
  edition   = {2nd},
  series    = {Graduate Texts in Mathematics},
  volume    = {83},
  publisher = {Springer-Verlag},
  address   = {New York},
  year      = {1997}
}

@article{cooley1965algorithm,
  author  = {Cooley, James W. and Tukey, John W.},
  title   = {An algorithm for the machine calculation of complex {F}ourier series},
  journal = {Mathematics of Computation},
  volume  = {19},
  number  = {90},
  pages   = {297--301},
  year    = {1965}
}

@article{roberts1996exponential,
  author  = {Roberts, Gareth O. and Tweedie, Richard L.},
  title   = {Exponential convergence of {L}angevin distributions and their discrete approximations},
  journal = {Bernoulli},
  volume  = {2},
  number  = {4},
  pages   = {341--363},
  year    = {1996}
}

@inproceedings{welling2011bayesian,
  author    = {Welling, Max and Teh, Yee Whye},
  title     = {Bayesian Learning via Stochastic Gradient {L}angevin Dynamics},
  booktitle = {Proceedings of the 28th International Conference on Machine Learning (ICML-11)},
  pages     = {681--688},
  year      = {2011}
}

@book{robert2004monte,
  author    = {Robert, Christian P. and Casella, George},
  title     = {Monte Carlo Statistical Methods},
  edition   = {2nd},
  series    = {Springer Texts in Statistics},
  publisher = {Springer-Verlag},
  address   = {New York},
  year      = {2004}
}

@article{hickernell1998generalized,
  author  = {Hickernell, Fred J.},
  title   = {A Generalized Discrepancy and Quadrature Error Bound},
  journal = {Mathematics of Computation},
  volume  = {67},
  number  = {221},
  pages   = {299--322},
  year    = {1998}
}

@book{niederreiterxing2001rational,
  author    = {Niederreiter, Harald and Xing, Chaoping},
  title     = {Rational Points on Curves over Finite Fields: Theory and Applications},
  series    = {London Mathematical Society Lecture Note Series},
  number    = {285},
  publisher = {Cambridge University Press},
  address   = {Cambridge, UK},
  year      = {2001}
}

@book{dickpillichshammer2010digital,
  author    = {Dick, Josef and Pillichshammer, Friedrich},
  title     = {Digital Nets and Sequences: Discrepancy Theory and Quasi-{M}onte {C}arlo Integration},
  publisher = {Cambridge University Press},
  address   = {Cambridge, UK},
  year      = {2010}
}

@incollection{lecuyerlemieux2002recent,
  author    = {L'Ecuyer, Pierre and Lemieux, Christiane},
  title     = {Recent Advances in Randomized Quasi-{M}onte {C}arlo Methods},
  booktitle = {Modeling Uncertainty: An Examination of Stochastic Theory, Methods, and Applications},
  editor    = {Dror, Moshe and L'Ecuyer, Pierre and Szidarovszky, Ferenc},
  series    = {International Series in Operations Research \& Management Science},
  volume    = {46},
  publisher = {Kluwer Academic Publishers},
  pages     = {419--474},
  year      = {2002}
}

@article{joekuo2008constructing,
  author  = {Joe, Stephen and Kuo, Frances Y.},
  title   = {Constructing {S}obol Sequences with Better Two-Dimensional Projections},
  journal = {SIAM Journal on Scientific Computing},
  volume  = {30},
  number  = {5},
  pages   = {2635--2654},
  year    = {2008}
}

@article{joekuo2003remark,
  author  = {Joe, Stephen and Kuo, Frances Y.},
  title   = {Remark on {A}lgorithm 659: Implementing {S}obol's Quasirandom Sequence Generator},
  journal = {ACM Transactions on Mathematical Software},
  volume  = {29},
  number  = {1},
  pages   = {49--57},
  year    = {2003}
}

@article{coolskuonuyens2006constructing,
  author  = {Cools, Ronald and Kuo, Frances Y. and Nuyens, Dirk},
  title   = {Constructing Embedded Lattice Rules for Multivariate Integration},
  journal = {SIAM Journal on Scientific Computing},
  volume  = {28},
  number  = {6},
  pages   = {2162--2188},
  year    = {2006}
}

@article{dickkuopillichshammersloan2005construction,
  author  = {Dick, Josef and Kuo, Frances Y. and Pillichshammer, Friedrich and Sloan, Ian H.},
  title   = {Construction Algorithms for Polynomial Lattice Rules for Multivariate Integration},
  journal = {Mathematics of Computation},
  volume  = {74},
  number  = {252},
  pages   = {1895--1921},
  year    = {2005}
}

@article{nuyenscools2006fastrkhs,
  author  = {Nuyens, Dirk and Cools, Ronald},
  title   = {Fast Algorithms for Component-by-Component Construction of Rank-1 Lattice Rules in Shift-Invariant Reproducing Kernel Hilbert Spaces},
  journal = {Mathematics of Computation},
  volume  = {75},
  number  = {254},
  pages   = {903--920},
  year    = {2006}
}

@article{sloanwozniakowski1998when,
  author  = {Sloan, Ian H. and Wo{\'z}niakowski, Henryk},
  title   = {When Are Quasi-{M}onte {C}arlo Algorithms Efficient for High Dimensional Integrals?},
  journal = {Journal of Complexity},
  volume  = {14},
  number  = {1},
  pages   = {1--33},
  year    = {1998}
}

@article{niederreiter1987point,
  author  = {Niederreiter, Harald},
  title   = {Point Sets and Sequences with Small Discrepancy},
  journal = {Monatshefte f{\"u}r Mathematik},
  volume  = {104},
  number  = {4},
  pages   = {273--337},
  year    = {1987}
}

@article{faure1982discrepance,
  author  = {Faure, Henri},
  title   = {Discr{\'e}pance de suites associ{\'e}es {\`a} un syst{\`e}me de num{\'e}ration (en dimension s)},
  journal = {Acta Arithmetica},
  volume  = {41},
  number  = {4},
  pages   = {337--351},
  year    = {1982}
}

@incollection{owen1995randomly,
  author    = {Owen, Art B.},
  title     = {Randomly Permuted (t,m,s)-Nets and (t,s)-Sequences},
  booktitle = {Monte Carlo and Quasi-Monte Carlo Methods in Scientific Computing},
  editor    = {Niederreiter, Harald and Shiue, Peter Jau-Shyong},
  series    = {Lecture Notes in Statistics},
  volume    = {106},
  publisher = {Springer},
  address   = {New York, NY},
  pages     = {299--317},
  year      = {1995}
}

@article{owen1997scrambled,
  author  = {Owen, Art B.},
  title   = {Scrambled Net Variance for Integrals of Smooth Functions},
  journal = {The Annals of Statistics},
  volume  = {25},
  number  = {4},
  pages   = {1541--1562},
  year    = {1997}
}

@article{matousek1998l2discrepancy,
  author  = {Matou{\v{s}}ek, Ji{\v{r}}{\'\i}},
  title   = {On the {$L_2$}-Discrepancy for Anchored Boxes},
  journal = {Journal of Complexity},
  volume  = {14},
  number  = {4},
  pages   = {527--556},
  year    = {1998}
}

@article{dick2008walsh,
  author  = {Dick, Josef},
  title   = {Walsh Spaces Containing Smooth Functions and Quasi-{M}onte {C}arlo Rules of Arbitrary High Order},
  journal = {SIAM Journal on Numerical Analysis},
  volume  = {46},
  number  = {3},
  pages   = {1519--1553},
  year    = {2008}
}

@article{weyl1916gleichverteilung,
  author  = {Weyl, Hermann},
  title   = {{\"U}ber die Gleichverteilung von Zahlen mod. Eins},
  journal = {Mathematische Annalen},
  volume  = {77},
  number  = {3},
  pages   = {313--352},
  year    = {1916}
}

@article{roth1954irregularities,
  author  = {Roth, Klaus F.},
  title   = {On Irregularities of Distribution},
  journal = {Mathematika},
  volume  = {1},
  number  = {2},
  pages   = {73--79},
  year    = {1954}
}

@book{matousek1999geometric,
  author    = {Matou{\v{s}}ek, Ji{\v{r}}{\'\i}},
  title     = {Geometric Discrepancy: An Illustrated Guide},
  series    = {Algorithms and Combinatorics},
  volume    = {18},
  publisher = {Springer},
  address   = {Berlin, Heidelberg},
  year      = {1999}
}

@article{dick2011higherorder,
  author  = {Dick, Josef},
  title   = {Higher Order Scrambled Digital Nets Achieve the Optimal Rate of the Root Mean Square Error for Smooth Integrands},
  journal = {The Annals of Statistics},
  volume  = {39},
  number  = {3},
  pages   = {1372--1398},
  year    = {2011}
}

@article{dick2016discrepancy,
  author  = {Dick, Josef and Rudolf, Daniel and Zhu, Houying},
  title   = {Discrepancy Bounds for Uniformly Ergodic {M}arkov Chain Quasi-{M}onte {C}arlo},
  journal = {Annals of Applied Probability},
  volume  = {26},
  number  = {5},
  pages   = {3178--3205},
  year    = {2016}
}

@article{buchholz2019improving,
  author  = {Buchholz, Alexander and Chopin, Nicolas},
  title   = {Improving Approximate {B}ayesian Computation via Quasi-{M}onte {C}arlo},
  journal = {Journal of Computational and Graphical Statistics},
  volume  = {28},
  number  = {1},
  pages   = {205--219},
  year    = {2019}
}

@article{gerber2015sequential,
  author  = {Gerber, Mathieu and Chopin, Nicolas},
  title   = {Sequential Quasi {M}onte {C}arlo},
  journal = {Journal of the Royal Statistical Society: Series B (Statistical Methodology)},
  volume  = {77},
  number  = {3},
  pages   = {509--579},
  year    = {2015}
}

@book{leobacher2014introduction,
  author    = {Leobacher, Gunther and Pillichshammer, Friedrich},
  title     = {Introduction to Quasi-{M}onte {C}arlo Integration and Applications},
  series    = {Compact Textbooks in Mathematics},
  publisher = {Birkh{\"a}user},
  address   = {Cham, Switzerland},
  year      = {2014}
}

@article{paskov1995faster,
  author  = {Paskov, Spassimir H. and Traub, Joseph F.},
  title   = {Faster Valuation of Financial Derivatives},
  journal = {The Journal of Portfolio Management},
  volume  = {22},
  number  = {1},
  pages   = {113--120},
  year    = {1995}
}

@article{caflisch1997valuation,
  author  = {Caflisch, Russel E. and Morokoff, William and Owen, Art B.},
  title   = {Valuation of Mortgage-Backed Securities Using {B}rownian Bridges to Reduce Effective Dimension},
  journal = {Journal of Computational Finance},
  volume  = {1},
  number  = {1},
  pages   = {27--46},
  year    = {1997}
}

@article{kuo2017multilevel,
  author  = {Kuo, Frances Y. and Scheichl, Robert and Schwab, Christoph and Sloan, Ian H. and Ullmann, Elisabeth},
  title   = {Multilevel Quasi-{M}onte {C}arlo Methods for Lognormal Diffusion Problems},
  journal = {Mathematics of Computation},
  volume  = {86},
  number  = {308},
  pages   = {2827--2860},
  year    = {2017}
}

@inproceedings{lecuyer2004quasi,
  author    = {L'Ecuyer, Pierre},
  title     = {Quasi-{M}onte {C}arlo Methods in Finance},
  booktitle = {Proceedings of the 2004 Winter Simulation Conference},
  pages     = {1645--1655},
  publisher = {IEEE},
  year      = {2004}
}

@inproceedings{yang2014random,
  author    = {Yang, Jiyan and Sindhwani, Vikas and Fan, Quanfu and Avron, Haim and Mahoney, Michael W.},
  title     = {Random {L}aplace Feature Maps for Semigroup Kernels on Histograms},
  booktitle = {Proceedings of the IEEE Conference on Computer Vision and Pattern Recognition (CVPR)},
  pages     = {971--978},
  year      = {2014}
}

@article{owen2005quasi,
  author  = {Owen, Art B. and Tribble, Seth D.},
  title   = {A Quasi-{M}onte {C}arlo {M}etropolis Algorithm},
  journal = {Proceedings of the National Academy of Sciences},
  volume  = {102},
  number  = {25},
  pages   = {8844--8849},
  year    = {2005}
}

@article{kuo2012qmcfem,
  author  = {Kuo, Frances Y. and Schwab, Christoph and Sloan, Ian H.},
  title   = {Quasi-{M}onte {C}arlo Finite Element Methods for a Class of Elliptic Partial Differential Equations with Random Coefficients},
  journal = {SIAM Journal on Numerical Analysis},
  volume  = {50},
  number  = {6},
  pages   = {3351--3374},
  year    = {2012}
}

@article{graham2011qmcpde,
  author  = {Graham, Ivan G. and Kuo, Frances Y. and Nuyens, Dirk and Scheichl, Robert and Sloan, Ian H.},
  title   = {Quasi-{M}onte {C}arlo Methods for Elliptic {PDE}s with Random Coefficients and Applications},
  journal = {Journal of Computational Physics},
  volume  = {230},
  number  = {10},
  pages   = {3668--3694},
  year    = {2011}
}

@article{gilbert2024eigenvalue,
  author  = {Gilbert, Alexander D. and Scheichl, Robert},
  title   = {Multilevel Quasi-{M}onte {C}arlo for Random Elliptic Eigenvalue Problems {I}: Regularity and Error Analysis},
  journal = {IMA Journal of Numerical Analysis},
  volume  = {44},
  number  = {1},
  pages   = {466--503},
  year    = {2024}
}

@article{kuo2016survey,
  author  = {Kuo, Frances Y. and Nuyens, Dirk},
  title   = {Application of Quasi-{M}onte {C}arlo Methods to Elliptic {PDE}s with Random Diffusion Coefficients: A Survey of Analysis and Implementation},
  journal = {Foundations of Computational Mathematics},
  volume  = {16},
  number  = {6},
  pages   = {1631--1696},
  year    = {2016}
}

@article{hickernell2000integration,
  author  = {Hickernell, Fred J. and Wo{\'z}niakowski, Henryk},
  title   = {Integration and Approximation in Arbitrary Dimensions},
  journal = {Advances in Computational Mathematics},
  volume  = {12},
  number  = {1},
  pages   = {25--58},
  year    = {2000}
}

@book{novak2008tractability,
  author    = {Novak, Erich and Wo{\'z}niakowski, Henryk},
  title     = {Tractability of Multivariate Problems, Volume {I}: Linear Information},
  series    = {EMS Tracts in Mathematics},
  volume    = {6},
  publisher = {European Mathematical Society},
  address   = {Z{\"u}rich, Switzerland},
  year      = {2008}
}

@article{sloan2001tractability,
  author  = {Sloan, Ian H. and Wo{\'z}niakowski, Henryk},
  title   = {Tractability of Multivariate Integration for Weighted {K}orobov Classes},
  journal = {Journal of Complexity},
  volume  = {17},
  number  = {4},
  pages   = {697--721},
  year    = {2001}
}

@article{kuo2010decompositions,
  author  = {Kuo, Frances Y. and Sloan, Ian H. and Wasilkowski, Grzegorz W. and Wo{\'z}niakowski, Henryk},
  title   = {On Decompositions of Multivariate Functions},
  journal = {Mathematics of Computation},
  volume  = {79},
  number  = {270},
  pages   = {953--966},
  year    = {2010}
}

@article{kuo2011anziam,
  author  = {Kuo, Frances Y. and Schwab, Christoph and Sloan, Ian H.},
  title   = {Quasi-{M}onte {C}arlo Methods for High-Dimensional Integration: The Standard (Weighted {H}ilbert Space) Setting and Beyond},
  journal = {The ANZIAM Journal},
  volume  = {53},
  number  = {1},
  pages   = {1--37},
  year    = {2011}
}

@article{dick2014nonperiodic,
  author  = {Dick, Josef and Nuyens, Dirk and Pillichshammer, Friedrich},
  title   = {Lattice Rules for Nonperiodic Smooth Integrands},
  journal = {Numerische Mathematik},
  volume  = {126},
  number  = {2},
  pages   = {259--291},
  year    = {2014}
}

@article{kammerer2015approximation,
  author  = {K{\"a}mmerer, Lutz and Potts, Daniel and Volkmer, Toni},
  title   = {Approximation of Multivariate Periodic Functions by Trigonometric Polynomials Based on Rank-1 Lattice Sampling},
  journal = {Journal of Complexity},
  volume  = {31},
  number  = {4},
  pages   = {543--576},
  year    = {2015}
}

@article{kammerer2015korobov,
  author  = {K{\"a}mmerer, Lutz and Potts, Daniel and Volkmer, Toni},
  title   = {Approximation of Multivariate Periodic Functions by Trigonometric Polynomials Based on Sampling Along Rank-1 Lattice with Generating Vector of {K}orobov Form},
  journal = {Journal of Complexity},
  volume  = {31},
  number  = {3},
  pages   = {424--456},
  year    = {2015}
}

@phdthesis{kammerer2014high,
  author = {K{\"a}mmerer, Lutz},
  title  = {High Dimensional Fast Fourier Transform Based on Rank-1 Lattice Sampling},
  school = {Technische Universit{\"a}t Chemnitz},
  year   = {2014}
}

@article{kammerer2021highdim,
  author  = {K{\"a}mmerer, Lutz and Potts, Daniel and Volkmer, Toni},
  title   = {High-Dimensional Sparse {FFT} Based on Sampling Along Multiple Rank-1 Lattices},
  journal = {Applied and Computational Harmonic Analysis},
  volume  = {51},
  pages   = {225--257},
  year    = {2021}
}

@article{cools2021fast,
  author  = {Cools, Ronald and Kuo, Frances Y. and Nuyens, Dirk and Sloan, Ian H.},
  title   = {Fast Component-by-Component Construction of Lattice Algorithms for Multivariate Approximation with {POD} and {SPOD} Weights},
  journal = {Mathematics of Computation},
  volume  = {90},
  pages   = {787--812},
  year    = {2021}
}

@article{kuo2009lattice,
  author  = {Kuo, Frances Y. and Wasilkowski, Grzegorz W. and Wo{\'z}niakowski, Henryk},
  title   = {Lattice Algorithms for Multivariate {$L_\infty$} Approximation in the Worst Case Setting},
  journal = {Constructive Approximation},
  volume  = {30},
  pages   = {475--493},
  year    = {2009}
}

@book{cools2016monte,
  editor    = {Cools, Ronald and Nuyens, Dirk},
  title     = {Monte Carlo and Quasi-Monte Carlo Methods: {MCQMC}, Leuven, Belgium, April 2014},
  series    = {Springer Proceedings in Mathematics \& Statistics},
  volume    = {163},
  publisher = {Springer},
  year      = {2016}
}

@article{lecuyer2016latticebuilder,
  author    = {L'Ecuyer, Pierre and Munger, David},
  title     = {Algorithm 958: {L}attice{B}uilder: A General Software Tool for Constructing Rank-1 Lattice Rules},
  journal   = {ACM Transactions on Mathematical Software},
  volume    = {42},
  number    = {2},
  articleno = {15},
  year      = {2016}
}

@inproceedings{lyu2020subgroup,
  author    = {Lyu, Yueming and Yuan, Yuan and Tsang, Ivor W.},
  title     = {Subgroup-based Rank-1 Lattice Quasi-{M}onte {C}arlo},
  booktitle = {Advances in Neural Information Processing Systems 33 (NeurIPS 2020)},
  year      = {2020}
}

@article{apostol1970resultants,
  author  = {Apostol, Tom M.},
  title   = {Resultants of Cyclotomic Polynomials},
  journal = {Proceedings of the American Mathematical Society},
  volume  = {24},
  number  = {3},
  pages   = {457--462},
  year    = {1970}
}

@inproceedings{yu2016orthogonal,
  author    = {Yu, Felix X. and Suresh, Ananda Theertha and Choromanski, Krzysztof M. and Holtmann-Rice, Daniel N. and Kumar, Sanjiv},
  title     = {Orthogonal Random Features},
  booktitle = {Advances in Neural Information Processing Systems 29 (NeurIPS 2016)},
  pages     = {1975--1983},
  year      = {2016}
}

@inproceedings{choromanski2021rethinking,
  author    = {Choromanski, Krzysztof and Likhosherstov, Valerii and Dohan, David and Song, Xingyou and Gane, Andreea and Sarlos, Tamas and Hawkins, Peter and Davis, Jared and Mohiuddin, Afroz and Kaiser, Lukasz and Belanger, David and Colwell, Lucy and Weller, Adrian},
  title     = {Rethinking Attention with {P}erformers},
  booktitle = {International Conference on Learning Representations (ICLR)},
  year      = {2021}
}

@inproceedings{vaswani2017attention,
  author    = {Vaswani, Ashish and Shazeer, Noam and Parmar, Niki and Uszkoreit, Jakob and Jones, Llion and Gomez, Aidan N. and Kaiser, {\L}ukasz and Polosukhin, Illia},
  title     = {Attention Is All You Need},
  booktitle = {Advances in Neural Information Processing Systems 30 (NeurIPS 2017)},
  year      = {2017}
}

@inproceedings{rahimi2007random,
  author    = {Rahimi, Ali and Recht, Benjamin},
  title     = {Random Features for Large-Scale Kernel Machines},
  booktitle = {Advances in Neural Information Processing Systems 20 (NIPS 2007)},
  pages     = {1177--1184},
  year      = {2007}
}

@inproceedings{le2013fastfood,
  author    = {Le, Quoc and Sarl{\'o}s, Tam{\'a}s and Smola, Alexander},
  title     = {Fastfood -- Approximating Kernel Expansions in Loglinear Time},
  booktitle = {Proceedings of the 30th International Conference on Machine Learning (ICML)},
  year      = {2013}
}

\end{document}